\theoremstyle{plain}
\newtheorem{theorem}{Theorem}[section]
\newtheorem{lemma}[theorem]{Lemma}
\newtheorem{corollary}[theorem]{Corollary}
\theoremstyle{definition}
\newtheorem{definition}[theorem]{Definition}
\newtheorem{assumption}[theorem]{Assumption}
\theoremstyle{remark}
\newtheorem{remark}[theorem]{Remark}
\DeclareMathOperator*{\argmin}{arg\,min}
\newenvironment{breakablealgorithm}
  {
   \begin{center}
     \refstepcounter{algorithm}
     \hrule height.8pt depth0pt \kern2pt
     \renewcommand{\caption}[2][\relax]{
       {\raggedright\textbf{\ALG@name~\thealgorithm} ##2\par}%
       \ifx\relax##1\relax 
         \addcontentsline{loa}{algorithm}{\protect\numberline{\thealgorithm}##2}%
       \else 
         \addcontentsline{loa}{algorithm}{\protect\numberline{\thealgorithm}##1}%
       \fi
       \kern2pt\hrule\kern2pt
     }
  }{
     \kern2pt\hrule\relax
   \end{center}
  }
\icmltitlerunning{Editable Concept Bottleneck Models}
\begin{document}

\twocolumn[
\icmltitle{Editable Concept Bottleneck Models}



\icmlsetsymbol{equal}{*}

\begin{icmlauthorlist}
\icmlauthor{Lijie Hu}{equal,1,2}
\icmlauthor{Chenyang Ren}{equal,1,2,3}
\icmlauthor{Zhengyu Hu}{equal,4}
\icmlauthor{Hongbin Lin}{4}\\
\icmlauthor{Cheng-Long Wang}{1,2}
\icmlauthor{Zhen Tan}{7}
\icmlauthor{Weimin Lyu}{8}
\icmlauthor{Jingfeng Zhang}{5,6}
\icmlauthor{Hui Xiong}{4}
\icmlauthor{Di Wang}{1,2}
\end{icmlauthorlist}

\icmlaffiliation{1}{Provable Responsible AI and Data Analytics (PRADA) Lab}
\icmlaffiliation{2}{King Abdullah University of Science and Technology}
\icmlaffiliation{3}{Shanghai Jiao Tong University}
\icmlaffiliation{4}{HKUST(gz)}
\icmlaffiliation{5}{The University of Auckland}
\icmlaffiliation{6}{RIKEN Center for Advanced Intelligence Project (AIP)}
\icmlaffiliation{7}{Arizona State University}
\icmlaffiliation{8}{Stony Brook University}

\icmlcorrespondingauthor{Di Wang}{di.wang@kaust.edu.sa}

\icmlkeywords{Machine Learning, ICML}

\vskip 0.3in
]



\printAffiliationsAndNotice{\icmlEqualContribution} 

\begin{abstract}
Concept Bottleneck Models (CBMs) have garnered much attention for their ability to elucidate the prediction process through a human-understandable concept layer. However, most previous studies focused on cases where the data, including concepts, are clean. In many scenarios, we often need to remove/insert some training data or new concepts from trained CBMs for reasons such as privacy concerns, data mislabelling, spurious concepts, and concept annotation errors. Thus, deriving efficient editable CBMs without retraining from scratch remains a challenge, particularly in large-scale applications. To address these challenges, we propose Editable Concept Bottleneck Models (ECBMs). Specifically, ECBMs support three different levels of data removal: concept-label-level, concept-level, and data-level. ECBMs enjoy mathematically rigorous closed-form approximations derived from influence functions that obviate the need for retraining. Experimental results demonstrate the efficiency and adaptability of our ECBMs, affirming their practical value in CBMs.
\end{abstract}

\vspace{-0.2in}
\section{Introduction}
\label{sec:intro}
Modern deep learning models, such as large language models~\citep{zhao2023survey,yang2024moral,yang2024human,xu2023llm,yang2024dialectical} and large multimodal~\citep{yin2023survey,ali2024prompt,cheng2024multi}, often exhibit intricate non-linear architectures, posing challenges for end-users seeking to comprehend and trust their decisions. This lack of interpretability presents a significant barrier to adoption, particularly in critical domains such as healthcare~\citep{ahmad2018interpretable,yu2018artificial} and finance \citep{cao2022ai}, where transparency is paramount. To address this demand, explainable artificial intelligence (XAI) models~\citep{das2020opportunities,hu2023seat,hu2023improving} have emerged, offering explanations for their behavior and insights into their internal mechanisms. Among these, Concept Bottleneck Models (CBMs) \citep{koh2020concept} have gained prominence for explaining the prediction process of end-to-end AI models. CBMs add a bottleneck layer for placing human-understandable concepts. In the prediction process, CBMs first predict the concept labels using the original input and then predict the final classification label using the predicted concept in the bottleneck layer, which provides a self-explained decision to users. 


\begin{figure}[ht]
\centering
\vspace{-6pt}
\includegraphics[width=\linewidth]{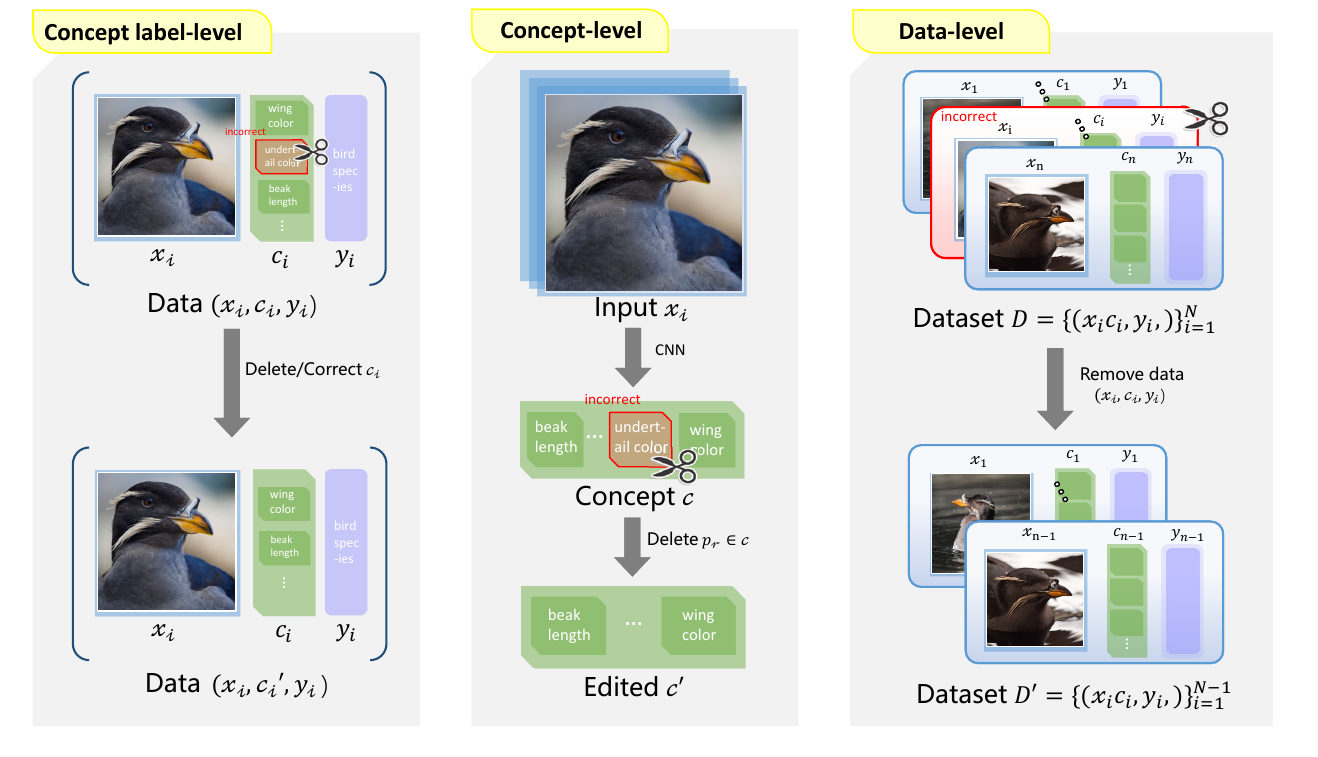}
\vspace{-20pt}
\caption{An illustration of Editable Concept Bottleneck Models with three settings. \label{fig:intro}}
\vspace{-10pt}
\end{figure}
Existing research on CBMs predominantly addresses two primary concerns: Firstly, CBMs heavily rely on laborious dataset annotation. Researchers have explored solutions to these challenges in unlabeled settings~\citep{oikarinen2023label,yuksekgonul2022post,lai2023faithful}. Secondly, the performance of CBMs often lags behind that of original models lacking the concept bottleneck layer, attributed to incomplete information extraction from original data to bottleneck features. Researchers aim to bridge this utility gap~\citep{NEURIPS2023_555479a2, yuksekgonul2022post, NEURIPS2022_867c0682}. However, few of them considered the adaptivity or editability of CBMs, crucial aspects encompassing annotation errors, data privacy considerations, or concept updates. Actually, these demands are increasingly pertinent in the era of large models. We delineate the editable setting into three key aspects (illustrated in Figure \ref{fig:intro}):
\begin{itemize}
    \item \emph{Concept-label-level:} 
    In most scenarios, concept labels 
    are annotated by humans or experts. Thus, it is unavoidable that there are some annotation errors, indicating that there is a need to correct some concept labels in a trained CBM. 
    \item \emph{Concept-level:} 
    In CBMs, the concept set is pre-defined by LLMs or experts. However, in many cases, evolving situations demand concept updates, as evidenced by discoveries such as chronic obstructive pulmonary disease as a risk factor for lung cancer, and doctors have the requirements to add related concepts. For another example, recent research found a new factor, obesity~\citep{sattar2020obesity} are risky for severe COVID-19 and factors (e.g., older age, male gender, Asian race) are risk associated with COVID-19 infection~\citep{rozenfeld2020model}. 
    On the other hand, one may also want to remove some spurious or unrelated concepts for the task. This demand is even more urgent in some rapidly evolving domains like the pandemic. 
    \item \emph{Data-level:}
    Data issues can arise in CBMs when training data is erroneous or poisoned. For example, if a doctor identifies a case as erroneous or poisoned, this data sample becomes unsuitable for training. Therefore, it is essential to have the capability to completely delete such data from the learned models. We need such an editable model that can interact effectively with doctors.
\end{itemize}
The most direct way to address the above three problems is retraining from scratch on the data after correction. However, retraining models in such cases prove prohibitively expensive, especially in large models, which is resource-intensive and time-consuming. Therefore, developing an efficient method to approximate prediction changes becomes paramount. Providing users with an adaptive and editable CBM is both crucial and urgent.

We propose Editable Concept Bottleneck Models (ECBMs) to tackle these challenges. Specifically, compared to retraining, ECBMs provide a mathematically rigorous closed-form approximation for the above three settings to address editability within CBMs efficiently. Leveraging the influence function~\citep{cook2000detection,cook1980characterizations}, we quantify the impact of individual data points, individual concept labels, and the concept for all data on model parameters. 
Despite the growing attention and utility of influence functions in machine learning~\citep{koh2017understanding}, their application in CBMs remains largely unexplored due to their composite structure, i.e., the intermediate representation layer.

To the best of our knowledge, we are the first to work to fill this gap by demonstrating the effectiveness of influence functions in elucidating the behavior of CBMs, especially in identifying mislabeled data and discerning the data influence. Comprehensive experiments on benchmark datasets show that our ECBMs are efficient and effective. Our contributions are summarized as follows.
\begin{itemize}
    \item We delineate three different settings that need various levels of data or concept removal in CBMs: concept-label-level, concept-level, and data-level. To the best of our knowledge, our research marks the first exploration of data removal issues within CBMs.
    \item To make CBMs able to remove data or concept influence without retraining, 
    we propose the Editable Concept Bottleneck Models (ECBMs).  Our approach in ECBMs offers a mathematically rigorous closed-form approximation. Furthermore, to improve computational efficiency, we present streamlined versions integrating Eigenvalue-corrected Kronecker-Factored Approximate Curvature (EK-FAC). 
    \item To showcase the effectiveness and efficiency of our ECBMs, we conduct comprehensive experiments across various benchmark datasets to demonstrate our superior performance.
\end{itemize}
\vspace{-0.2in}
\section{Related Work}\label{sec:related work}
\noindent{\bf Concept Bottleneck Models.}
CBM~\citep{koh2020concept} stands out as an innovative deep-learning approach for image classification and visual reasoning. It introduces a concept bottleneck layer into deep neural networks, enhancing model generalization and interpretability by learning specific concepts. However, CBM faces two primary challenges: its performance often lags behind that of original models lacking the concept bottleneck layer, attributed to incomplete information extraction from the original data to bottleneck features. Additionally, CBM relies on laborious dataset annotation. Researchers have explored solutions to these challenges. \citet{chauhan2023interactive} extend CBM into interactive prediction settings, introducing an interaction policy to determine which concepts to label, thereby improving final predictions. \citet{oikarinen2023label} address CBM limitations and propose a novel framework called Label-free CBM. This innovative approach enables the transformation of any neural network into an interpretable CBM without requiring labeled concept data, all while maintaining high accuracy. Post-hoc Concept Bottleneck models \citep{yuksekgonul2022post} can be applied to various neural networks without compromising model performance, preserving interpretability advantages. CBMs work on the image field also includes the works of \citet{havasi2022addressing},\citet{kim2023probabilistic},\citet{keser2023interpretable},\citet{sawada2022concept} and \citet{sheth2023auxiliary}. Despite many works on CBMs, we are the first to investigate the interactive influence between concepts through influence functions. Our research endeavors to bridge this gap by utilizing influence functions in CBMs, thereby deciphering the interaction of concept models and providing an adaptive solution to concept editing. For more related work, please refer to Appendix \ref{app:sec:more_related}.

\section{Preliminaries}
\label{sec:preliminary}
\noindent{\bf Concept Bottleneck Models.} In this paper, we consider the original CBM, and we adopt the notations used by \citet{koh2020concept}. We consider a classification task with a concept set denoted as $\{p_1, \cdots, p_k\}$ with each $p_i$ being a concept given by experts or LLMs, and a training dataset represented as $\mathcal{D} = \{ z_i\}_{i=1}^{n}$, where $z_i = (x_i, y_i, c_i)$. Here, for $i\in [n]$, $x_i\in \mathbb{R}^{d_i}$ represents the input feature vector, $y_i\in \mathbb{R}^{d_o}$ denotes the label (with $d_o$ corresponding to the number of classes) and ${c}_i=(c_i^1, \cdots, c_i^k) \in \mathbb{R}^k$ represents the concept vector. In this context, $c_i^j$ represents the label of the concept $p_j$ of the $i$-th data. In CBMs, our goal is to learn two representations: one called concept predictor that transforms the input space into the concept space, denoted as $g: \mathbb{R}^d_i \to \mathbb{R}^k$, and the other called label predictor which maps the concept space to the prediction space, denoted as $f: \mathbb{R}^k \to \mathbb{R}^{d_o}$. Usually, here the map $f$ is linear. For each training sample $z_i = (x_i, y_i, {c}_i)$, we consider two empirical loss functions: concept predictor $\hat{g}$ and label predictor $\hat{f}$:
\begin{equation}\label{eq:def_g}
\begin{split}
    \hat{g} &= \argmin_g \sum_{i=1}^n\sum_{j=1}^k g^j(x_i)^\top \log({c^j_i}),
\end{split}
\end{equation}
where $g^j(*)$ is the predicted $j$-th concept. For brevity, write the loss function as $L_{C}(g(x_i), c_i) = \sum_{j=1}^k L_{C}^j(g(x_i), c_i)$ for data $(x_i, c_i)$. Once we obtain the concept predictor $\hat{g}$, the label predictor is defined as:
\begin{align}
\label{eq:hatf}
\hat{f} = \argmin_{f} \sum_{i=1}^n L_Y\big(f(\hat{g}(x_i)), y_i\big),
\end{align}
where \( L_Y \) represents the cross-entropy loss, similar to \eqref{eq:def_g}.
CBMs enforce dual precision in predicting interpretable concept vectors \( \hat{c} = \hat{g}(x) \) (matching concept \( {c} \)) and final outputs \( \hat{y} = \hat{f}(\hat{c}) \) (matching label \( y \)), ensuring transparent reasoning through explicit concept mediation. Furthermore, in this paper, we focus primarily on the scenarios in which the label predictor ff is a linear transformation, motivated by their interpretability advantages in tracing concept-to-label relationships. For details on the symbols used, please refer to the notation table in Appendix~\ref{tab:notation}.

\noindent {\bf Influence Function.} 
The influence function measures the dependence of an estimator on the value of individual point in the sample. Consider a neural network $\hat{\theta}=\arg\min_\theta \sum_{i=1}^n \ell(z_i;\theta)$ with loss function $\ell$ and dataset $D=\{z_i\}_{i=1}^n$. If we remove $z_m$ from the training dataset, the parameters become $\hat{\theta}_{-z_m} = \arg\min_\theta \sum_{i\neq m} \ell(z_i; \theta)$. The influence function provides an efficient model approximation by defining a series of $\epsilon$-parameterized models as $\hat{\theta}_{\epsilon, -z_m} ={\operatorname{argmin}} \sum_{i=1}^{n} \ell(z_{i} ; \theta) + \epsilon \ell(z_m ; \theta).$ 
By performing a first-order Taylor expansion on the gradient of the objective function corresponding to the \(\argmin\) process, the influence function is defined as:
\begin{equation*}
    \mathcal{I}_{\hat{\theta}} \left( z_m \right)\triangleq \left.\frac{\mathrm{d} {\hat{\theta}_{\epsilon,-z_m}} }{\mathrm{d}\epsilon}\right|_{\epsilon = 0} = -H^{-1}_{\hat{\theta}} \cdot \nabla_{\theta} \ell (z_m; {\hat{\theta}}),
\end{equation*}
where \(H^{-1}_{\hat{\theta}} = \nabla^2_{\theta} \sum_{i=1}^n \ell(z_i; \hat{\theta})\) is the Hessian matrix. When the loss function $\ell$ is twice-differentiable and strongly convex in $\theta$, the Hessian $H_{{\hat{\theta}}}$ is positive definite and thus the influence function is well-defined. For non-convex loss functions, \citet{bartlett1953approximate} proposed replacing the Hessian \(H_{\hat{\theta}}\) with \(\hat{H} = G_{\hat{\theta}} + \delta I\), where \(G_{\hat{\theta}}\) is the Fisher information matrix defined as \( \sum_{i=1}^n \nabla_{\theta} \ell(z_i; \theta) ^{\mathrm{T}}\nabla_{\theta} \ell(z_i; \theta)\), and \(\delta \) is the damping term used to ensure the positive definiteness of \(\hat{H}\). We can employ the Eigenvalue-corrected Kronecker-Factored Approximate Curvature (EK-FAC) method to further accelerate the computation. See Appendix \ref{sec:further} for additional details.

\section{Editable Concept Bottleneck Models}
\label{sec:method}
In this section, we introduce our EBCMs for the three settings mentioned in the introduction, leveraging the influence function. Specifically, at the concept-label level, we calculate the influence of a set of data samples' individual concept labels; at the concept level, we calculate the influence of multiple concepts; and at the data level, we calculate the influence of multiple samples.
\subsection{Concept Label-level Editable CBM}
In many cases, certain data samples contain erroneous annotations for specific concepts, yet their other information remains valuable. This is particularly relevant in domains such as medical imaging, where acquiring data is often costly and time-consuming. In such scenarios, it is common to correct the erroneous concept annotations rather than removing the entire data from the dataset. Estimating the retrained model parameter is crucial in this context. We refer to this scenario as the concept label-level editable CBM.

Mathematically, we have a set of erroneous data $D_e$ and its associated index set $S_e\subseteq [n]\times [k]$ such that for each $(w, r)\in S_e$, $(x_w, y_w, {c}_w)\in D_e$ with $c_w^r$ is mislabeled and $\tilde{c}_w^r$ is corrected concept label. Our goal is to estimate the retrained CBM. The retrained concept predictor and label predictor are represented as follows:
\begin{equation}
    \begin{split}
        \hat{g}_{e} = \argmin_{g}  & \sum_{(i, j) \notin S_e} L^j_{C}\left(g(x_i),{c}_i\right) \\
       +&\sum_{(i, j) \in S_e} L^j_{C}\left(g(x_i), \tilde{c}_i \right), \label{concept-label:g}
    \end{split}
\end{equation}
\vspace{-5pt}
\begin{equation}
     \label{concept-label:f}
    \hat{f}_{e} = \argmin_{f} \sum_{i=1}^n L_{Y} \left(f\left(\hat{g}_{e}\left(x_i\right)\right), y_i\right).
\end{equation}
For simple neural networks, we can use the influence function approach directly to estimate the retrained model. However, for CBM architecture, if we intervene with the true concepts, the concept predictor $\hat{g}$ fluctuates to $\hat{g}_e$ accordingly. Observe that the input data of the label predictor comes from the output of the concept predictor, which is also subject to change. Therefore, we need to adopt a two-stage editing approach. Here we consider the influence function for \eqref{concept-label:g} and \eqref{concept-label:f} separately. We first edit the concept predictor from $\hat{g}$ to $\bar{g}_{e}$, and then edit from $\hat{f}$ to $\bar{f}_e$ based on our approximated concept predictor. To begin, we provide the following definitions:
\begin{definition}
Define the gradient of the \(j\)-th concept predictor and the label predictor for the \(i\)-th data point \(x_i\) as:
\begin{align*}
    &G^j_C(x_i,{c}_i;{g}) \triangleq \nabla_{{g}}L^j_C\left({g}(x_i),  {c}_i\right),\\
     &G_Y(x_i;{g},f) \triangleq \nabla_{{f}}L_Y\left(f({g}(x_i)),y_i\right).
\end{align*}
\end{definition}
\begin{theorem}\label{th:concept-label:g}
The retrained concept predictor $\hat{g}_{e}$ defined by (\ref{concept-label:g}) can be approximated by $\bar{g}_{e}$, defined by: 
\begin{equation*}
     \hat{g} -H^{-1}_{\hat{g}}  \cdot \sum_{(w,r)\in S_e} \left(  G^r_C(x_w,\tilde{c}_w;\hat{g}) - G^r_C(x_w,  {c}_w;\hat{g}) \right),
\end{equation*}
where $H_{\hat{g}} = \nabla_{\hat{g}}  \sum_{i,j}  G^j_C(x_i,{c}_i;\hat{g})$ is the Hessian matrix of the loss function with respect to $\hat{g}$.
\end{theorem}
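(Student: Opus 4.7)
The plan is to derive the approximation using the standard influence function machinery adapted to the CBM setting, treating the parameters of the concept predictor $g$ as the variable of optimization (as the paper implicitly does by writing $H_{\hat g}$, $\nabla_{\hat g}$, etc.). The basic idea is to construct a one-parameter family of perturbed objectives that smoothly interpolates between the original CBM training loss and the corrected training loss, and then Taylor-expand the resulting optimizer around the unperturbed solution.

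Concretely, I would introduce a scalar $\epsilon \in [0,1]$ and define
\begin{equation*}
\hat{g}(\epsilon) \triangleq \argmin_{g} \sum_{i,j} L_C\bigl(g^j(x_i), c_i^j\bigr) + \epsilon \sum_{(w,r)\in S_e}\Bigl[L_C\bigl(g^r(x_w), \tilde{c}_w^r\bigr) - L_C\bigl(g^r(x_w), c_w^r\bigr)\Bigr],
\end{equation*}
so that $\hat{g}(0)=\hat{g}$ and $\hat{g}(1)=\hat{g}_{e}$ as defined in \eqref{concept-label:g}. Writing the first-order optimality condition for $\hat{g}(\epsilon)$, differentiating implicitly in $\epsilon$, and evaluating at $\epsilon=0$ yields
\begin{equation*}
H_{\hat{g}}\cdot \frac{d\hat{g}(\epsilon)}{d\epsilon}\bigg|_{\epsilon=0} + \sum_{(w,r)\in S_e}\Bigl[\nabla_{\hat g} L_C\bigl(\hat{g}^r(x_w), \tilde{c}_w^r\bigr) - \nabla_{\hat g} L_C\bigl(\hat{g}^r(x_w), c_w^r\bigr)\Bigr] = 0,
\end{equation*}
which when solved for the derivative gives exactly the influence expression in the statement. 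A first-order Taylor approximation $\hat{g}_e = \hat{g}(1) \approx \hat{g}(0) + \frac{d\hat{g}(\epsilon)}{d\epsilon}\big|_{\epsilon=0}$ then produces $\bar{g}_e$.

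A few technical points need care. First, $H_{\hat g}$ must be invertible; since CBM losses are generally non-convex in the network parameters, I would invoke the standard Fisher-plus-damping surrogate $\hat H = G_{\hat g} + \lambda I$ described in the preliminaries, which the paper has already justified. Second, the statement involves a \emph{set} $S_e$ of corrections rather than a single point, but this only changes the perturbation term into a finite sum, and because differentiation and summation commute the derivation goes through unchanged; the resulting approximation is additive in the individual influences. Third, the gradients $\nabla_{\hat g}L_C(\hat g^r(x_w),\cdot)$ should be understood via the chain rule through the $r$-th output coordinate of $g$, so that corrections on the $r$-th concept only propagate through the $r$-th component of the concept predictor.

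The main obstacle I anticipate is the usual one for influence functions applied to deep networks: the step from the exact implicit-function derivative at $\epsilon=0$ to the claim $\hat g_e \approx \bar g_e$ at $\epsilon=1$ is a genuine first-order Taylor truncation and is only an \emph{approximation}, which is why the theorem is stated with $\approx$ rather than equality. Making this rigorous would require either assuming local strong convexity of the training loss near $\hat g$ or explicitly treating $\hat H$ as a damped surrogate; since the paper takes the latter route in its preliminaries, I would simply cite that convention and keep the statement at the level of a first-order approximation. All other steps (implicit differentiation, linearity in $S_e$, chain rule through $g^r$) are routine.
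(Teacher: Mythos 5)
Your proposal is correct and takes essentially the same route as the paper's proof: both construct the $\epsilon$-weighted perturbed objective $\hat{g}(\epsilon)$ interpolating between $\hat{g}$ and $\hat{g}_e$, Taylor-expand (equivalently, implicitly differentiate) the first-order optimality condition around $\hat{g}$ at $\epsilon=0$, solve for the derivative using the vanishing of the unperturbed gradient, and then take a unit Newton step to $\epsilon=1$. Your remarks on the damped Fisher surrogate for non-convex Hessian invertibility and the linearity over $S_e$ also match the paper's treatment, so there is nothing to add.
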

\begin{theorem}
The retrained label predictor $\hat{f}_{e}$ defined by \eqref{concept-label:f} can be approximated by $\bar{f}_{e}$, defined by: 
\begin{align*}
       \hat{f} + H^{-1}_{\hat{f}} \cdot  \sum_{i=1}^n \left(G_Y(x_i;\hat{g},\hat{f})   -G_Y(x_i;\bar{g}_e,\hat{f})  \right),
   \end{align*}
where $H_{\hat{f}} = \nabla_{\hat{f}}\sum_{i=1}^n   G_Y(x_i;\hat{g},\hat{f})$ is the Hessian matrix, and $\bar{g}_{e}$ is given in Theorem \ref{th:concept-label:g}.
\end{theorem}

\noindent {\bf Difference from Test-Time Intervention.}
The ability to intervene in CBMs allows human users to interact with the model during the prediction process. For example, a medical expert can directly replace an erroneously predicted concept value \(\hat{c}\) and observe its impact on the final prediction \(\hat{y}\).
However, the underlying flaws in the concept predictor remain unaddressed, meaning similar errors may persist when applied to new test data. In contrast, under the editable CBM framework, not only can test-time interventions be performed, but the concept predictor of the CBM can also be further refined based on test data that repeatedly produces errors. 
Our ECBM method incorporates the corrected test data into the training dataset without requiring full retraining. This approach extends the rectification process from the data level to the model level.
\subsection{Concept-level Editable CBM}
In this case, a set of concepts is removed due to incorrect attribution or spurious concepts, termed concept-level edit. \footnote{For convenience, in this paper, we only consider concept removal; our method can directly extend to concept insertion.}Specifically, for the concept set, denote the erroneous concept index set as $M\subset [k]$, we aim to delete these concept labels in all training samples. We aim to investigate the impact of updating the concept set within the training data on the model's predictions.  
It is notable that compared to the above concept label case,  the dimension of output (input) of the retrained concept predictor (label predictor) will change. If we delete $t$ concepts from the dataset, then ${g}$ becomes ${g}^{\prime}:\mathbb{R}^{d_i}\rightarrow\mathbb{R}^{k-t}$ and ${f}$ becomes ${f}^{\prime}:\mathbb{R}^{k-t}\rightarrow\mathbb{R}^{d_o}$.
More specifically, if we retrain the CBM with the revised dataset, the corresponding concept predictor becomes:
\begin{equation}\label{concept-level:g}
\hat{g}_{-p_M} = \argmin_{g'} \sum_{j\notin M}\sum_{i=1}^n L^j_{C}({g'}(x_i),c_i). 
\end{equation}
The variation of the parameters in dimension renders the application of influence function-based editing challenging for the concept predictor. This is because the influence function implements the editorial predictor by approximate parameter change from the original base after $\epsilon$-weighting the corresponding loss for a given sample, and thus, it is unable to deal with changes in parameter dimensions.

To overcome the challenge, our strategy is to develop some transformations that need to be performed on $\hat{g}_{-p_M}$ to align its dimension with $\hat{g}$ so that we can apply the influence function to edit the CBM. We achieve this by mapping $\hat{g}_{-p_M}$ to $\hat{g}^{*}_{-p_M} \triangleq \mathrm{P}(\hat{g}_{-p_M})$, which has the same amount of parameters as $\hat{g}$ and has the same predicted concepts $\hat{g}^{*}_{-p_M}(j)$ as $\hat{g}_{-p_M}(j)$ for all $j\in [d_i]-M$. We achieve this effect by inserting a zero row vector into the $r$-th row of the matrix in the final layer of $\hat{g}_{-p_M}$ for $r\in M$. Thus, we can see that the mapping $P$ is one-to-one. Moreover, assume the parameter space of $\hat{g}$ is $T$ and that of $\hat{g}^{*}_{-p_M}$, $T_0$ is the subset of $T$. Noting that $\hat{g}^{*}_{-p_M}$ is the optimal model of the following objective function:
\begin{equation}\label{concept-level:g^*}
\hat{g}^{*}_{-p_M} = \argmin_{g^{\prime} \in T_0} \sum_{j \notin M} \sum_{i=1}^{n} L^j_{C} ( g'(x_i), c_i), 
\end{equation}
i.e., it is the optimal model of the concept predictor loss on the remaining concepts under the constraint $T_0$.  Now we can apply the influence function to edit $\hat{g}$ to approximate $\hat{g}^{*}_{-p_M}$ with the restriction on the value of 0 for rows indexed by $M$ with the last layer of the neural network, denoted as $\bar{g}^*_{-p_M}$. After that, we remove from $\bar{g}^*_{-p_M}$ the parameters initially inserted to fill in the dimensional difference, which always equals 0 because of the restriction we applied in the editing stage, thus approximating the true edited concept predictor $\hat{g}_{-p_M}$. We now detail the editing process from $\hat{g}$ to $\hat{g}^{*}_{-p_M}$ using the following theorem. 
\begin{theorem}\label{thm:4.3}
For the retrained concept predictor $\hat{g}_{-p_M}$ defined in \eqref{concept-level:g}, we map it to $\hat{g}^{*}_{-p_M}$ as \eqref{concept-level:g^*}. And we can edit the initial $\hat{g}$ to $\hat{g}^*_{-p_M}$, defined as:
\begin{equation*}
    \bar{g}^*_{-p_M}\triangleq \hat{g}  - H^{-1}_{\hat{g}} \cdot \sum_{j\notin M}\sum_{i=1}^n G_C^j(x_i,c_i;\hat{g}),
\end{equation*}
where $H_{\hat{g}} = \nabla_{g} \sum_{j\notin M} \sum_{i=1}^{n} G^j_{C} (x_i, c_i; \hat{g})$.
Then, by removing all zero rows inserted during the mapping phase, we can naturally approximate $\hat{g}_{-p_M}\approx  \mathrm{P}^{-1}(\hat{g}^{*}_{-p_M})$.
\end{theorem}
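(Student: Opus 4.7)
The plan is to interpret the claimed formula as a single Newton step on the post-edit loss $F(g) := \sum_{j \notin M} \sum_{i=1}^n L_{C_j}(g(x_i), c_i)$ started from $\hat{g}$, and to derive it through the perturbation / implicit-function argument that underlies the standard influence function, suitably adapted to the lifted constrained minimizer $\hat{g}^{*}_{-p_M} \in \Gamma_0$.

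First, I would exploit the structural fact that $F$ is independent of the rows $W^{(L)}_{r,:}$ for $r \in M$, since those rows feed only into the dropped losses $L_{C_r}$. Consequently, if $\tilde{\hat{g}}$ denotes the projection of $\hat{g}$ onto $\Gamma_0$ obtained by zeroing out those rows, then both $\nabla_g F$ and $H_{\hat{g}} = \nabla^2_g F$ take identical values at $\hat{g}$ and at $\tilde{\hat{g}}$ on the nondegenerate parameter block; the ``inverse Hessian'' in the theorem is understood on this block, equivalently on the tangent space to $\Gamma_0$. This decoupling is what permits the formula to be written at $\hat{g}$ rather than at $\tilde{\hat{g}}$, and is the key analytic ingredient that the CBM architecture provides for free.

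Next, I would invoke the first-order optimality condition for $\hat{g}^{*}_{-p_M}$ on $\Gamma_0$, namely $\nabla_g F(\hat{g}^{*}_{-p_M}) = 0$ on the tangent subspace, and Taylor-expand around $\tilde{\hat{g}}$:
\begin{equation*}
0 \;=\; \nabla_g F\bigl(\hat{g}^{*}_{-p_M}\bigr) \;\approx\; \nabla_g F(\tilde{\hat{g}}) + H_{\hat{g}} \bigl(\hat{g}^{*}_{-p_M} - \tilde{\hat{g}}\bigr).
\end{equation*}
Solving for the displacement and using $\nabla_g F(\tilde{\hat{g}}) = \nabla_g F(\hat{g})$ on the relevant subspace yields exactly $\bar{g}^{*}_{-p_M} = \hat{g} - H^{-1}_{\hat{g}} \nabla_g F(\hat{g})$. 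Because $\hat{g}$ and $\tilde{\hat{g}}$ differ only in the zeroed coordinates and the Newton increment lies in the tangent space to $\Gamma_0$, the updated model stays in $\Gamma_0$, so the final step $\mathrm{P}^{-1}(\bar{g}^{*}_{-p_M})$ simply strips the zero rows to produce an approximation of the lower-dimensional $\hat{g}_{-p_M}$.

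The main obstacle is precisely the change of parameter dimension between $\hat{g}$ and $\hat{g}_{-p_M}$: the standard influence-function calculus only relates minimizers of perturbed versions of the same objective over a common parameter space. The factorization through the lifted minimizer in $\Gamma_0$ sidesteps this, but it forces one to justify (i) that the single Newton step remains inside $\Gamma_0$, and (ii) that the singularity of the lifted Hessian along the dropped coordinates does not obstruct inversion. Both points reduce to the observation that $F$ is independent of the dropped rows of $W^{(L)}$, which simultaneously annihilates the problematic Hessian block and ensures that the update on its complement coincides with the usual single-step influence-function approximation, whose accuracy is then controlled by the standard quadratic-remainder bound for Newton's method when $\hat{g}$ (restricted to $\Gamma_0$) is close to $\hat{g}^{*}_{-p_M}$.
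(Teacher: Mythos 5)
Your proof is correct and, at its core, recovers the same calculation as the paper: Taylor-expand the first-order optimality condition of the lifted constrained minimizer $\hat{g}^{*}_{-p_M}$ and solve for the displacement, which is exactly a single Newton step on the post-edit loss $F$ starting from $\hat{g}$. The paper dresses this up in the usual influence-function $\epsilon$-parametrization (introducing $\hat{g}_{\epsilon}$ with $\hat{g}_{0}=\hat{g}^{*}_{-p_M}$ and $\hat{g}_{1}=\hat{g}$, expanding the gradient of $F$ at $\hat{g}^{*}_{-p_M}$ around $\hat{g}_{\epsilon}$, and then setting $\epsilon=1$), but the $\epsilon$-scaffolding contributes nothing beyond the Newton step you write directly, so the two derivations are mathematically equivalent.

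Where your write-up is genuinely better is in the treatment of the dimensional and degeneracy issues. The paper asserts that $\hat{g}_{\epsilon}\to\hat{g}^{*}_{-p_M}$ as $\epsilon\to0$, which is not literally true on the dropped rows $W^{(L)}_{r,:}$, $r\in M$: those rows are only constrained by the $\epsilon$-weighted terms, whose minimizer is independent of $\epsilon>0$, so they do not shrink to zero. You sidestep this entirely by (i) observing that $F$ is independent of those rows, so both $\nabla_g F$ and $H_{\hat{g}}$ vanish on that block and coincide on the complementary block at $\hat{g}$ and at its projection $\tilde{\hat{g}}\in\Gamma_0$, and (ii) interpreting $H^{-1}_{\hat{g}}$ as the inverse on the tangent space to $\Gamma_0$. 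This makes explicit why the Newton increment stays in $\Gamma_0$ and why the apparent singularity of the lifted Hessian is harmless, a point the paper leaves implicit and, in practice, handles only via the damping term in the EK-FAC implementation. Net: same derivation, but your account of the constrained geometry is the more careful one.
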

For the second stage of training, assume we aim to remove concept $p_r$ for $r\in M$ and the new optimal model is $\hat{f}_{-p_M}$.
We will encounter the same difficulty as in the first stage, i.e., the number of parameters of the label predictor will change. To address the issue, our key observation is that in the existing literature on CBMs, we always use linear transformation for the label predictor, meaning that the dimensions of the input with values of $0$ will have no contribution to the final prediction. 
To leverage this property, we fill the missing values in the input of the updated predictor with $0$, that is, replacing $\hat{g}_{-p_M}$ with $\hat{g}^*_{-p_M}$ and consider $\hat{f}_{p_M = 0}$ defined by
\begin{equation}\label{concept-level:f^*}
    \hat{f}_{p_M=0} = \argmin_{f} \sum_{i = 1}^n L_{Y} \left(f\left(\hat{g}^*_{-p_M}(x_i)\right), y_i\right).
\end{equation}
In total, we have the following lemma:
\begin{lemma}\label{method:lm:1}
In the CBM, if the label predictor utilizes linear transformations of the form $\hat{f} \cdot c$ with input $c$, then, for each $r\in M$, we remove the $r$-th concept from c and denote the new input as $c^{\prime}$; set the $r$-th concept to $0$ and denote the new input as $c^0$. Then we have $\hat{f}_{-p_M} \cdot c^{\prime} = \hat{f}_{p_M=0} \cdot c^0$ for any input $c$. 
\end{lemma}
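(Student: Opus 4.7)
The plan is to exploit the linearity of the label predictor together with the structural output of $\hat{g}^*_{-p_M}$. By Theorem~\ref{thm:4.3}, $\hat{g}^*_{-p_M}(x_i)$ agrees coordinate-wise with $\hat{g}_{-p_M}(x_i)$ on indices in $[k]\setminus M$ and equals $0$ on indices in $M$. Writing a linear label predictor as $\hat{f}\cdot v = \sum_{r=1}^{k}\hat{f}_{:,r}\,v_r$, any coordinate of $v$ that equals zero contributes nothing to the output regardless of the corresponding column of $\hat{f}$, so those columns are unconstrained by the training loss.

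First, I would unpack the loss defining $\hat{f}_{p_M=0}$. Because $(\hat{g}^*_{-p_M}(x_i))_r = 0$ for every $r \in M$ and every training point $x_i$, that loss simplifies to
\[
\sum_{i=1}^{n} L_Y\!\left(\sum_{r \notin M} f_{:,r}\,\hat{g}^{r}_{-p_M}(x_i),\; y_i\right),
\]
which depends only on the sub-matrix $(f_{:,r})_{r \notin M} \in \mathbb{R}^{d_z \times (k-|M|)}$. Up to a reindexing of columns, this is exactly the training loss whose minimizer is $\hat{f}_{-p_M}$. Consequently, the non-$M$ columns of any minimizer $\hat{f}_{p_M=0}$ can be identified with the entries of $\hat{f}_{-p_M}$.

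Second, I would evaluate both sides of the claimed identity at an arbitrary input $c \in \mathbb{R}^k$. A direct expansion gives $\hat{f}_{-p_M}\cdot c' = \sum_{r \notin M}(\hat{f}_{-p_M})_{:,r}\,c_r$ and $\hat{f}_{p_M=0}\cdot c^0 = \sum_{r \notin M}(\hat{f}_{p_M=0})_{:,r}\,c_r + \sum_{r \in M}(\hat{f}_{p_M=0})_{:,r}\cdot 0$; by the previous paragraph these two expressions coincide, which proves the lemma.

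The only subtlety is a bookkeeping one: $\hat{f}_{p_M=0}$ is not uniquely determined, since its $M$-indexed columns are absent from the training loss. The identity is nonetheless well posed because the claimed quantity $\hat{f}_{p_M=0}\cdot c^0$ is invariant under that freedom (the input $c^0$ multiplies those columns by zero). I would either fix a canonical choice, e.g.\ zeroing the $M$-indexed columns, or simply phrase the conclusion as holding for every minimizer.
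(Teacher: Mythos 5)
Your argument is essentially the paper's own: both proofs exploit linearity to show that the loss defining $\hat{f}_{p_M=0}$ depends only on the columns indexed by $[k]\setminus M$ (because the training inputs $\hat{g}^{*}_{-p_M}(x_i)$ vanish on $M$), identify that restricted loss with the loss defining $\hat{f}_{-p_M}$ via the row/column projection, and then evaluate both sides of the identity at an arbitrary $c$. The one place you improve on the paper's write-up is the final paragraph: the paper papers over the non-uniqueness of $\hat{f}_{p_M=0}$ with the phrase ``if the same initialization is performed,'' whereas you correctly observe that the $M$-indexed columns are entirely free and that the claimed identity is nevertheless well posed because $c^0$ annihilates exactly those columns. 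One small citation slip: the fact that $\hat{g}^{*}_{-p_M}$ agrees with $\hat{g}_{-p_M}$ off $M$ and vanishes on $M$ comes from the construction of the map $\mathrm{P}$ preceding Theorem~\ref{thm:4.3}, not from the theorem's influence-function approximation itself.
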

Lemma \ref{method:lm:1} demonstrates that the retrained $\hat{f}_{-p_M}$ and $\hat{f}_{p_M=0}$, when given inputs $\hat{g}_{-p_M}(x)$ and $\hat{g}^*_{-p_M}(x)$ respectively, yield identical outputs. Consequently, we can utilize $\hat{f}_{p_M=0}$ as the editing target in place of $\hat{f}_{-p_M}$.
\begin{theorem}
For the revised retrained label predictor $\hat{f}_{p_M=0}$ defined by \eqref{concept-level:f^*}, we can edit the initial label predictor $\hat{f}$ to $\bar{f}_{p_M=0}$ by the following equation as a substitute for $\hat{f}_{p_M=0}$:
\begin{equation*}
    \hat{f}_{p_M=0} \approx \bar{f}_{p_M=0} \triangleq \hat{f}-H_{\hat{f}}^{-1} \cdot    \sum_{l=1}^{n}G_Y(x_l;\bar{g}^*_{-p_M},\hat{f}), 
\end{equation*}
where $H_{\hat{f}} = \nabla_{\hat{f}}\sum_{i=1}^n G_Y(x_l;\bar{g}^*_{-p_M},\hat{f})$ is the Hessian matrix. Deleting the $r$-th dimension of $\bar{f}_{p_M=0}$ for $r\in M$, then we can map it to $\bar{f}_{-p_M}$, which is the approximation of the final edited label predictor $\hat{f}_{-p_M}$ under concept level.
\end{theorem}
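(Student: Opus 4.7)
The plan is to reuse the standard influence-function homotopy argument, except that the perturbation is not a single training point but rather a global swap of the first-stage model $\hat{g}$ for the approximation $\bar{g}^*_{-p_M}$ delivered by Theorem~\ref{thm:4.3}. First I would substitute this approximation into \eqref{concept-level:f^*}, reducing the target to
\[
\hat{f}_{p_M=0} \;\approx\; \argmin_f \sum_{i=1}^n L_{Y_i}\!\left(f, \bar{g}^*_{-p_M}\right),
\]
so that both the current model $\hat{f}$ and the target live in the same parameter space and differ only through the concept representation fed into $L_{Y_i}$.

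Next I would introduce the weighted objective
\[
f(\epsilon) \;\triangleq\; \argmin_f\; \sum_{i=1}^n L_{Y_i}(f,\hat{g}) \;+\; \epsilon \sum_{i=1}^n \bigl[L_{Y_i}(f, \bar{g}^*_{-p_M}) - L_{Y_i}(f,\hat{g})\bigr],
\]
which interpolates $f(0)=\hat{f}$ with the displayed right-hand side at $\epsilon=1$. Writing the first-order optimality condition for $f(\epsilon)$ and implicitly differentiating at $\epsilon=0$ gives
\[
\left.\frac{d f(\epsilon)}{d\epsilon}\right|_{\epsilon=0} \;=\; -H_{\hat{f}}^{-1}\,\nabla_f \sum_{i=1}^n \bigl[L_{Y_i}(\hat{f},\bar{g}^*_{-p_M}) - L_{Y_i}(\hat{f},\hat{g})\bigr].
\]
A first-order Taylor step $f(1)\approx f(0) + \tfrac{d f}{d\epsilon}|_{\epsilon=0}$, combined with the optimality identity $\nabla_f \sum_{i} L_{Y_i}(\hat{f},\hat{g})=0$ which annihilates the second summand in the bracket, reproduces exactly the claimed closed-form for $\bar{f}_{p_M=0}$.

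For the final sentence of the statement, I would invoke Lemma~\ref{method:lm:1}: because $\bar{g}^*_{-p_M}$ has zero entries in coordinates $r\in M$ by construction, the columns of $\bar{f}_{p_M=0}$ indexed by $M$ contribute nothing to the prediction $\bar{f}_{p_M=0}\cdot \bar{g}^*_{-p_M}(x)$. Deleting those columns therefore produces a linear map whose action on the reduced vector $\hat{g}_{-p_M}(x)$ matches the original action on $\hat{g}^*_{-p_M}(x)$, and the lemma identifies this map as the desired approximation of $\hat{f}_{-p_M}$.

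The main obstacle will be controlling the composition of two successive approximations. Theorem~\ref{thm:4.3} replaces $\hat{g}^*_{-p_M}$ by $\bar{g}^*_{-p_M}$ with a first-order error, and the homotopy above introduces a further first-order error through truncation of the Taylor expansion. To make the approximation meaningful one must argue that the discrepancy $\nabla_f L_{Y_i}(\hat{f},\hat{g}^*_{-p_M}) - \nabla_f L_{Y_i}(\hat{f},\bar{g}^*_{-p_M})$ remains higher-order, which is natural under local strong convexity of $L_Y$ around $\hat{f}$ (so $H_{\hat{f}}$ is invertible with bounded inverse) and Lipschitz continuity of $\nabla_f L_Y$ in its second argument; these are the analogues of the regularity conditions that already underpin the influence-function framework recalled in the preliminaries. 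A subsidiary point requiring care is that the zero-row structure enforced in Theorem~\ref{thm:4.3} must be verified to align with the coordinates that Lemma~\ref{method:lm:1} eventually discards, which is immediate from the definition of the mapping $\mathrm{P}$ but worth recording explicitly.
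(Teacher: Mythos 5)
Your proposal is correct and uses essentially the same approach as the paper: the influence-function homotopy that deforms the second-stage loss from depending on $\hat{g}$ to depending on $\bar{g}^*_{-p_M}$, followed by a one-step Newton/Taylor approximation from $\epsilon = 0$, using $\nabla_f \sum_i L_{Y_i}(\hat{f},\hat{g})=0$ to annihilate the baseline term, and Lemma~\ref{method:lm:1} to pass from $\bar{f}_{p_M=0}$ back to $\bar{f}_{-p_M}$. The only difference is presentational: the paper applies the $\epsilon$-weighting one data point $(x_{i_r},y_{i_r},c_{i_r})$ at a time and then sums the resulting linearized increments over all $i_r\in[n]$, whereas you perturb the entire dataset at once with a single interpolation parameter; these yield the identical closed form, so this is a streamlining of exposition rather than a different argument. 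Your added remarks on error control (Lipschitz gradient, bounded $H_{\hat f}^{-1}$) and on the alignment of the zero-row constraint from Theorem~\ref{thm:4.3} with the columns discarded via $\mathrm{P}$ are sound and go slightly beyond what the paper records explicitly.
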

\subsection{Data-level Editable CBM}
In this scenario, we are more concerned about fully removing the influence of data samples on CBMs due to different reasons, such as the training data involving poisoned or erroneous issues. Specifically, we have a set of samples to be removed $\{(x_i, y_i, c_i)\}_{i\in G}$ with $G\subset [n]$. Then, we define the retrained concept predictor as
\begin{equation}\label{data-level:defg}
    \hat{g}_{-z_G} = \argmin_{g}\sum_{j=1}^k\sum_{i\in [n]-G}L^j_{C}(g(x_i), c_i),
\end{equation}
which can be evaluated by the following theorem:
\begin{theorem}\label{data-level:g}
For dataset $\mathcal{D} = \{ (x_i, y_i, c_i) \}_{i=1}^{n}$, given a set of data $z_r = (x_r, y_r, c_r)$, $r\in G$ to be removed. Suppose the updated concept predictor $\hat{g}_{-z_G}$ is defined by \eqref{data-level:defg}, then we have the following approximation for $\hat{g}_{-z_G}$
\begin{equation}
\hat{g}_{-z_G} \approx \bar{g}_{-z_G} \triangleq\hat{g} + H^{-1}_{\hat{g}} \cdot \sum_{r\in G}  \sum_{j=1}^M G^j_C(x_r,c_r;\hat{g}), 
\end{equation}
where $H_{\hat{g}} =  \nabla_{g}\sum_{i,j} G^j_C(x_i,c_i;\hat{g})$ is the Hessian matrix of the loss function with respect to $\hat{g}$.
\end{theorem}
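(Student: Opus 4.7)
The plan is to reduce this to the standard influence-function computation used in the preliminaries, applied simultaneously to the whole batch of samples indexed by $G$. Concretely, I would introduce a one-parameter family of perturbed objectives in which each removed point $z_r$, $r\in G$, is re-weighted. A clean choice is
\begin{equation*}
\hat{g}(\epsilon) \triangleq \argmin_{g}\;\sum_{j=1}^{k}\sum_{i=1}^{n} L_{C_j}(g(x_i),c_i) \;-\; \epsilon\sum_{r\in G}\sum_{j=1}^{k} L_{C_j}(g(x_r),c_r),
\end{equation*}
so that $\hat{g}(0)=\hat{g}$ and $\hat{g}(1)$ is exactly $\hat{g}_{-z_G}$ as defined in \eqref{data-level:defg}. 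The first step is to observe that, by first-order optimality, $\hat{g}(\epsilon)$ satisfies $\nabla_{g}\bigl[\sum_{i,j} L_{C_j}(g(x_i),c_i) - \epsilon\sum_{r\in G} L_{C_r}(g(x_r),c_r)\bigr]\big|_{g=\hat{g}(\epsilon)} = 0$ for all sufficiently small $\epsilon$.

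Next I would differentiate this stationarity condition implicitly with respect to $\epsilon$, evaluate at $\epsilon=0$, and solve for $\tfrac{d\hat{g}}{d\epsilon}\big|_{\epsilon=0}$. The chain rule gives
\begin{equation*}
H_{\hat{g}}\cdot \frac{d\hat{g}}{d\epsilon}\bigg|_{\epsilon=0} \;-\; \sum_{r\in G}\nabla_{g}L_{C_r}(\hat{g}(x_r),c_r) \;=\; 0,
\end{equation*}
where $H_{\hat{g}} = \nabla^{2}_{g}\sum_{i,j}L_{C_j}(\hat{g}^{j}(x_i),c_i^{j})$ is the Hessian used in the theorem statement. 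Inverting $H_{\hat{g}}$ yields the closed-form response $\tfrac{d\hat{g}}{d\epsilon}\big|_{\epsilon=0} = H_{\hat{g}}^{-1}\sum_{r\in G}\nabla_{g}L_{C_r}(\hat{g}(x_r),c_r)$.

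Finally, a first-order Taylor expansion of $\hat{g}(\epsilon)$ around $\epsilon=0$, evaluated at $\epsilon=1$ (mirroring the single-Newton-step justification recalled in Section \ref{sec:preliminary}), gives
\begin{equation*}
\hat{g}_{-z_G} \;=\; \hat{g}(1) \;\approx\; \hat{g} \;+\; \frac{d\hat{g}}{d\epsilon}\bigg|_{\epsilon=0} \;=\; \hat{g} \;+\; H_{\hat{g}}^{-1}\sum_{r\in G}\nabla_{g}L_{C_r}(\hat{g}(x_r),c_r),
\end{equation*}
which is precisely $\bar{g}_{-z_G}$.

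The main obstacle, as usual for influence-function results in deep models, is justifying that $H_{\hat{g}}$ is invertible and that the Taylor remainder is controlled when stepping all the way to $\epsilon=1$. I would handle this exactly as in the preliminaries: either invoke (local) strong convexity of the concept loss around $\hat{g}$, or replace $H_{\hat{g}}$ by the damped Fisher surrogate $\hat{H} = G_{\hat{g}}+\lambda I$ described in Section \ref{sec:preliminary} (accelerated via EK-FAC in practice). A secondary issue is that removing many points in $G$ simultaneously, rather than one at a time, is only first-order accurate in $|G|$; I would note that our derivation treats the entire subset as a single perturbation so that the cross-interaction terms among removed points are absorbed into the $O(\epsilon^{2})$ Taylor remainder, matching the treatment used for $\hat{g}_{e}$ and $\hat{f}_{e}$ in the earlier theorems.
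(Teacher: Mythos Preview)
Your proposal is correct and follows essentially the same approach as the paper: both introduce the one-parameter family $\hat{g}_\epsilon=\argmin_g\bigl[\sum_{i,j}L_{C_j}(g(x_i),c_i)-\epsilon\sum_{r\in G}L_{C_r}(g(x_r),c_r)\bigr]$, derive the first-order optimality condition, linearize around $\hat{g}$, and take a single Newton step to $\epsilon=1$. Your use of implicit differentiation of the stationarity condition is equivalent to the paper's Taylor expansion of that condition at $\hat{g}$, and your remarks on invertibility and the batch-removal remainder are consonant with the paper's treatment.
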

Based on $\hat{g}_{-z_G}$, the label predictor becomes $\hat{f}_{-z_G}$ which is defined by 
\begin{equation}\label{data-level:f}
    \hat{f}_{-z_G} = \argmin_{f}\sum_{i\in [n]-G} L_{Y}\left(f(\hat{g}_{-z_G}\left(x_i\right),y_i\right).
\end{equation}
Compared with the original loss before unlearning 
in \eqref{eq:hatf}, we can observe two changes in \eqref{data-level:f}. First, we remove $|G|$ data points in the loss function $L_Y$. Secondly,  the input for the loss is also changed from $\hat{g}(x_i)$ to $\hat{g}_{-z_G}$. Therefore, it is difficult to estimate directly with an influence function. Here we introduce an intermediate label predictor as 
\begin{equation}\label{data-levl:tilde-f}
    \Tilde{f}_{-z_G} = \argmin \sum_{i\in [n]-G} L_{Y}(f(\hat{g}(x_i),y_i),
\end{equation}
and split the estimate of $\hat{f}_{-z_G}-\hat{f}$ into $\hat{f}_{-z_G}- \Tilde{f}_{-z_G}$ and $\Tilde{f}_{-z_G} - \hat{f}$. 
\begin{theorem}\label{thm:data-level:f}
For dataset $\mathcal{D} = \{ (x_i, y_i, c_i) \}_{i=1}^{n}$, given a set of data $z_r = (x_r, y_r, c_r)$, $r\in G$ to be removed. The intermediate label predictor $\Tilde{f}_{-z_G}$ is defined in \eqref{data-levl:tilde-f}. Then we have
\begin{equation*}
\Tilde{f}_{ -z_G} -\hat{f}  \approx H^{-1}_{\hat{f}} \sum_{i\in[n]-G}
G_Y(x_i;\hat{g},\hat{f})\triangleq A_G.
\end{equation*}
We denote the edited version of $\Tilde{f}_{-z_G}$ as $\bar{f}^*_{-z_G} \triangleq \hat{f} + A_G$. 
Define $B_G$ as
\begin{equation*}
    \begin{split}
 -H^{-1}_{\bar{f}^*_{-z_G}} \sum_{i\in [n]-G}G_Y(x_i;\bar{g}_{-z_G},\bar{f}^*_{-z_G})-G_Y(x_i;\hat{g},\bar{f}^*_{-z_G}),
    \end{split}
\end{equation*}
where $H_{\bar{f}^*_{-z_G}} = \nabla_{\bar{f}}\sum_{i\in[n]-G}G_Y(x_i;\hat{g},\bar{f}^*_{-z_G})$ is the Hessian matrix concerning $\bar{f}^*_{-z_G}$. Then $\hat{f}_{-z_G}$ can be estimated by $\Tilde{f}_{-z_G}+B_G$.
Combining the above two-stage approximation, then, the final edited label predictor $\bar{f}_{-z_G}$ can be obtained by
\begin{equation}
    \bar{f}_{-z_G} = \bar{f}^*_{-z_G} + B_G = \hat{f} + A_G + B_G. 
\end{equation}
\end{theorem}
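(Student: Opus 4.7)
The plan is to decouple the two distinct sources of change between $\hat f$ and $\hat f_{-z_G}$ using the intermediate predictor $\tilde f_{-z_G}$ introduced in \eqref{data-levl:tilde-f}: (i) the removal of the data indices $G$ (captured by passing from $\hat f$ to $\tilde f_{-z_G}$, since both are trained against the same concept predictor $\hat g$), and (ii) the replacement of $\hat g$ by $\bar g_{-z_G}\approx \hat g_{-z_G}$ on the already-reduced dataset $[n]-G$ (captured by passing from $\tilde f_{-z_G}$ to $\hat f_{-z_G}$, with $\bar g_{-z_G}$ supplied by Theorem~\ref{data-level:g}). Writing $\hat f_{-z_G}-\hat f = (\tilde f_{-z_G}-\hat f)+(\hat f_{-z_G}-\tilde f_{-z_G})$, I apply the classical single-Newton-step influence-function approximation from Section~\ref{sec:preliminary} once to each summand.

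For the first summand, since $\hat f$ and $\tilde f_{-z_G}$ differ only in the set of training points, this is a pure multi-point data-removal problem. I introduce the weighted family $\tilde f(\epsilon)=\argmin_f\bigl[\sum_{i=1}^n L_{Y_i}(f,\hat g) - \epsilon\sum_{i\in G} L_{Y_i}(f,\hat g)\bigr]$ with $\tilde f(0)=\hat f$ and $\tilde f(1)=\tilde f_{-z_G}$. Differentiating its stationarity condition at $\epsilon=0$ and inverting $H_{\hat f}$ yields $\tfrac{d\tilde f}{d\epsilon}\big|_{\epsilon=0}=H_{\hat f}^{-1}\sum_{i\in G}\nabla_{\hat f}L_{Y_i}(\hat f,\hat g)$; a first-order Taylor step to $\epsilon=1$, combined with the global first-order condition $\sum_{i=1}^n\nabla_{\hat f}L_{Y_i}(\hat f,\hat g)=0$ to re-express the $G$-sum in terms of $[n]-G$, produces the stated $\tilde f_{-z_G}-\hat f\approx A_G$ and defines the surrogate $\bar f^{*}_{-z_G}\triangleq \hat f+A_G$ on which the second stage will be anchored.

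For the second summand the dataset is fixed to $[n]-G$ but the input concept representation morphs from $\hat g$ into $\bar g_{-z_G}$. I consider the homotopy $\check f(\epsilon)=\argmin_f\sum_{i\in[n]-G}\bigl[(1-\epsilon)L_{Y_i}(f,\hat g)+\epsilon L_{Y_i}(f,\bar g_{-z_G})\bigr]$, so that $\check f(0)=\tilde f_{-z_G}$ and $\check f(1)\approx \hat f_{-z_G}$. Differentiating its stationarity condition at $\epsilon=0$ and Taylor-expanding to $\epsilon=1$ gives $\check f(1)-\check f(0)\approx -H^{-1}_{\tilde f_{-z_G}}\sum_{i\in[n]-G}\bigl[\nabla_f L_{Y_i}(\tilde f_{-z_G},\bar g_{-z_G})-\nabla_f L_{Y_i}(\tilde f_{-z_G},\hat g)\bigr]$. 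Substituting $\bar f^{*}_{-z_G}$ for $\tilde f_{-z_G}$ in both the Hessian and the gradient, which is legitimate at the same first order, yields exactly $B_G$, and summing the two Newton updates produces the claimed identity $\bar f_{-z_G}=\hat f+A_G+B_G$.

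The main obstacle is controlling this two-stage compounded approximation. The second-stage Newton correction is anchored at the first-stage surrogate $\bar f^{*}_{-z_G}$ rather than at the unknown exact $\tilde f_{-z_G}$, and its gradient involves $\bar g_{-z_G}$ in place of $\hat g_{-z_G}$; I must argue that both replacements introduce only higher-order error so that the combined correction still matches $\hat f_{-z_G}-\hat f$ to first order. This follows, as in the preliminaries, from strong convexity and twice-differentiability of $L_Y$ in $f$ together with the implicit-function theorem applied to the parametric stationarity map, with the linearity of $f$ guaranteeing that both Hessians $H_{\hat f}$ and $H_{\bar f^{*}_{-z_G}}$ are well-conditioned and the compounded expansion remains first-order accurate.
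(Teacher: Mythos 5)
Your decomposition at $\tilde f_{-z_G}$, first-stage Newton step for the pure data-removal piece, and second-stage Newton step for the concept-representation swap anchored at $\tilde f_{-z_G}$ (with $\bar f^{*}_{-z_G}$ and $\bar g_{-z_G}$ substituted as computable surrogates) are exactly the paper's proof; the only bookkeeping difference is that in stage two you run a single homotopy over all of $[n]-G$ simultaneously, whereas the paper $\epsilon$-weights one data point $i_r$ at a time and then sums over $i_r$, and these agree at first order.

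One small caution: your first-stage Taylor step correctly gives $\tilde f_{-z_G}-\hat f\approx H_{\hat f}^{-1}\sum_{i\in G}\nabla_{\hat f}L_{Y_i}(\hat f,\hat g)$, and you then invoke stationarity $\sum_{i=1}^n\nabla_{\hat f}L_{Y_i}(\hat f,\hat g)=0$ to ``re-express the $G$-sum in terms of $[n]-G$'' as if it reproduced the stated $A_G$; stationarity actually flips the sign, giving $-H_{\hat f}^{-1}\sum_{i\in[n]-G}\nabla_{\hat f}L_{Y_i}(\hat f,\hat g)$, which is $-A_G$ in the theorem's notation. The paper's own appendix derivation produces the $\sum_{r\in G}$ form while the theorem statement writes $\sum_{i\in[n]-G}$ with a plus sign, so this mismatch is inherited from the source rather than a flaw in your reasoning, but you should not present the two sums as equal without the sign.
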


\noindent {\bf Acceleration via EK-FAC.} As mentioned in Section \ref{sec:preliminary}, the loss function in CBMs is non-convex, meaning the Hessian matrices in all our theorems may not be well-defined. To address this, we adopt the EK-FAC approach, where the Hessian is approximated as \(\hat{H}_{\theta} = G_{\theta} + \delta I\). Here, \(G_{\theta}\) represents the Fisher information matrix of the model \(\theta\), and \(\delta\) is a small damping term introduced to ensure positive definiteness. For details on applying EK-FAC to CBMs, see Appendix \ref{sec:FAC_CBM}. Additionally, refer to Algorithms \ref{alg:4}-\ref{alg:6} in the Appendix for the EK-FAC-based algorithms corresponding to our three levels, with their original (Hessian-based) versions provided in Algorithms \ref{alg:1}-\ref{alg:3}, respectively.

\noindent{\bf Theoretical Bounds.} We provide error bounds for the concept predictor between retraining and ECBM across all three levels; see Appendix \ref{app:bound_cc}, \ref{app:bound_c} and \ref{app:bound_d_the} for details.  We show that under certain scenarios, the approximation error becomes tolerable theoretically when leveraging some damping term $\delta$ regularized in the Hessian matrix.  

\section{Experiments}
\label{sec:exp}
In this section, we demonstrate our main experimental results on utility evaluation, edition efficiency, and interpretability evaluation. Details and additional results are in Appendix \ref{sec:appendix:exp} due to space limit.

\subsection{Experimental Settings}
\noindent{\bf Dataset.}We utilize three datasets: \textit{X-ray Grading (OAI)}~\citep{nevitt2006osteoarthritis}, \textit{Bird Identification (CUB)}~\citep{wah2011caltech}, and the \textit{Large-scale CelebFaces Attributes Dataset (CelebA)}~\citep{liu2015deep}. OAI is a multi-center observational study of knee osteoarthritis, comprising 36,369 data points. Specifically, we configure n=10 concepts that characterize crucial osteoarthritis indicators such as joint space narrowing, osteophytes, and calcification. Bird identification (CUB)\footnote{The original dataset is processed. Detailed explanation can be found in \ref{sec:appendix:exp}.} consists of 11,788 data points, which belong to 200 classes and include 112 binary attributes to describe detailed visual features of birds. CelebA comprises 202,599 celebrity images, each annotated with 40 binary attributes that detail facial features, such as hair color, eyeglasses, and smiling. As the dataset lacks predefined classification tasks, following~\cite{NEURIPS2022_867c0682}, we designate 8 attributes as labels and the remaining 32 attributes as concepts. For all the above datasets, we follow the same network architecture and settings outlined in~\cite{koh2020concept}.

\noindent{\bf Ground Truth and Baselines.} We use retrain as the ground truth method. \textit{Retrain}: We retrain the CBM from scratch by removing the samples, concept labels, or concepts from the training set. We employ two baseline methods: CBM-IF, and ECBM. \textit{CBM-IF}: This method is a direct implementation of our previous theorems of model updates in the three settings. See Algorithms \ref{alg:1}-\ref{alg:3} in Appendix for details. \textit{ECBM}:  
As we discussed above, all of our model updates can be further accelerated via EK-FAC, ECBM corresponds to the EK-FAC accelerated version of Algorithms \ref{alg:1}-\ref{alg:3}  (refer to Algorithms \ref{alg:4}-\ref{alg:6} in Appendix).

\noindent{\bf Evaluation Metric.} We utilize two primary evaluation metrics to assess our models: the F1 score and runtime (RT). \textit{F1 score} measures the model performance by balancing precision and recall. \textit{Runtime}, measured in minutes, evaluates the total running time of each method to update the model.

\noindent{\bf Implementation Details.} Our experiments utilized an Intel Xeon CPU and an RTX 3090 GPU. For utility evaluation, at the concept level, one concept was randomly removed for the OAI dataset and repeated while ten concepts were randomly removed for the CUB dataset, with five different seeds. At the data level, 3\% of the data points were randomly deleted and repeated 10 times with different seeds. At the concept-label level, we randomly selected 3\% of the data points and modified one  concept of each data randomly, repeating this 10 times for consistency across iterations. 

\begin{table*}[ht]
    \centering
    \caption{Performance comparison of different methods on the three datasets.}
    \vspace{-6pt}
    \label{tab:results_app}
    \resizebox{0.95\linewidth}{!}{
        \begin{tabular}{llcccccc}
        \toprule
        \multirow{2}{*}{\textbf{Edit Level}} & \multirow{2}{*}{\textbf{Method}} & \multicolumn{2}{c}{\textbf{OAI}} & \multicolumn{2}{c}{\textbf{CUB}} & \multicolumn{2}{c}{\textbf{CelebA}}\\
        \cmidrule(r){3-4} \cmidrule(r){5-6} \cmidrule(r){7-8}
        & & \textbf{F1 score} & \textbf{RT (minute)} & \textbf{F1 score} & \textbf{RT (minute)} & \textbf{F1 score} & \textbf{RT (minute)}\\
        \midrule
        \multirow{3}{*}{Concept Label} & Retrain & 0.8825$\pm$0.0054 & 297.77 & 0.7971$\pm$0.0066 & 85.56 & 0.3827$\pm$0.0272	& 304.71 \\
        & CBM-IF(Ours) & 0.8639$\pm$0.0033 & 4.63 & 0.7699$\pm$0.0035 & 1.33 & 0.3561$\pm$0.0134	& 5.54\\
        & ECBM(Ours) & \textbf{0.8808$\pm$0.0039} & \textbf{2.36} & \textbf{0.7963$\pm$0.0050} & \textbf{0.65} & \textbf{0.3845$\pm$0.0327} & \textbf{2.49}\\
        \midrule
        \multirow{3}{*}{Concept} & Retrain & 0.8448$\pm$0.0191 & 258.84 & 0.7811$\pm$0.0047 & 87.21 & 0.3776$\pm$0.0350 & 355.85\\
        & CBM-IF(Ours) & 0.8214$\pm$0.0071 & 4.94 & 0.7579$\pm$0.0065 & 1.45 & 0.3609$\pm$0.0202 &5.51\\
        & ECBM(Ours) & \textbf{0.8403$\pm$0.0090} & \textbf{2.36} & \textbf{0.7787$\pm$0.0058} & \textbf{0.59} & \textbf{0.3761$\pm$0.0280} & \textbf{2.48} \\
        \midrule
        \multirow{3}{*}{Data} & Retrain & 0.8811$\pm$0.0065 & 319.37 & 0.7838$\pm$0.0051 & 86.20 & 0.3797$\pm$0.0375 &  325.62\\
        & CBM-IF(Ours) & 0.8472$\pm$0.0046 & 5.07 & 0.7623$\pm$0.0031 & 1.46 & 0.3536$\pm$0.0166 & 5.97 \\
        & ECBM(Ours) & \textbf{0.8797$\pm$0.0038} & \textbf{2.50} & \textbf{0.7827$\pm$0.0088} & \textbf{0.65} & \textbf{0.3748$\pm$0.0347} & \textbf{2.49} \\
        \bottomrule
    \end{tabular}}
    \vspace{-12pt}
\end{table*}
\subsection{Evaluation of Utility and Editing Efficiency}
Our experimental results, as illustrated in Table \ref{tab:results_app}, demonstrate the effectiveness of ECBMs compared to traditional retraining and CBM-IF, particularly emphasizing computational efficiency without compromising accuracy. Specifically, ECBMs achieved F1 scores close to those of retraining (0.8808 vs. 0.8825) while significantly reducing the runtime from 297.77 minutes to 2.36 minutes. This pattern is consistent in the CUB dataset, where the runtime was decreased from 85.56 minutes for retraining to 0.65 minutes for ECBMs, with a negligible difference in the F1 score (0.7971 to 0.7963). These results highlight the potential of ECBMs to provide substantial time savings—approximately 22-30\% of the computational time required for retraining—while maintaining comparable accuracy. Compared to CBM-IF, ECBM also showed a slight reduction in runtime and a significant improvement in F1 score. The former verifies the effective acceleration of our algorithm by EK-FAC. This efficiency is particularly crucial in scenarios where frequent updates to model annotations are needed, confirming the utility of ECBMs in dynamic environments where running time and accuracy are critical. 

We can also see that the original version of ECBM, i.e., CBM-IF, also has a lower runtime than retraining but a lower F1 score than ECBM. Such results may be due to different reasons. For example, 
our original theorems depend on the inverse of the Hessian matrices, which may not be well-defined for non-convex loss. Moreover, these Hessian matrices may be ill-conditioned or singular, which makes calculating their inverse imprecise and unstable.

\begin{figure*}[ht]
    \centering
    \vspace{-5pt}
\begin{tabular}{ccc}
\includegraphics[width=0.28\linewidth]{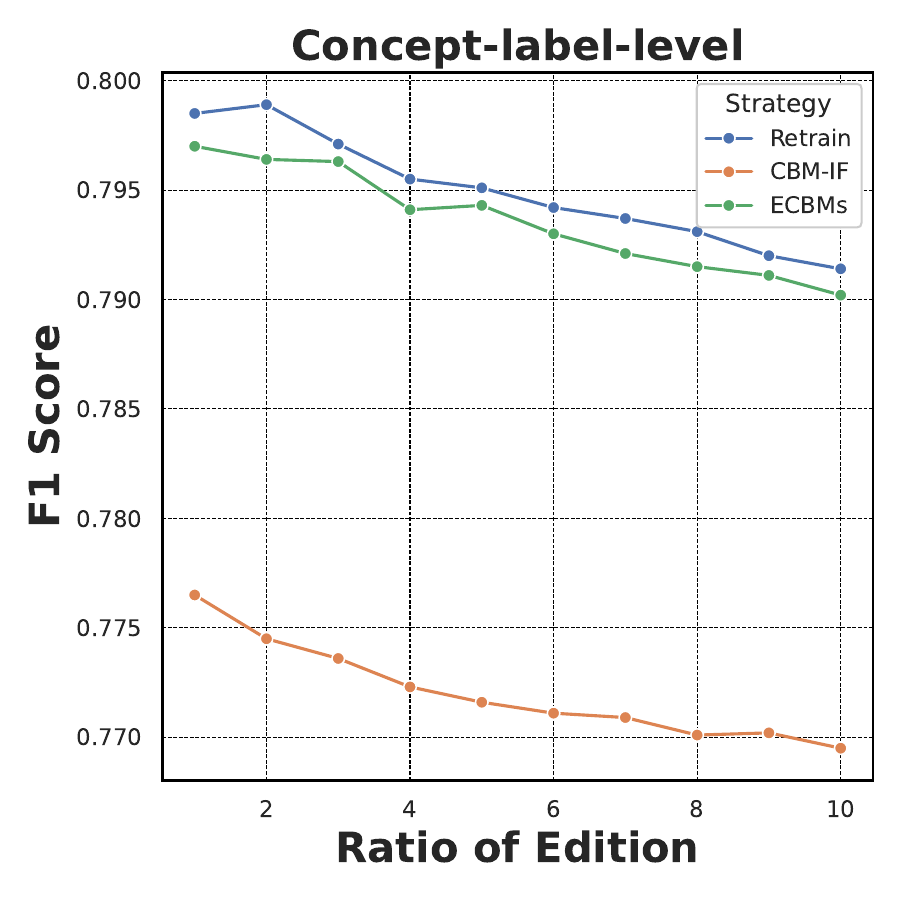}     & 
\includegraphics[width=0.28\linewidth]{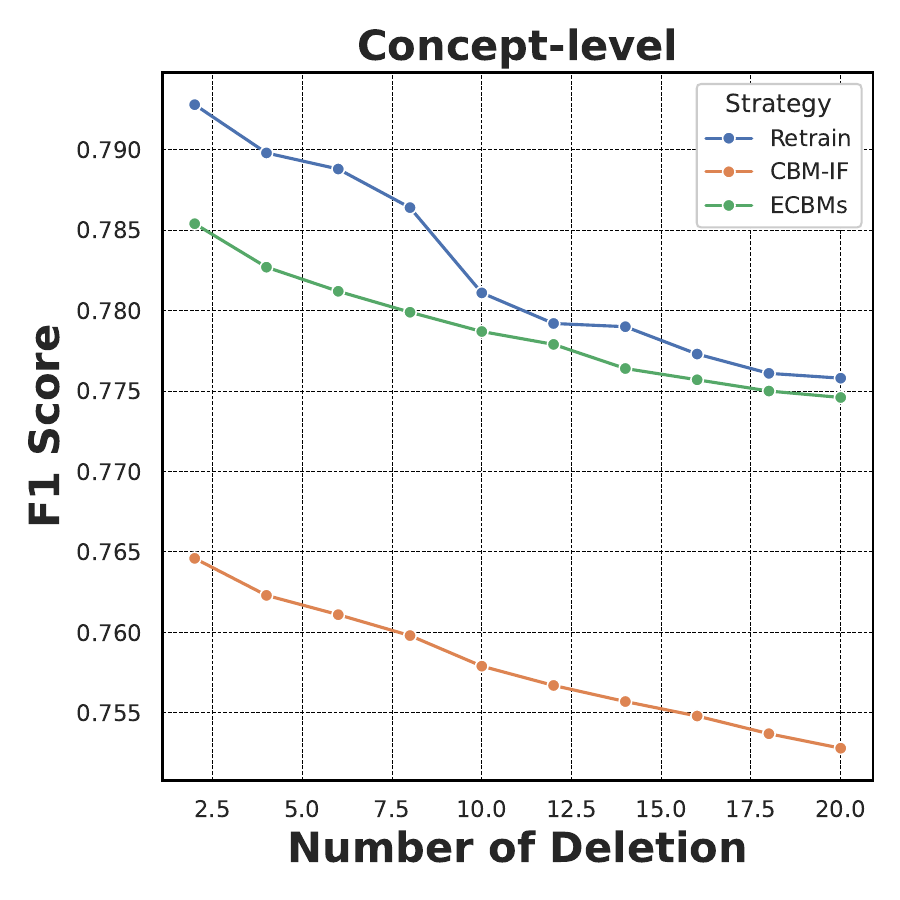} & 
\includegraphics[width=0.28\linewidth]{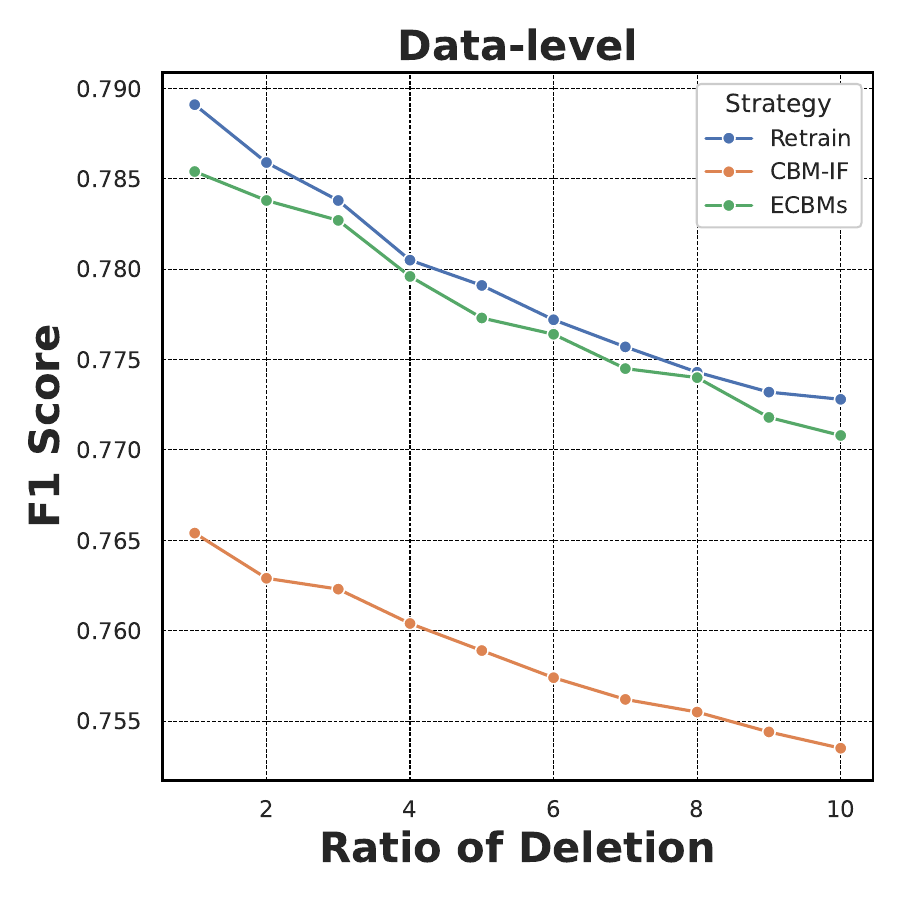} 
\end{tabular}
\vspace{-15pt}
\caption{Impact of edition ratio on three settings on CUB dataset. \label{fig:hyperpara}}
\end{figure*}
\noindent {\bf Editing Multiple Samples.}
To comprehensively evaluate the editing capabilities of ECBM in various scenarios, we conducted experiments on the performance with multiple samples that need to be removed. Specifically, for the concept label/data levels, we consider the different ratios of samples (1-10\%) for edit, while for the concept level, we consider removing different numbers of concepts $\in \{2, 4, 6,\cdots, 20\}$.  We compared the performance of retraining, CBM-IF, and ECBM methods. As shown in Figure \ref{fig:hyperpara}, except for certain cases at the concept level, the F1 score of the ECBM method is generally around $0.0025$ lower than that of the retrain method, which is significantly better than the corresponding results of the CBM-IF method. Recalling Table \ref{tab:results_app}, the speed of ECBM is more than three times faster than that of retraining. Consequently, ECBM is an editing method that achieves a trade-off between speed and effectiveness. 

\subsection{Results on Interpretability}
\noindent{\bf ECBM can measure concepts importance.} 
\begin{figure*}[!bpht]
\centering
\vspace{-5pt}
    \subfigure[Results on the 1-10 most influential concepts]
    {
        \label{fig:positive}
        \includegraphics[width=0.4\linewidth]{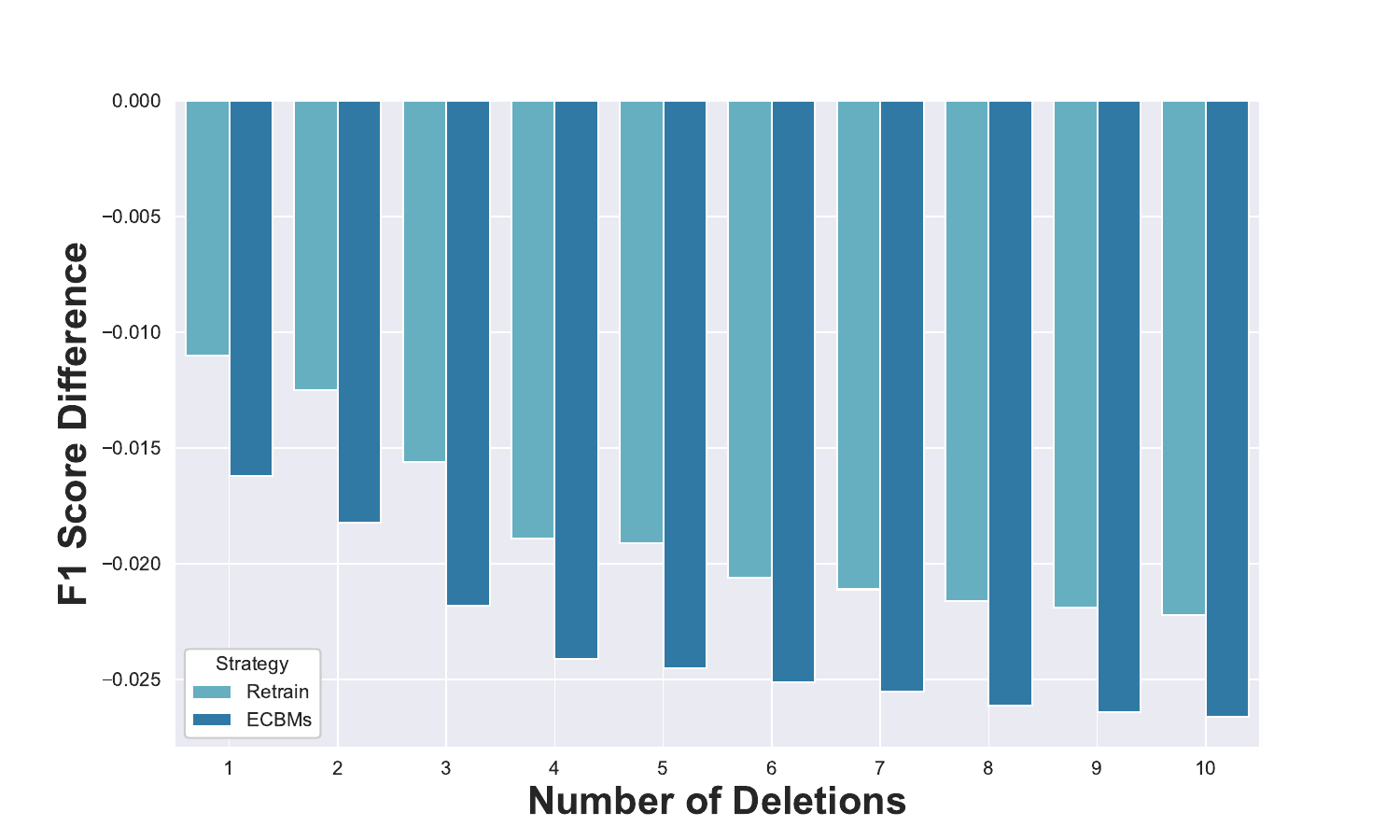}
    }
    \subfigure[Results on the 1-10 least influential concepts]
    {
        \label{fig:negative}
        \includegraphics[width=0.4\linewidth]{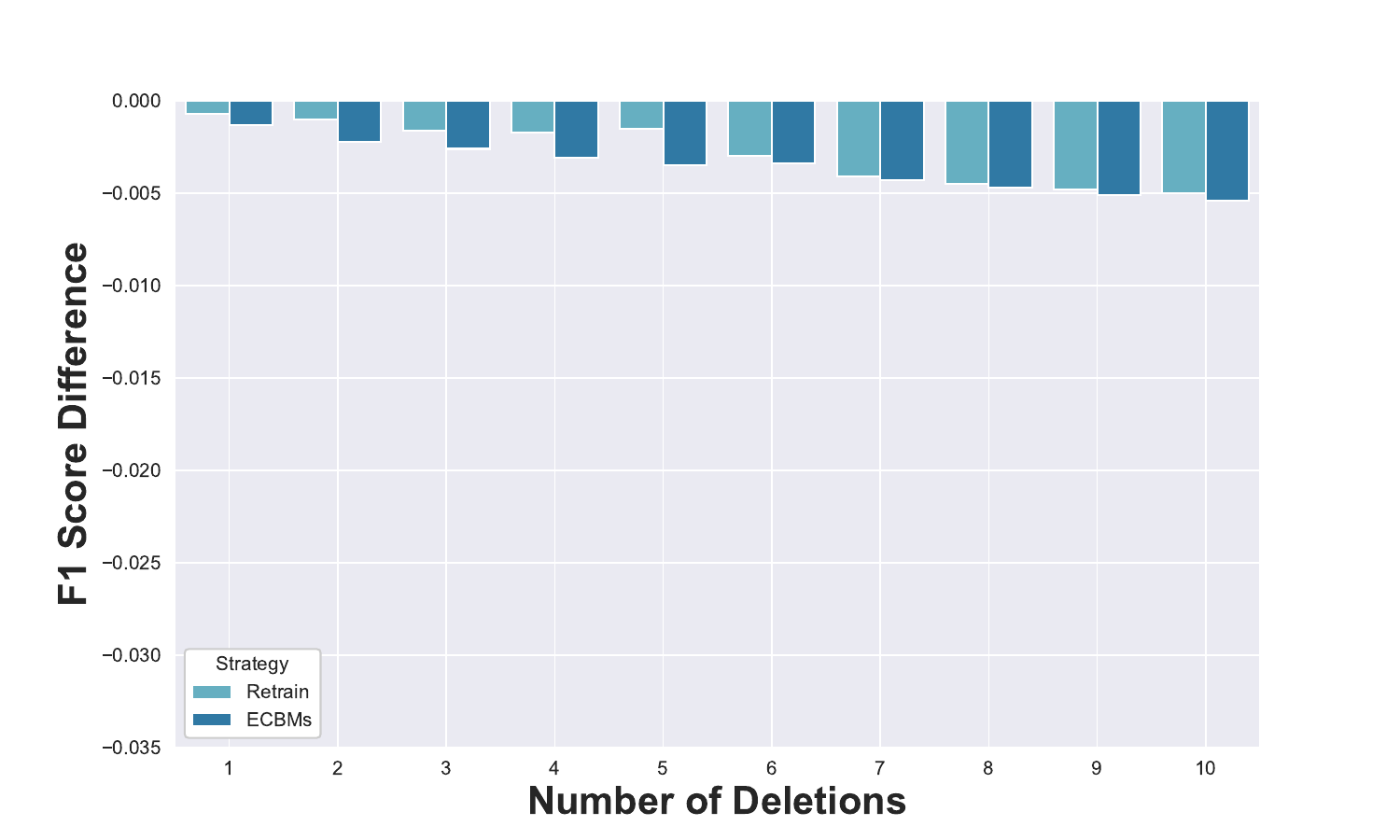}
    }
    \vspace{-10pt}
\caption{F1 score difference after removing most and least influential concepts given by ECBM.}
\label{fig:perturbation}
\vspace{-5pt}
\end{figure*}
The original motivation of the influence function is to calculate the importance score of each sample. Here, we will show that the influence function for the concept level in Theorem \ref{thm:4.3} can be used to calculate the importance of each concept in CBMs, which provides an explainable tool for CBMs. In detail, we conduct our experiments on the CUB dataset. We first select 1-10 most influential and 1-10 least influential concepts by our influence function. Then, we will remove these concepts and update the model via retraining or our ECBM and analyze the change (F1 Score Difference) w.r.t. the original CBM before removal. 

The results in Figure \ref{fig:positive} demonstrate that when we remove the 1-10 most influential concepts identified by the ECBM method, the F1 score decreases by more than 0.025 compared to the CBM before removal. In contrast, Figure \ref{fig:negative} shows that the change in the F1 score remains consistently below 0.005 when removing the least influential concepts. These findings strongly indicate that the influence function in ECBM can successfully determine the importance of concepts. Furthermore, we observe that the gap between the F1 score of retraining and ECBM is consistently smaller than 0.005, and even smaller in the case of least important concepts. This further suggests that when ECBM edits various concepts, its performance is very close to the ground truth.

\noindent{\bf ECBMs can erase data influence.}
\begin{figure*}[!ht]
\centering
    \vspace{-5pt}
    \subfigure[RMIA Score Before Editing]
    {
        \label{fig:rmia-1}
        \includegraphics[width=0.40\linewidth]{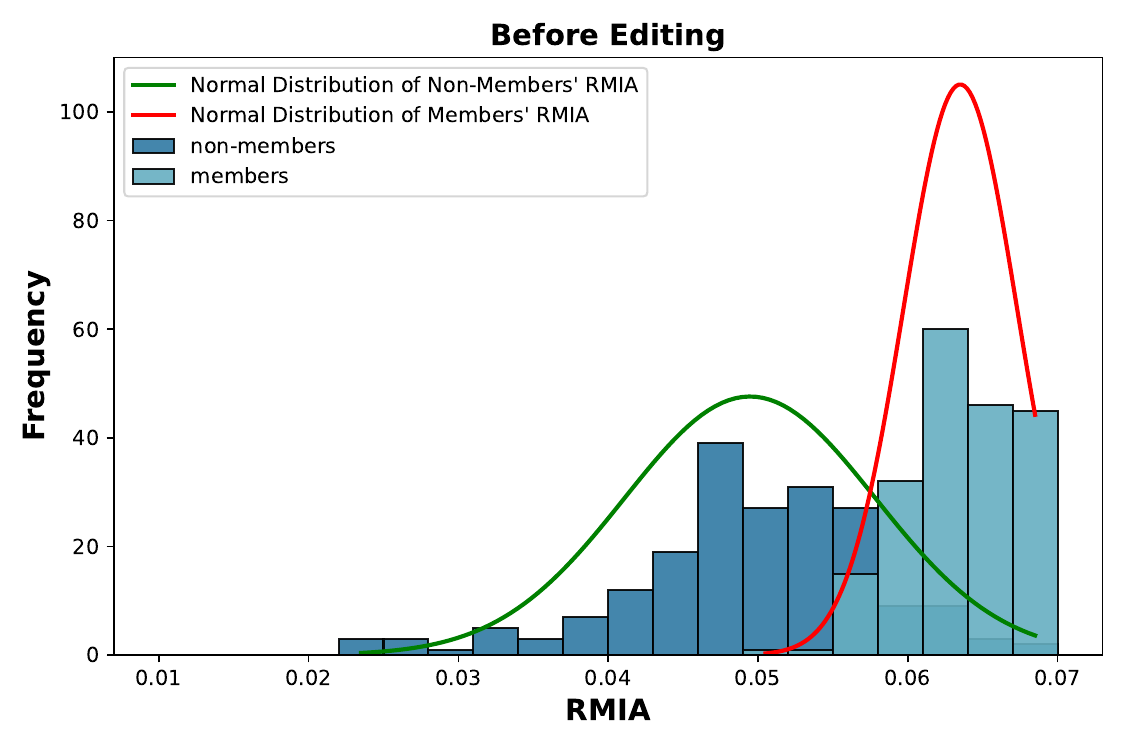}
    }
    \subfigure[RMIA Score After Editing]
    {
        \label{fig:rmia-2}
        \includegraphics[width=0.40\linewidth]{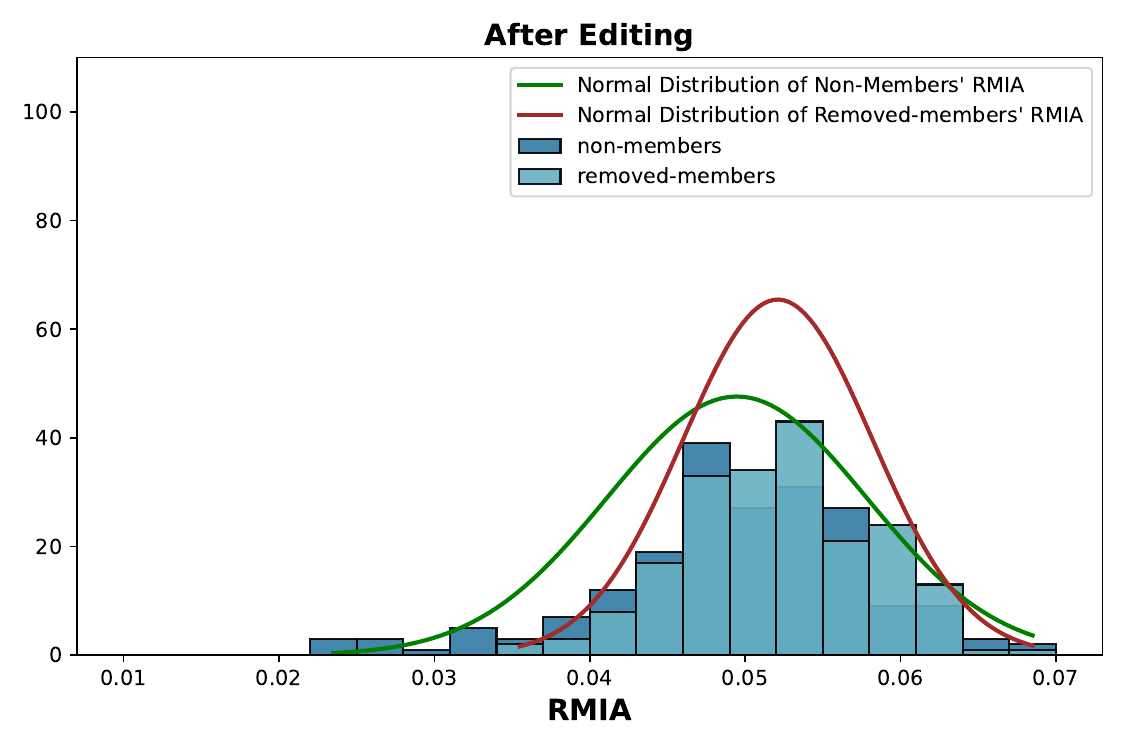}
    }
    \vspace{-10pt}
\caption{RMIA scores of data before and after removal.}
\label{fig:rmia}
\vspace{-5pt}
\end{figure*}
For the data level,  ECBMs aim to facilitate an efficient removal of samples. We perform membership inference attacks (MIAs) to provide direct evidence that ECBMs can indeed erase data influence. MIA is a privacy attack that aims to infer whether a specific data sample was part of the training dataset used to train a model. The attacker exploits the model's behavior, such as overconfidence or overfitting, to distinguish between \textit{training (member)} and \textit{non-training (non-member)} data points. In MIAs, the attacker typically queries the model with a data sample and observes its prediction confidence or loss values, which tend to be higher for members of the training set than non-members~\citep{shokri2017membership}.

To quantify the success of these edits, we calculate the RMIA (Removed Membership Inference Attack) score for each category. The RMIA score is defined as the model's confidence in classifying whether a given sample belongs to the training set. Lower RMIA values indicate that the sample behaves more like a test set (non-member) sample~\cite{zarifzadeh2024low}. This metric is especially crucial for edited samples, as a successful ECBM should make the removed members behave similarly to non-members, reducing their membership vulnerability. See Appendix \ref{sec:appendix:exp} for its definition. 

We conducted experiments by randomly selecting 200 samples from the training set (members) and 200 samples from the test set (non-members) of the CUB dataset. We calculated the RMIA scores for these samples and plotted their frequency distributions, as shown in Figure \ref{fig:rmia-1}. The mean RMIA score for non-members was 0.049465, while members had a mean score of 0.063505. Subsequently, we applied ECBMs to remove the 200 training samples from the model, updated the model parameters, and then recalculated the RMIA scores. After editing, the mean RMIA score for the removed-members decreased to 0.052105, significantly closer to the non-members' mean score. This shift in RMIA values demonstrates the effectiveness of ECBMs in editing the model, as the removed members now exhibit behavior closer to that of non-members. The post-editing RMIA score distributions are shown in Figure \ref{fig:rmia-2}. These results provide evidence of the effectiveness of ECBMs in editing the model's knowledge about specific samples.


\section{Conclusion}
\label{sec:conclusion}
In this paper, we propose Editable Concept Bottleneck Models (ECBMs). ECBMs can address issues of removing/inserting some training data or new concepts from trained CBMs for different reasons, such as privacy concerns, data mislabelling, spurious concepts, and concept annotation errors retraining from scratch. Furthermore, to improve computational efficiency, we present streamlined versions integrating EK-FAC. Experimental results show our ECBMs are efficient and effective.

\section{Impact Statement}
This research enhances methodologies for data attribution in contrastive learning. While we acknowledge the importance of evaluating societal impacts, our analysis suggests that there are no immediate ethical risks requiring specific mitigation measures beyond standard practices in machine learning.

\bibliography{icml2025}
\bibliographystyle{icml2025}

\newpage
\appendix
\onecolumn
\section{Notation Table}
\begin{table}[ht]  
    \centering  
    \begin{tabular}{@{}ll@{}}  
        \toprule  
        Symbol & Description \\ \midrule  
        ${c} = \{p_1, \ldots, p_k\}$ & Set of concepts provided by experts or LLMs. \\
        $\mathcal{D} = \{ z_i \}_{i=1}^{n}$ & Training dataset, where $z_i = (x_i, y_i, {c_i})$. \\
        $x_i \in \mathbb{R}^d_i$ & Feature vector for the $i$-th sample. \\
        $y_i \in \mathbb{R}^{d_o}$ & Label for the $i$-th sample, with $d_z$ being the number of classes. \\
        $c_i = (c_i^1, \ldots, c_i^k) \in \mathbb{R}^k$ & Concept vector for the $i$-th sample. \\
        $\tilde{c}_w^r$ & Corrected concept label for the $w$-th sample and $r$-th concept. \\
        $c_i^j$ & Weight of the concept $p_j$ in the concept vector $c_i$. \\
       $g: \mathbb{R}^d_i \to \mathbb{R}^k$ & Concept predictor mapping input space to concept space. \\
        $f: \mathbb{R}^k \to \mathbb{R}^{d_o}$ & Label predictor mapping concept space to prediction space. \\
        $L^j_{C}(g(x),c)$ & Loss function for the $j$-th concept predictor. \\
        $L_{Y}(f(\hat{g}(x)), y)$ & Loss function from concept space to output space. \\
        $G^j_C(x_i,{c}_i;{g})$ & Gradient of the loss $L^j_{C}(g(x_i),c_i)$.\\
        $G_Y(x_i,{c}_i;{g})$ & Gradient of the loss $L_{Y}(f(g(x_i)),c_i)$.\\
        $H_{\hat{\theta}}$ & Hessian matrix of the loss function with respect to $\hat{\theta}$. \\
        $G_{\hat{\theta}}$ & Fisher information matrix of model $\hat{\theta}$. \\
        $\delta$ & Damping term for ensuring positive definiteness of the Hessian. \\
        $\hat{g}$ & Estimated concept predictor. \\
        $\hat{f}$ & Estimated label predictor. \\
        $\hat{g}_{e}$ & Retrained concept predictor after correcting erroneous data. \\
        $\hat{f}_{e}$ & Retrained label predictor after correcting erroneous data. \\
        $\hat{g}_{-p_M}$ & Retrained concept predictor after removing concepts indexed by $M$. \\
        $\hat{g}^{*}_{-p_M}$ & Mapped concept predictor with the same dimensionality as $\hat{g}$. \\
        $\bar{g}_{-p_M}$ & Approximation of the retrained concept predictor $\hat{g}_{-p_M}$. \\
        $\hat{f}_{p_M=0}$ & Label predictor after setting the $r$-th concept to zero for $r \in M$. \\
        $\bar{f}_{p_M=0}$ & Approximation of the label predictor $\hat{f}_{p_M=0}$. \\
        $H_{\hat{g}}$ & Hessian matrix of the loss function with respect to $\hat{g}$. \\
        $H_{\hat{f}}$ & Hessian matrix of the loss function with respect to $\hat{f}$. \\
        $M \subset [k]$ & Set of erroneous concept indices to be removed. \\
        $G \subset [n]$ & Set of indices of samples to be removed from the dataset. \\
        $z_r = (x_r, y_r, c_r)$ & Data sample to be removed, where $r \in G$. \\
        $\hat{g}_{-z_G}$ & Retrained concept predictor after removing samples indexed by $G$. \\
        $\bar{g}_{-z_G}$ & Approximation of the retrained concept predictor $\hat{g}_{-z_G}$. \\
        $\Tilde{f}_{-z_G}$ & Intermediate label predictor. \\
        $\bar{f}_{-z_G}$ & Final edited label predictor after removing samples indexed by $G$. \\
        \bottomrule  
    \end{tabular}  
\caption{Notation Table}
    \label{tab:notation}  
\end{table}
\section{Influence Function}
Consider a neural network $\hat{\theta}=\arg\min_\theta \sum_{i=1}^n \ell(z_i, \theta)$ with loss function $L$ and dataset $D=\{z_i\}_{i=1}^n$. That is $\hat{\theta}$ minimize the empirical risk
\begin{equation*}
    R(\theta) = \sum_{i=1}^n L(z_i, \theta)
\end{equation*}
Assume $R$ is strongly convex in $\theta$. Then $\theta$ is uniquely defined. If we remove a point $z_m$ from the training dataset, the parameters become $\hat{\theta}_{-z_m} = \arg\min_\theta \sum_{i\neq m} L(z_i, \theta)$. Up-weighting $z_m$ by $\epsilon$ small enough, then the revised risk ${R(\theta)}^{'}= \frac{1}{n}\sum_{i=1}^{n} L(z_{i} ; \theta) + \epsilon L(z_m ; \theta)$ is still strongly convex.
Then the response function $\hat{\theta}_{\epsilon, -z_m} = {R(\theta)}^{'}$ is also uniquely defined. The parameter change is denoted as $\Delta_{\epsilon} = \hat{\theta}_{\epsilon, -z_m} - \hat{\theta}$.
Since $\hat{\theta}_{\epsilon, -z_m}$ is the minimizer of ${R(\theta)}^{'}$, we have the first-order optimization condition as
\begin{equation*}
    \nabla_{\hat{\theta}_{\epsilon, -z_m}} R(\theta) + \epsilon \cdot \nabla_{\hat{\theta}_{\epsilon, -z_m}} L(z_m, \hat{\theta}_{\epsilon, -z_m}) = 0
\end{equation*}
Since $\hat{\theta}_{\epsilon, -z_m} \rightarrow \hat{\theta} as \epsilon \rightarrow 0$, we perform a Taylor expansion of the right-hand side:
\begin{equation*}
 \left[ \nabla R(\hat{\theta}) + \epsilon \nabla L(z_m, \hat{\theta}) \right] + \left[ \nabla^2 R(\hat{\theta}) + \epsilon \nabla^2 L(z_m, \hat{\theta}) \right] \Delta_{\epsilon} \approx 0
\end{equation*}
Noting $\epsilon \nabla^2 L(z_m, \hat{\theta}) \Delta_{\epsilon}$ is $o(\|\Delta_{\epsilon}\|)$ term, which is smaller than other parts, we drop it in the following analysis. Then the Taylor expansion equation becomes
\begin{equation*}
 \left[ \nabla R(\hat{\theta}) + \epsilon \nabla L(z_m, \hat{\theta}) \right] +  \nabla^2 R(\hat{\theta}) \cdot \Delta_{\epsilon} \approx 0
\end{equation*}
Solving for $\Delta_{\epsilon}$, we obtain:
$$\Delta_{\epsilon} = - \left[ \nabla^2 R(\hat{\theta}) + \epsilon \nabla^2 L(z, \hat{\theta}) \right]^{-1} \left[ \nabla R(\hat{\theta}) + \epsilon \nabla L(z, \hat{\theta}) \right].$$
Remember $\theta$ minimizes $R$, then $\nabla R(\hat{\theta}) = 0$. Dropping $o(\epsilon)$ term, we have
$$\Delta_{\epsilon} = - \epsilon \nabla^2 R(\hat{\theta}) ^{-1}   \nabla L(z, \hat{\theta}).$$
\begin{equation*}
\left. \frac{d \hat{\theta}_{\epsilon, -z_m}}{d \epsilon} \right|_{\epsilon=0} =\left. \frac{d \Delta_{\epsilon}}{d \epsilon} \right|_{\epsilon=0} = -H_{\hat{\theta}}^{-1} \nabla L(z, \hat{\theta}) \equiv \mathcal{I}_{up, params}(z).
\end{equation*}
Besides, we can obtain the approximation of $\hat{\theta}_{-z_m}$ directly by $\hat{\theta}_{-z_m} \approx \hat{\theta}+\mathcal{I}_{up, params}(z)$.

\section{Acceleration for Influence Function} \label{sec:further}

\paragraph{EK-FAC.}
EK-FAC method relies on two approximations to the Fisher information matrix, equivalent to $G_{\hat{\theta}}$ in our setting, which makes it feasible to compute the inverse of the matrix.

Firstly, assume that the derivatives of the weights in different layers are uncorrelated, which implies that $G_{\hat{\theta}}$ has a block-diagonal structure.
Suppose $\hat{g}_\theta$ can be denoted by $\hat{g}_{\theta}(x) = g_{\theta_L} \circ \cdots \circ g_{\theta_l} \circ \cdots \circ g_{\theta_1}(x)$ where $l \in [L]$. We fold the bias into the weights and vectorize the parameters in the $l$-th layer into a vector $\theta_{l}\in\mathbb{R}^{d_l}$, $d_l\in \mathbb{N}$ is the number of $l$-th layer parameters.  
Then $G_{\hat{\theta}}$ can be reaplcaed by $\left(G_1(\hat{\theta}), \cdots, G_L(\hat{\theta})\right)$, where $G_l(\hat{\theta}) \triangleq n^{-1}\sum_{i=1}^n\nabla_{\hat{\theta}           _l}\ell_i \nabla_{\theta_l}\ell_i^{\mathrm{T}}$. Denote $h_{l}$, $o_l$ as the output and pre-activated output of $l$-th layer. Then $G_l(\theta)$ can be approximated by
\begin{equation*}
{G}_l(\theta) \approx \hat{G}_l(\theta) \triangleq \frac{1}{n} \sum_{i=1}^{n} h_{l-1}\left(x_{i}\right) h_{l-1}\left(x_{i}\right)^{T} \otimes \frac{1}{n} \sum_{i=1}^{n} \nabla_{o_{l}} \ell_{i} \nabla_{o_{l}} \ell_{i}^{T} \triangleq {\Omega}_{l-1}\otimes {\Gamma_{l}}.
\end{equation*}
Furthermore, in order to accelerate transpose operation and introduce the damping term, perform eigenvalue decomposition of matrix ${\Omega}_{l-1}$ and ${\Gamma_{l}}$ and obtain the corresponding decomposition results as $Q_{\Omega} {\Lambda}_{{\Omega}} {Q}_{{\Omega}}^{\top}$ and ${Q}_{\Gamma}{\Lambda}_{\Gamma} {Q}_{\Gamma}^{\top}$. Then the inverse of $ \hat{H}_l(\theta)$ can be obtained by
\begin{equation*}
     \hat{H}_l(\theta)^{-1} \approx \left(\hat{G}_{l}\left(\hat{g}\right)+\lambda_{l} I_{d_{l}}\right)^{-1}=\left(Q_{\Omega_{l-1}} \otimes Q_{\Gamma_{l}}\right)\left(\Lambda_{\Omega_{l-1}} \otimes \Lambda_{\Gamma_{l}}+\lambda_{l} I_{d_{l}}\right)^{-1}\left(Q_{\Omega_{l-1}} \otimes Q_{\Gamma_{l}}\right)^{\mathrm{T}}.
\end{equation*}
Besides, \citet{george2018fast} proposed a new method that corrects the error in equation \ref{apa:ekfac:1} which sets the $i$-th diagonal element of $\Lambda_{\Omega_{l-1}} \otimes \Lambda_{\Gamma_{l}}$ as $\Lambda_{ii}^{*} = n^{-1} \sum_{j =1}^n \left( \left( Q_{\Omega_{l-1}}\otimes Q_{\Gamma_{l}}\right)\nabla_{\theta_l}\ell_j\right)^2_i.$

\subsection{EK-FAC for CBMs}\label{sec:FAC_CBM}
In our CBM model, the label predictor is a single linear layer, and Hessian computing costs are affordable. However, the concept predictor is based on Resnet-18, which has many parameters. Therefore, we perform EK-FAC for $\hat{g}$.
\begin{equation*}
    \hat{g} = \argmin_g \sum_{j=1}^{k}  L_{C_j} = \argmin_g \sum_{j=1}^k\sum_{i=1}^n L_{C}(g^j(x_i),c_i^j),
\end{equation*}
we define $H_{\hat{g}} = \nabla^2_{\hat{g}}\sum_{i,j} L_{C_j}(g(x_i), c_i)$ as the Hessian matrix of the loss function with respect to the parameters. 

To this end, consider the $l$-th layer of $\hat{g}$ which takes as input a layer of activations $\{a_{j,t}\}$ where $j\in\{1, 2, \ldots, J\}$ indexes the input map and $t \in \mathcal{T}$ indexes the spatial location which is typically a 2-D grid. This layer is parameterized by a set of weights $W = \left(w_{i, j, \delta}\right)$ and biases $b = \left(b_{i}\right)$, where $i \in\{1, \ldots, I\}$ indexes the output map, and $\delta \in \Delta$ indexes the spatial offset (from the center of the filter).

The convolution layer computes a set of pre-activations as 
\begin{equation*}
    [S_l]_{i,t} = s_{i, t}=\sum_{\delta \in \Delta} w_{i, j, \delta} a_{j, t+\delta}+b_{i}.
\end{equation*}
Denote the loss derivative with respect to $s_{i,t}$ as 
\begin{equation*}
    \mathcal{D}s_{i, t} = \frac{\partial \sum L_{C_j}}{\partial s_{i, t}},
\end{equation*}
which can be computed during backpropagation.

The activations are actually stored as $A_{l-1}$ of dimension $|\mathcal{T}|\times J$. Similarly, the weights are stored as an $I \times|\Delta| J$ array $W_l$. The straightforward implementation of convolution, though highly parallel in theory, suffers from poor memory access patterns. Instead, efficient implementations typically leverage what is known as the expansion operator $\llbracket \cdot \rrbracket$. For instance, 
$\llbracket A_{l-1} \rrbracket$ is a $|\mathcal{T}|\times J|\Delta|$ matrix, defined as
\begin{equation*}
    \llbracket {{A}}_{l-1} \rrbracket_{t, j|\Delta|+\delta}=\left[{{A}}_{l-1}\right]_{(t+\delta), j}=a_{j, t+\delta},
\end{equation*}

In order to fold the bias into the weights, we need to add a homogeneous coordinate (i.e. a column of all $1$’s) to the expanded activations $\llbracket A_{l-1} \rrbracket$ and denote this as $\llbracket A_{l-1} \rrbracket_{\mathrm{H}}$. Concatenating the bias vector to the weights matrix, then we have $\theta_l = (b_l, W_l)$.

Then, the approximation for $H_{\hat{g}}$ is given as:
\begin{align*}
G^{(l)}(\hat{g})  =& \mathbb{E}\left[\mathcal{D} w_{i, j, \delta} \mathcal{D} w_{i^{\prime}, j^{\prime}, \delta^{\prime}}\right]=\mathbb{E}\left[\left(\sum_{t \in \mathcal{T}} a_{j, t+\delta} \mathcal{D} s_{i, t}\right)\left(\sum_{t^{\prime} \in \mathcal{T}} a_{j^{\prime}, t^{\prime}+\delta^{\prime}} \mathcal{D} s_{i^{\prime}, t^{\prime}}\right)\right]\\
\approx & \mathbb{E}\left[\llbracket {A}_{l-1} \rrbracket_{\mathrm{H}}^{\top} \llbracket {A}_{l-1} \rrbracket_{\mathrm{H}}\right] \otimes \frac{1}{|\mathcal{T}|} \mathbb{E}\left[\mathcal{D} {S}_{l}^{\top} \mathcal{D} {S}_{l}\right] \triangleq \Omega_{l-1}\otimes \Gamma_{l}.
\end{align*}
Estimate the expectation using the mean of the training set, 
\begin{align*}
G^{(l)}(\hat{g}) \approx & \frac{1}{n} \sum_{i=1}^{n}\left(\llbracket {A}^i_{l-1} \rrbracket_{\mathrm{H}}^{\top} \llbracket {A}^i_{l-1} \rrbracket_{\mathrm{H}}\right) \otimes\frac{1}{n} \sum_{i=1}^{n}\left(\frac{1}{|\mathcal{T}|}\mathcal{D} {{S}^i_{l}}^{\top} \mathcal{D} {S}^i_{l}\right)\triangleq \hat{\Omega}_{l-1}\otimes \hat{\Gamma_{l}}.
\end{align*}
Furthermore, if the factors $\hat{\Omega}_{l-1}$ and $\hat{\Gamma}_{l}$ have eigen decomposition $Q_{\Omega} {\Lambda}_{{\Omega}} {Q}_{{\Omega}}^{\top}$ and ${Q}_{\Gamma}{\Lambda}_{\Gamma} {Q}_{\Gamma}^{\top}$, respectively, then the eigen decomposition of $\hat{\Omega}_{l-1}\otimes \hat{\Gamma_{l}}$ can be written as:  
\begin{align*}  
\hat{\Omega}_{l-1}\otimes \hat{\Gamma}_{l} & ={Q}_{  {\Omega}} {\Lambda}_{  {\Omega}} {Q}_{  {\Omega}}^{\top} \otimes {Q}_{  {\Gamma}} {\Lambda}_{  {\Gamma}} {Q}_{  {\Gamma}}^{\top} \\  
& =\left({Q}_{  {\Omega}} \otimes {Q}_{   {\Gamma}}\right)\left(  {\Lambda}_{   {\Omega}} \otimes  {\Lambda}_{   {\Gamma}}\right)\left( {Q}_{   {\Omega}} \otimes  {Q}_{   {\Gamma}}\right)^{\top}.  
\end{align*}  

Since subsequent inverse operations are required and the current approximation for $G^{(l)}(\hat{g})$ is PSD, we actually use a damped version as 
\begin{equation}\label{apa:ekfac:1}
    {\hat{G^{l}}(\hat{g})}^{-1} = \left(G_{l}\left(\hat{g}\right)+\lambda_{l} I_{d_{l}}\right)^{-1}=\left(Q_{\Omega_{l-1}} \otimes Q_{\Gamma_{l}}\right)\left(\Lambda_{\Omega_{l-1}} \otimes \Lambda_{\Gamma_{l}}+\lambda_{l} I_{d_{l}}\right)^{-1}\left(Q_{\Omega_{l-1}} \otimes Q_{\Gamma_{l}}\right)^{\mathrm{T}}.
\end{equation}

Besides, \cite{george2018fast} proposed a new method that corrects the error in equation \ref{apa:ekfac:1} which sets the $i$-th diagonal element of $\Lambda_{\Omega_{l-1}} \otimes \Lambda_{\Gamma_{l}}$ as
\begin{equation*}
    \Lambda_{ii}^{*} = n^{-1} \sum_{j =1}^n \left( \left( Q_{\Omega_{l-1}}\otimes Q_{\Gamma_{l}}\right)\nabla_{\theta_l}\ell_j\right)^2_i.
\end{equation*}

\section{Concept-label-level Influence}
\subsection{Proof of Concept-label-level Influence Function}
\label{sec:appendix:b}
We have a set of erroneous data $D_e$ and its associated index set $S_e\subseteq [n]\times [k]$ such that for each $(w, r)\in S_e$, we have $(x_w, y_w, c_w)\in D_e$ with $c_w^r$ is mislabeled and $\tilde{c}_w^r$ is its corrected concept label. Thus, our goal is to approximate the new CBM without retraining.

\paragraph{Proof Sketch.} Our goal is to edit $\hat{g}$ and $\hat{f}$ to $\hat{g}_{e}$ and $\hat{f}_{e}$. (i) First, we introduce new parameters $\hat{g}_{\epsilon, e}$ that minimize a modified loss function with a small perturbation $\epsilon$. (ii) Then, we perform a Newton step around $\hat{g}$ and obtain an estimate for $\hat{g}_{e}$. (iii) Then, we consider changing the concept predictor at one data point $(x_{i_c}, y_{i_c}, c_{i_c})$ and retraining the model to obtain a new label predictor $\hat{f}_{i_c}$, obtain an approximation for $\hat{f}_{i_c}$. (iv) Next, we iterate $i_c$ over $1, 2, \cdots, n$, sum all the equations together, and perform a Newton step around $\hat{f}$ to obtain an approximation for $\hat{f}_{e}$. (v) Finally, we bring the estimate of $\hat{g}$ into the equation for $\hat{f}_{e}$ to obtain the final approximation.

\begin{theorem}\label{app:thm:concept-label:g}
Define the gradient of the $j$-th concept predictor of the $i$-th data $x_i$ as 
\begin{equation}
    G^j_C(x_i,\bm{c}_i;{g}) \triangleq \nabla_{{g}}L^j_C\left({g}(x_i),\bm{c}_i\right),
\end{equation}
then the retrained concept predictor $\hat{g}_{e}$ defined by 
\begin{equation}\label{app:concept-label:g}
    \hat{g}_{e} = \argmin \left[\sum_{(i, j) \notin S_e} L^j_{C}\left(g(x_i),c_i\right) +\sum_{(i, j) \in S_e} L^j_{C}\left(g(x_i), \tilde{c}_i \right)\right], 
\end{equation}
can be approximated by $\bar{g}_{e}$, defined by: 
\begin{equation}\label{app:concept-label:g}
     \hat{g} -H^{-1}_{\hat{g}}  \cdot \sum_{(w,r)\in S_e} \left(  G^r_C(x_w,\tilde{c}_w;\hat{g}) - G^r_C(x_w,\bm{c}_w;\hat{g}) \right)
\end{equation}
where $H_{\hat{g}} = \nabla^2_{\hat{g}}  \sum_{i,j}  L^j_{C}(\hat{g}(x_i),c_i)$ is the Hessian matrix of the loss function with respect to $\hat{g}$.
\end{theorem}

\begin{proof}
For the index $(w, r) \in S_e$, indicating the $r$-th concept of the $w$-th data is wrong, we correct this concept $c_w^r$ to $\tilde{c}_w^r$. Rewrite $\hat{g}_{e}$ as
\begin{equation}\label{min_g}
    \hat{g}_{e} = \argmin \left[\sum_{i, j} L^j_{C}\left(g(x_i),c_i\right) +\sum_{(w,r)\in S_e}L^r_{C}\left(g(x_w),\tilde{c}_w\right)-\sum_{(w,r)\in S_e}L^r_{C}\left(g(x_w),c_w\right)\right].
\end{equation}

To approximate this effect, define new parameters $\hat{g}_{\epsilon, e}$ as
\begin{equation}\label{min}
    \hat{g}_{\epsilon, e} \triangleq \argmin 
    \left[
    \sum_{i,j}  L^j_{C}\left(g(x_i),c_i\right) 
    +
    \sum_{(w,r)\in S_e} \epsilon \cdot L^r_{C}\left(g(x_w),\tilde{c}_w\right)-\sum_{(w,r)\in S_e}\epsilon\cdot L^r_{C}\left(g(x_w),c_w\right)
    \right].
\end{equation}

Then, because $\hat{g}_{\epsilon, e}$ minimizes equation \ref{min}, we have
\begin{equation*}
\nabla_{\hat{g}} \sum_{i, j}L^j_{C}\left(\hat{g}_{\epsilon, e}(x_i),c_i\right) +\sum_{(w,r)\in S_e}\epsilon\cdot\nabla_{\hat{g}}L^r_{C}\left(\hat{g}_{\epsilon, e}(x_w),\tilde{c}_w\right)-\sum_{(w,r)\in S_e}\epsilon\cdot\nabla_{\hat{g}}L^r_{C}\left(\hat{g}_{\epsilon, e}(x_w),c_w\right) = 0.
\end{equation*}

Perform a Taylor expansion of the above equation at $\hat{g}$,
\begin{align}\label{tl_31}
     \nabla_{\hat{g}} \sum_{i,j}L^j_{C}\left(\hat{g}(x_i),c_i\right) &+  \sum_{(w,r)\in S_e}\epsilon \cdot\nabla_{\hat{g}}L^r_C\left(\hat{g}(x_w),\tilde{c}_w\right)-\sum_{(w,r)\in S_e}\epsilon\cdot\nabla_{\hat{g}}L^r_C\left(\hat{g}(x_w),c_w\right) \notag \\
    & +  \nabla^2_{\hat{g}}\sum_{i,j}
    L^j_C\left(\hat{g}(x_i),c_i\right) \cdot (\hat{g}_{\epsilon, e}-\hat{g})\approx 0. 
\end{align}

Because of equation \ref{app:concept-label:g}, the first term of equation \ref{tl_31} equals $0$. Then we have
\begin{equation*}
\hat{g}_{\epsilon, e} - \hat{g} = - \sum_{(w,r)\in S_e}\epsilon\cdot H^{-1}_{\hat{g}} \cdot  \left( \nabla_{\hat{g}} L^r_C \left(\hat{g}(x_w), \tilde{c}_w \right) - \nabla_{\hat{g}} L^r_C \left( \hat{g}(x_w),c_w \right) \right), 
\end{equation*}
where 
\begin{equation*}
    H_{\hat{g}} = \nabla^2_{\hat{g}}  \sum_{i,j}  L^j_C \left( \hat{g}(x_i),c_i \right).
\end{equation*}

Then, we do a Newton step around $\hat{g}$ and obtain
\begin{equation}\label{3-esti-g}
\hat{g}_{e} \approx \bar{g}_{e}  \triangleq \hat{g} -H^{-1}_{\hat{g}}  \cdot \sum_{(w,r)\in S_e} \left( \nabla_{\hat{g}}L^r_C\left(\hat{g}(x_w),\tilde{c}_w\right) - \nabla_{\hat{g}}L^r_C\left(\hat{g}(x_w),c_w\right) \right). 
\end{equation}
Define the gradient of the $j$-th concept predictor of the $i$-th data $x_i$ as 
\begin{equation}
    G^j_C(x_i,\bm{c}_i;{g}) \triangleq \nabla_{{g}}L^j_C\left({g}(x_i),\bm{c}_i\right),
\end{equation}
then the retrained concept predictor $\hat{g}_{e}$ defined by 
\begin{equation}\label{app:concept-label:g}
    \hat{g}_{e} = \argmin \left[\sum_{(i, j) \notin S_e} L^j_{C}\left(g(x_i),c_i\right) +\sum_{(i, j) \in S_e} L^j_{C}\left(g(x_i), \tilde{c}_i \right)\right], 
\end{equation}
can be approximated by $\bar{g}_{e}$, defined by: 
\begin{equation}\label{app:concept-label:g}
     \hat{g} -H^{-1}_{\hat{g}}  \cdot \sum_{(w,r)\in S_e} \left(  G^r_C(x_w,\tilde{c}_w;\hat{g}) - G^r_C(x_w,\bm{c}_w;\hat{g}) \right)
\end{equation}

\end{proof}

\begin{theorem}
Define the gradient of the label predictor of the $i$-th data $x_i$ as 
\begin{equation}
    G_Y(x_i;{g},f) \triangleq \nabla_{{f}}L_Y\left(f({g}(x_i)),y_i\right),
\end{equation}
then the retrained label predictor $\hat{f}_{e}$ defined by 
\begin{equation*}
    \hat{f}_{e} = \argmin \left[\sum_{i=1}^n L_{Y} \left(f\left(\hat{g}_{e}\left(x_i\right)\right), y_i\right)\right]
\end{equation*}
can be approximated by: 
\begin{align*}
       \hat{f} + H^{-1}_{\hat{f}} \cdot  \sum_{i=1}^n \left(G_Y(x_i;\hat{g},\hat{f})   -G_Y(x_i;\bar{g}_e,\hat{f})  \right),
   \end{align*}
where $H_{\hat{f}} = \nabla_{\hat{f}}\sum_{i=1}^n   G_Y(x_i;\hat{g},\hat{f})$ is the Hessian matrix, and $\bar{g}_{e}$ is given in Theorem \ref{app:thm:concept-label:g}.
\end{theorem}

\begin{proof}
Now we come to deduce the edited label predictor towards $\hat{f}_{e}$. 

First, we consider only changing the concept predictor at one data point $(x_{i_c}, y_{i_c}, c_{i_c})$ and retrain the model to obtain a new label predictor $\hat{f}_{i_c}$.
\begin{equation*}
    \hat{f}_{i_c} = \argmin \left[\sum_{\substack{i=1, i \neq i_c}}^n L_Y \left(f\left(\hat{g}\left(x_i\right)\right), y_i\right) + L_Y \left(f\left(\hat{g}_{e}\left(x_{i_c}\right)\right), y_{i_c}\right)\right].
\end{equation*}
We rewrite the above equation as follows:
\begin{equation*}
    \hat{f}_{i_c} = \argmin \left[\sum_{i=1}^nL_Y \left(f\left(\hat{g}\left(x_i\right)\right), y_i\right) + L_Y \left(f\left(\hat{g}_{e}\left(x_{i_c}\right)\right), y_{i_c}\right)-L_Y \left(f\left(\hat{g}\left(x_{i_c}\right)\right), y_{i_c}\right)
    \right].
\end{equation*}

We define $\hat{f}_{\epsilon, i_c}$ as:
\begin{equation*}
    \hat{f}_{\epsilon, i_c} = \argmin \left[\sum_{i=1}^nL_Y \left(f\left(\hat{g}\left(x_i\right)\right), y_i\right) + \epsilon\cdot L_Y \left(f\left(\hat{g}_{e}\left(x_{i_c}\right)\right), y_{i_c}\right)-\epsilon\cdot L_Y \left(f\left(\hat{g}\left(x_{i_c}\right)\right), y_{i_c}\right)
    \right].
\end{equation*}

Derive with respect to $f$ at both sides of the above equation. we have
\begin{equation*}
     \nabla_{\hat{f}}\sum_{i=1}^n L_Y \left(\hat{f}_{\epsilon, i_c}\left(\hat{g}\left(x_i\right)\right), y_i\right) +\epsilon\cdot\nabla_{\hat{f}}   L_Y \left(\hat{f}_{\epsilon, i_c}\left(\hat{g}_{e}\left(x_{i_c}\right)\right), y_{i_c}\right)-\epsilon\cdot\nabla_{\hat{f}} L_Y \left(\hat{f}_{\epsilon, i_c}\left(\hat{g}\left(x_{i_c}\right)\right), y_{i_c}\right) = 0
\end{equation*}

Perform a Taylor expansion of the above equation at $\hat{f}$,
\begin{align*}
    &\nabla_{\hat{f}}\sum_{i=1}^n L_Y \left(\hat{f}\left(\hat{g}\left(x_i\right)\right), y_i\right) +\epsilon\cdot \nabla_{\hat{f}} L_Y \left(\hat{f}\left(\hat{g}_{ e}\left(x_{i_c}\right)\right), y_{i_c}\right)\\
        &-\epsilon\cdot\nabla_{\hat{f}}  L_Y \left(\hat{f}\left(\hat{g}\left(x_{i_c}\right)\right), y_{i_c}\right) + \nabla^2_{\hat{f}}\sum_{i=1}^n L_Y \left(\hat{f}\left(\hat{g}\left(x_i\right)\right), y_i\right)\cdot \left(\hat{f}_{\epsilon, i_c}-\hat{f}\right)= 0 
\end{align*}

Then we have
\begin{equation*}
    \hat{f}_{\epsilon, i_c} - \hat{f} \approx -\epsilon\cdot H^{-1}_{\hat{f}} \cdot \nabla_f  \left( L_Y \left( \hat{f} \left( \hat{g}_{e} \left(x_{i_c} \right) \right), y_{i_c}\right)-  L_Y \left(  \hat{f}\left(\hat{g}\left(x_{i_c} \right) \right), y_{i_c} \right) \right),
\end{equation*}
where $H^{-1}_{\hat{f}} = \nabla^2_{\hat{f}}\sum_{i=1}^nL_Y \left(\hat{f}\left(\hat{g}\left(x_i\right)\right), y_i\right)$.

Iterate $i_c$ over $1, 2,\cdots, n$, and sum all the equations together, we can obtain:
\begin{equation*}
    \hat{f}_{\epsilon, e} - \hat{f} \approx  -\epsilon\cdot H^{-1}_{\hat{f}} \cdot \sum_{i=1}^n\nabla_f  \left( L_Y \left( \hat{f} \left( \hat{g}_{e} \left(x_{i} \right) \right), y_{i}\right)-  L_Y \left(  \hat{f}\left(\hat{g}\left(x_{i} \right) \right), y_{i} \right) \right).
\end{equation*}

Perform a Newton step around $\hat{f}$ and we have
\begin{equation}\label{3-esti-f}
    \hat{f}_{e} \approx \hat{f}  -H^{-1}_{\hat{f}}\cdot\sum_{i=1}^n\nabla_f  \left( L_Y \left( \hat{f} \left( \hat{g}_{e} \left(x_{i} \right) \right), y_{i}\right)-  L_Y\left(  \hat{f}\left(\hat{g}\left(x_{i} \right) \right), y_{i} \right) \right).
\end{equation}

Bringing the edited (\ref{3-esti-g}) of $g$ into equation (\ref{3-esti-f}), we have
\begin{align*}
        \hat{f}_{e} \approx & \hat{f}  -H^{-1}_{\hat{f}}\cdot\sum_{i=1}^n\nabla_f  \left( L_Y \left( \hat{f} \left( \bar{g}_{e} \left(x_{i} \right) \right), y_{i}\right)-  L_Y\left(  \hat{f}\left(\hat{g}\left(x_{i} \right) \right), y_{i} \right) \right)\triangleq\bar{f}_{e}.
\end{align*}
Define the gradient of the label predictor of the $i$-th data $x_i$ as 
\begin{equation}
    G_Y(x_i;{g},f) \triangleq \nabla_{{f}}L_Y\left(f({g}(x_i)),y_i\right),
\end{equation}
then the retrained label predictor $\hat{f}_{e}$ defined by \eqref{concept-label:f} can be approximated by $\bar{f}_{e}$, defined by: 
\begin{align*}
       \hat{f} + H^{-1}_{\hat{f}} \cdot  \sum_{i=1}^n \left(G_Y(x_i;\hat{g},\hat{f})   -G_Y(x_i;\bar{g}_e,\hat{f})  \right),
   \end{align*}
   where $H_{\hat{f}} = \nabla_{\hat{f}}\sum_{i=1}^n   G_Y(x_i;\hat{g},\hat{f})$ is the Hessian matrix, and $\bar{g}_{e}$ is given in Theorem \ref{app:thm:concept-label:g}.
   \end{proof}

   \subsection{Theoretical Bound for the Influence Function}\label{app:bound_cc}
 Consider the dataset $\mathcal{D}=\{(x_i,c_i,y_i\}{i=1}^n$, the loss function of the concept predictor $g$ is defined as:
 \begin{equation*}
\begin{split}
   L_\text{Total}(\mathcal{D};g)=\sum_{i=1}^n L_{C}(g(x_i),{c}_i)+ \frac{\delta}{2}\cdot \|g\|^2=
 \sum_{i=1}^n \sum_{j=1}^k L_{C}^j(g(x_i),{c_i}) + \frac{\delta}{2}\cdot \|g\|^2=
\sum_{i=1}^n\sum_{j=1}^k g^j(x_i)^\top \log({c_i}^j)+ \frac{\delta}{2}\cdot \|g\|^2.
\end{split}
\end{equation*}

Mathematically, we have a set of erroneous data $D_e$ and its associated index set $S_e\subseteq [n]\times [k]$ such that for each $(w, r)\in S_e$, we have $(x_w, y_w, {c}_w)\in D_e$ with $c_w^r$ is mislabeled and $\tilde{c}_w^r$ is corrected concept label. Thus, our goal is to estimate the retrained CBM. The retrained concept predictor and label predictor will be represented in the following manner. 
\begin{equation}\label{app:concept-label:g}
    \hat{g}_{e} = \argmin \left[\sum_{(i, j) \notin S_e} L^j_{C}\left(g(x_i),c_i\right) +\sum_{(i, j) \in S_e} L^j_{C}\left(g(x_i), \tilde{c}_i \right)+ \frac{\delta}{2}\cdot \|g\|^2\right], 
\end{equation}
Define the corrected dataset as $\mathcal{D}^*$. Then the loss function with the influence of erroneous data $D_e$ removed becomes
\begin{equation}\label{eq:approxiated_loss}
    \begin{split}
       L^-(\mathcal{D}^*; g) = \sum_{(i, j) \notin S_e} L^j_{C}\left(g(x_i),c_i\right) +\sum_{(i, j) \in S_e} L^j_{C}\left(g(x_i), \tilde{c}_i \right)+ \frac{\delta}{2}\cdot \|g\|^2.
    \end{split}
\end{equation}

Assume $\hat{g} = \argmin  L_\text{Total}(\mathcal{D};g)$ is the original model parameter, and $\hat{g}_{e}(\mathcal{D}^*)$ is the minimizer of $L^-(\mathcal{D}^*; g)$, which is obtained from retraining. Denote $\bar{g}_{e}(\mathcal{D}^*)$ as the updated model with the influence of erroneous data $D_e$ removed and is obtained by the influence function method in theorem \ref{app:thm:concept-label:g}, which is an estimation for $\hat{g}_{e}(\mathcal{D}^*)$.

To simplify the problem, we concentrate on the removal of erroneous data $D_e$ and neglect the process of adding the corrected data back. Once we obtain the bound for $\hat{g}_{e}(\mathcal{D}^*) - \bar{g}_{e}(\mathcal{D}^*)$ under this circumstance, the bound for the case where the corrected data is added back can naturally be derived using a similar approach. For brevity, we use the same notations.

Then, the loss function $L^-(\mathcal{D}^*; g)$ becomes
\begin{equation}\label{eq:approxiated_loss}
    \begin{split}
       L^-(\mathcal{D}^*; g) = \sum_{(i, j) \notin S_e} L^j_{C}\left(g(x_i),c_i\right)+ \frac{\delta}{2}\cdot \|g\|^2 =  L_\text{Total}(\mathcal{D};g) -\sum_{(i, j) \in S_e} L^j_{C}\left(g(x_i),c_i\right)  
    \end{split}
\end{equation}
And the definition of $\bar{g}_{e}(\mathcal{D}^*)$ becomes
\begin{equation}
     \hat{g} + H^{-1}_{\hat{g}}  \cdot \sum_{(w,r)\in S_e}    G^r_C(x_w,{c}_w;\hat{g}) 
\end{equation}
where $H_{\hat{g}} = \nabla^2_{\hat{g}}  \sum_{i,j}  L^j_{C}(\hat{g}(x_i),c_i) + \delta \cdot I$ is the Hessian matrix of the loss function with respect to $\hat{g}$. Here $\delta \cdot I$ is a small damping term for ensuring positive definiteness of the Hessian.
Introducing the damping term into the Hessian is essentially equivalent to adding a regularization term to the initial loss function. Consequently, $\delta$ can also be interpreted as the regularization strength.

In this part, we will study the error between the estimated influence given by the  theorem \ref{app:thm:concept-label:g} method and retraining. We use the parameter changes as the evaluation metric:
\begin{equation}\label{app:theorem_total}
    \left|\left(\bar{g}_{e} -\hat{g}\right) - \left( \hat{g}_e - \hat{g}\right)\right| =  \left|\bar{g}_{e} -  \hat{g}_e\right|
\end{equation}

\begin{assumption}The loss $L_{C}(x,c;g)$
\begin{equation*}
    L_{C}(x,c;g;j) =  L^j_C(g(x), c)
\end{equation*}
is convex and twice-differentiable in $g$, with positive regularization $\delta > 0$. There exists $C_H \in \mathbb{R}$ such that
$$\| \nabla^2_{g} L_{C}(x,c;g_1) - \nabla^2_{g} L_{C}(x,c;g_2)\|_{2} \leq C_H \| g_1 - g_2 \|_2$$
for all $(x, c) \in \mathcal{D}=\{(x_i,c_i)\}_{i=1}^n $, $j\in[k]$ and $g_1, g_2 \in \Gamma$. 
\end{assumption}

Then the function $L'(\mathcal{D}, S_e; g)$:
\begin{equation*}
    L'(\mathcal{D}, S_e; g) = \sum_{(i, j) \in S_e} L^j_{C}\left(g(x_i),c_i\right) = \sum_{(i, j) \in S_e} L_{C}(x_i,c_i;g;j)
\end{equation*}
is convex and twice-differentiable in $g$, with some positive regularization. Then we have
$$\| \nabla^2_{g}  L'(\mathcal{D}, S_e; g_1) - \nabla^2_{g}  L'(\mathcal{D}, S_e; g_2)\|_{2} \leq |S_e|\cdot C_H \| g_1 - g_2 \|_2$$
for  $g_1, g_2 \in \Gamma$.

\begin{corollary}
    \begin{equation*}
        \|\nabla^2_{g}  L^-(\mathcal{D}^*; g_1) - \nabla^2_{g}  L^-(\mathcal{D}^*; g_2)\|_2\leq  
     \left((nk+|S_e|)\cdot C_H \right)\|g_1-g_2\| 
    \end{equation*}
    Define $C_H^- \triangleq(nk+|S_e|)\cdot C_H$
\end{corollary}

\begin{definition}
Define $|\mathcal{D}|$ as the number of pairs 
\begin{equation*}
        C'_{L} = \left\| \nabla_{g} L'(\mathcal{D}, S_e; \hat{g})\right\|_2,
\end{equation*}
\begin{equation*}
    \sigma'_{\text{min}} = \text{smallest singular value of } \nabla^2_{g}  L^-(\mathcal{D}^*; \hat{g}),
\end{equation*}
\begin{equation*}
    \sigma_{\text{min}} = \text{smallest singular value of } \nabla^2_{g}  L_{\text{Total}}(\mathcal{D}; \hat{g}),
\end{equation*}
\end{definition}
Based on above corollaries and assumptions, we derive the following theorem.

\begin{theorem}\label{app:bound_cc_the}
    We obtain the error between the actual influence and our predicted influence as follows:
    \begin{equation*}
        \begin{split}
             &\left\|\hat{g}_e(\mathcal{D}^*) - \bar{g}_e(\mathcal{D}^*)\right\|\\
             \leq & \frac{C_H^-    {C'_{L}}^2}{2 (\sigma'_{\text{min}} + \delta)^3} + \left|\frac{2\delta+\sigma_{\text{min}}+\sigma'_{\text{min}}}{\left(\delta+ \sigma'_{\text{min}}\right)\cdot\left(\delta+ \sigma_{\text{min}}\right)}\right| \cdot C_L'.
        \end{split}
    \end{equation*}
\end{theorem}
\begin{proof}

We will use the one-step Newton approximation as an intermediate step. Define $\Delta g_{Nt}(\mathcal{D}^*)$ as
\begin{equation*}
    \Delta g_{Nt}(\mathcal{D}^*)\triangleq H_{\delta}^{-1}\cdot \nabla_{ g}L'(\mathcal{D}, S_e; \hat{g}),
\end{equation*}
 where $H_{\delta} = \delta \cdot I + \nabla_{ g}^2 L^-(\mathcal{D}^*;\hat{g})$ is the regularized empirical Hessian at $\hat{ g}$ but reweighed after removing the influence of wrong data. Then the one-step Newton approximation for $\hat{ g}(\mathcal{D}^*)$ is defined as $ g_{Nt}(\mathcal{D}^*) \triangleq \Delta g_{Nt}(\mathcal{D}^*) + \hat{ g}$.

In the following, we will separate the error between $\bar{g}_e(\mathcal{D}^*)$ and $\hat{g}_e(\mathcal{D}^*)$ into the following two parts:
\begin{equation*}
    \hat{g}_e(\mathcal{D}^*) - \bar{g}_e(\mathcal{D}^*) = \underbrace{\hat{g}_e(\mathcal{D}^*) - g_{Nt}(\mathcal{D}^*)}_{\text{Err}_{\text{Nt, act}}(\mathcal{D}^*)} + \underbrace{\left(g_{Nt}(\mathcal{D}^*)-\hat{ g}\right) - \left(\bar{g}_e(\mathcal{D}^*) - \hat{ g}\right)}_{\text{Err}_{\text{Nt, if}}(\mathcal{D}^*)}
\end{equation*}

Firstly, in {\bf Step $1$}, we will derive the bound for Newton-actual error ${\text{Err}_{\text{Nt, act}}(\mathcal{D}^*)}$.
Since $L^-( g)$ is strongly convex with parameter $\sigma'_{\text{min}} + \delta$ and minimized by 
$\hat{g}_e(\mathcal{D}^*)$, we can bound the distance
$\left\|\hat{g}_e(\mathcal{D}^*) - { g}_{Nt}(\mathcal{D}^*)\right\|_2$ in terms of the norm of the gradient at ${ g}_{Nt}$:
\begin{equation}\label{bound:1}
    \left\|\hat{g}_e(\mathcal{D}^*) - { g}_{Nt}(\mathcal{D}^*)\right\|_2 \leq \frac{2}{\sigma'_{\text{min}} + \delta} \left\|\nabla_{ g} L^- \left({ g}_{Nt}(\mathcal{D}^*)\right)\right\|_2
\end{equation}
Therefore, the problem reduces to bounding $\left\|\nabla_{ g} L^- \left({ g}_{Nt}(\mathcal{D}^*)\right)\right\|_2$.
Noting that $\nabla_{ g}L'(\hat{ g}) = -\nabla_{ g}L^-$. This is because $\hat{ g}$ minimizes $L^- + L'$, that is, $$\nabla_{ g}L^-(\hat{ g}) + \nabla_{ g}L'(\hat{ g}) = 0.$$ Recall that $\Delta g_{Nt}= H_{\delta}^{-1}\cdot \nabla_{ g}L'(\mathcal{D}, S_e; \hat{g}) = -H_{\delta}^{-1}\cdot \nabla_{ g}L^-(\mathcal{D}^*; \hat{g})$.
Given the above conditions, we can have this bound for $\text{Err}_{\text{Nt, act}}(-\mathcal{D}^*)$.
\begin{equation}\label{bound:2}
    \begin{split}
            &\left\|\nabla_{ g} L^- \left({ g}_{Nt}(\mathcal{D}^*)\right)\right\|_2\\
    = & \left\|\nabla_{ g} L^- \left(\hat{ g} + \Delta{ g}_{Nt}(\mathcal{D}^*)\right)\right\|_2\\
    = & \left\|\nabla_{ g} L^- \left(\hat{ g} + \Delta  g_{N_t}(\mathcal{D}^*)\right) - \nabla_{ g} L^- \left(\hat{ g} \right) -  \nabla_{ g}^2 L^- \left(\hat{ g}\right) \cdot  \Delta  g_{N_t}(\mathcal{D}^*)\right\|_2\\
     = & \left\|\int_0^1 \left(\nabla_{ g}^2 L^- \left(\hat{ g} + t\cdot \Delta g_{Nt}(\mathcal{D}^*)\right) - \nabla_{ g}^2 L^- \left(\hat{ g}\right)\right) \Delta g_{Nt}(\mathcal{D}^*) \, dt\right\|_2\\
    \leq & \frac{C_H^-}{2} \left\|\Delta  g_{Nt}(\mathcal{D}^{*})\right\|_2^2 =  \frac{C_H^-}{2} \left\|\left[\nabla_{ g}^2 L^-(\hat{ g})\right]^{-1} \nabla_{ g} L^-(\hat{ g})\right\|_2^2\\
    \leq & \frac{C_{H}^{-}}{2 (\sigma'_{\text{min}} + \delta)^2} \left\|\nabla_{ g} L^-(\hat{ g})\right\|_2^2 = \frac{C_H^-}{2 (\sigma'_{\text{min}} + \delta)^2} \left\|\nabla_{ g} L'(\hat{ g})\right\|_2^2\\
    \leq &\frac{C_{H}^{-}    {C'_{L}}^2}{2 (\sigma'_{\text{min}} + \delta)^2}.
    \end{split}
\end{equation}

Now we come to {\bf Step $2$} to bound ${\text{Err}_{\text{Nt, if}}(-\mathcal{D}^*)}$, and we will bound the difference in parameter change between Newton and our ECBM method.
\begin{align*}
        &\left\|{\left(g_{Nt}(\mathcal{D}^*)-\hat{ g}\right) - \left(\bar{g}_e(\mathcal{D}^*) - \hat{ g}\right)}\right\| \\
        =& \left\| \left[\left({\delta \cdot I+ \nabla_{ g}^2 L^- \left(\hat{ g}\right)}\right)^{-1} + \left({\delta \cdot I+ \nabla_{ g}^2 L_{\text{Total}} \left(\hat{ g}\right)}\right)^{-1}\right]\cdot \nabla_{ g}L'(\mathcal{D}, S_e; \hat{g})\right\|
\end{align*}
For simplification, we use matrix $A$, $B$ for the following substitutions:
\begin{align*}
    A = {\delta \cdot I+ \nabla_{ g}^2 L^- \left(\hat{ g}\right)}\\
    B = {\delta \cdot I+ \nabla_{ g}^2 L_{\text{Total}} \left(\hat{ g}\right)}
\end{align*}
And $A$ and $B$ are positive definite matrices with the following properties
\begin{align*}
\delta + \sigma'_{\text{min}} \prec A \prec \delta +   \sigma'_{\text{max}}\\
\delta + \sigma_{\text{min}} \prec B \prec \delta +   \sigma_{\text{max}}\\
\end{align*}
Therefore, we have
\begin{equation}\label{bound:3}
    \begin{split}
             &\left\|{\left(g_{Nt}(\mathcal{D}^*)-\hat{ g}\right) - \left(\bar{g}_e(\mathcal{D}^*) - \hat{ g}\right)}\right\| \\
     =&\left\|\left(A^{-1}+B^{-1}\right)\cdot \nabla_{ g}L^-(\mathcal{D}^*; \hat{g})\right\| \\
     \leq & \left\|A^{-1}+B^{-1}\right\|\cdot \left\|\nabla_{ g}L^-(\mathcal{D}^*; \hat{g})\right\|\\
     \leq & \left|\frac{2\delta+\sigma_{\text{min}}+\sigma'_{\text{min}}}{\left(\delta+ \sigma'_{\text{min}}\right)\cdot\left(\delta+ \sigma_{\text{min}}\right)}\right|\cdot\left\|\nabla_{ g}L^-(\mathcal{D}^*; \hat{g})\right\|\\
     \leq& \left|\frac{2\delta+\sigma_{\text{min}}+\sigma'_{\text{min}}}{\left(\delta+ \sigma'_{\text{min}}\right)\cdot\left(\delta+ \sigma_{\text{min}}\right)}\right| \cdot C_L'
    \end{split}
\end{equation}
By combining the conclusions from Step I and Step II in Equations \ref{bound:1}, \ref{bound:2} and \ref{bound:3}, we obtain the error between the actual influence and our predicted influence as follows:
\begin{equation*}
    \begin{split}
         &\left\|\hat{g}_e(\mathcal{D}^*) - \bar{g}_e(\mathcal{D}^*)\right\|\\
         \leq & \frac{C_H^-    {C'_{L}}^2}{2 (\sigma'_{\text{min}} + \delta)^3} + \left|\frac{2\delta+\sigma_{\text{min}}+\sigma'_{\text{min}}}{\left(\delta+ \sigma'_{\text{min}}\right)\cdot\left(\delta+ \sigma_{\text{min}}\right)}\right| \cdot C_L'.
    \end{split}
\end{equation*}
\end{proof}

\begin{remark}
Theorem \ref{app:bound_cc_the} reveals one key finding about influence function estimation: The estimation error scales inversely with the regularization parameter $\delta$ ($ \mathcal{O}(1/\delta)$), indicating that increased regularization improves approximation accuracy.
\end{remark}

\begin{remark}
 In CBM, retraining is the most accurate way to handle the removal of a training data point. For the concept predictor, we derive a theoretical error bound for an influence function-based approximation. However, the label predictor differs. As a single-layer linear model, the label predictor is computationally inexpensive to retrain. However, its input depends on the concept predictor, making theoretical analysis challenging due to: (1) Input dependency: Changes in the concept predictor affect the label predictor's input, coupling their updates. (2) Error propagation: Errors from the concept predictor propagate to the label predictor, introducing complex interactions. Given the label predictor's low retraining cost, direct retraining is more practical and accurate. Thus, we focus our theoretical analysis on the concept predictor.
\end{remark}

\section{Concept-level Influence}
\subsection{Proof of Concept-level Influence Function}
\label{sec:appendix:c}
We address situations that delete $p_r$ for $r\in M$ concept removed dataset. Our goal is to estimate $\hat{g}_{-p_M}$, $\hat{f}_{-p_M}$, which is the concept and label predictor trained on the $p_r$ for $r\in M$ concept removed dataset. 

\paragraph{Proof Sketch.} The main ideas are as follows: (i) First, we define a new predictor $\hat{g}^*_{p_M}$, which has the same dimension as $\hat{g}$ and the same output as $\hat{g}_{-p_M}$. Then deduce an approximation for $\hat{g}^*_{p_M}$. (ii) Then, we consider setting $p_r=0$ instead of removing it, we get $\hat{f}_{p_M=0}$, which is equivalent to $\hat{f}_{-p_M}$ according to lemma \ref{apc:lm:1}. We estimate this new predictor as a substitute. (iii) Next, we assume we only use the updated concept predictor $\hat{g}_{p_M}^*$ for one data $(x_{i_r}, y_{i_r}, c_{i_r})$ and obtain a new label predictor $\hat{f}_{ir}$, and obtain a one-step Newtonian iterative approximation of $\hat{f}_{ir}$ with respect to $\hat{f}$. (iv) Finally, we repeat the above process for all data points and combine the estimate of $\hat{g}$ in Theorem $\ref{apc:thm:1}$, we obtain a closed-form solution of the influence function for $\hat{f}$.

First, we introduce our following lemma:
\begin{lemma}\label{apc:lm:1}
For the concept bottleneck model, if the label predictor utilizes linear transformations of the form $\hat{f} \cdot c$ with input $c$, then, for each $r\in M$, we remove the $r$-th concept from c and denote the new input as $c^{\prime}$. Set the $r$-th concept to 0 and denote the new input as $c^0$. Then we have $\hat{f}_{-p_M} \cdot c^{\prime} = \hat{f}_{p_M=0} \cdot c^0$ for any c. 
\end{lemma}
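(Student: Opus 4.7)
The plan is to exploit linearity of the label predictor to reduce both sides of the claimed identity to the same optimization problem on the non-removed coordinates. First, since $\hat{f}$ is a linear transformation $\mathbb{R}^k \to \mathbb{R}^{d_z}$, I would represent $\hat{f}_{p_M=0}$ as a matrix $W\in\mathbb{R}^{d_z\times k}$ and $\hat{f}_{-p_M}$ as a matrix $W'\in\mathbb{R}^{d_z\times(k-|M|)}$. For any vector $c\in\mathbb{R}^k$, the vector $c^0$ agrees with $c$ outside $M$ and is zero on $M$, while $c'\in\mathbb{R}^{k-|M|}$ is the sub-vector of $c$ obtained by deleting the coordinates indexed by $M$.

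Next I would split the columns of $W$ into two blocks according to whether the column index lies in $M$ or not. Writing $W = [W_{M}\ \vert\ W_{\bar M}]$ (after reordering columns) and noting that $c^0$ has zeros in the $M$-block, we get
\begin{equation*}
Wc^0 \;=\; W_{\bar M}\, c',
\end{equation*}
so the $M$-columns of $W$ never enter the prediction. The same phenomenon occurs inside the training objective that defines $\hat{f}_{p_M=0}$: since the input to $f$ is $\hat{g}^*_{-p_M}(x_i)$, which by construction is zero on the $M$-coordinates, the loss $\sum_i L_{Y_i}(f,\hat{g}^*_{-p_M})$ depends only on the sub-matrix $W_{\bar M}$.

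I would then compare the two optimization problems. After the column split, the minimization defining $\hat{f}_{p_M=0}$ reduces to
\begin{equation*}
\min_{W_{\bar M}}\ \sum_{i=1}^n L_Y\!\left(W_{\bar M}\,\bigl[\hat{g}^*_{-p_M}(x_i)\bigr]_{\bar M},\, y_i\right),
\end{equation*}
and by the construction of $\hat{g}^*_{-p_M}$ via the map $\mathrm{P}$ in the preceding subsection, $[\hat{g}^*_{-p_M}(x_i)]_{\bar M}$ equals $\hat{g}_{-p_M}(x_i)\in\mathbb{R}^{k-|M|}$ entry-by-entry. Therefore this optimization problem is \emph{identical} to the one defining $\hat{f}_{-p_M}$, giving $W_{\bar M} = W'$. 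Combined with $Wc^0 = W_{\bar M}c'$, this yields $\hat{f}_{p_M=0}\cdot c^0 = \hat{f}_{-p_M}\cdot c'$.

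The main technical wrinkle is that the minimizer $W$ is not unique (the $M$-columns of $W$ are completely unconstrained by the objective and can take arbitrary values). I would address this by noting that the lemma's claim only concerns the product $Wc^0$, which is invariant under any choice of $W_M$; thus non-uniqueness is harmless and any representative of $\hat{f}_{p_M=0}$ gives the same prediction. A brief remark that this is exactly where the linearity hypothesis on $f$ is used — a nonlinear predictor could mix the zero coordinates with the non-zero ones — will close the argument.
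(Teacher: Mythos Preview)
Your proposal is correct and captures the same core observation as the paper: for a linear predictor, zeroing the $M$-coordinates of the input makes the corresponding columns of the weight matrix irrelevant, so the objective for $\hat{f}_{p_M=0}$ depends only on the $\bar M$-block of columns, and that block solves exactly the same problem as $\hat{f}_{-p_M}$.

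The one difference worth flagging is in how the equality of the two solutions is concluded. The paper does not argue purely at the level of minimizers; instead it matches the gradients, $\partial L_Y(\theta,c^0)/\partial\theta = \partial L_Y(P(\theta),c')/\partial\theta'$, and then invokes ``if the same initialization is performed'' to conclude that the trained predictors agree. Your argument instead identifies the two optimization problems directly and pushes the non-uniqueness issue into the observation that only the product $Wc^0$ matters, which is invariant under the unconstrained $M$-columns. Both routes arrive at the same place; yours is slightly cleaner in that it does not rely on a training-dynamics assumption, while the paper's phrasing makes explicit that in practice the two predictors are obtained by the same optimization procedure on identified parameters.
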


\begin{proof}
Assume the parameter space of $\hat{f}_{-p_M}$ and $\hat{f}_{p_M=0}$ are $\Gamma$ and $\Gamma_0$, respectively, then there exists a surjection  $P:\Gamma\rightarrow\Gamma_0$. For any $\theta\in\Gamma$, $P(\theta)$ is the operation that removes the $r$-th row of $\theta$ for $r\in M$. Then we have: 
\begin{equation*}
P(\theta) \cdot c^{\prime} = \sum_{t\notin M} \theta[j] 
\cdot c^{\prime}[j] = \sum_{t} \theta[t]  \mathbb{I} \{ t \notin M \} c[t] =\theta \cdot c^0. 
\end{equation*}
Thus, the loss function $L_Y (\theta, c^0) = L_Y(P(\theta), c^{\prime})$ of both models is the same for every sample in the second stage. Besides, by formula derivation, we have, for $\theta^{\prime}\in\Gamma_0$, for any $\theta$ in $P^{-1}(\theta^{\prime})$,
\begin{equation*}
    \frac{\partial L_Y (\theta, c^0)}{\partial \theta} = \frac{\partial L_Y(P(\theta), c^{\prime})}{\partial \theta^{\prime}}
\end{equation*}
Thus, if the same initialization is performed, $\hat{f}_{-p_M} \cdot c^{\prime} = \hat{f}_{p_M = 0} \cdot c^0$ for any $c$ in the dataset.
\end{proof}

\begin{theorem}\label{apc:thm:1}
For the retrained concept predictor $\hat{g}_{-p_M}$ defined as:
\begin{equation}\label{app:concept-level:g}
\hat{g}_{-p_M} = \argmin_{g'} \sum_{j\notin M}\sum_{i=1}^n L^j_{C}({g'}(x_i),c_i), 
\end{equation}
we map it to $\hat{g}^{*}_{-p_M}$ as 
\begin{equation}\label{app:concept-level:g^*}
\hat{g}^{*}_{-p_M} = \argmin_{g^{\prime} \in T_0} \sum_{j \notin M} \sum_{i=1}^{n} L^j_{C} ( g'(x_i), c_i).
\end{equation}
And we can edit the initial $\hat{g}$ to $\hat{g}^*_{-p_M}$, defined as:
\begin{equation*}
    \bar{g}^*_{-p_M}\triangleq \hat{g}  - H^{-1}_{\hat{g}} \cdot \sum_{j\notin M}\sum_{i=1}^n D_C^j(x_i,c_i;\hat{g}),
\end{equation*}
where $H_{\hat{g}} = \nabla_{g} \sum_{j\notin M} \sum_{i=1}^{n} L^j_{C} ( \hat{g} (x_i), c_i)$.
Then, by removing all zero rows inserted during the mapping phase, we can naturally approximate $\hat{g}_{-p_M}\approx  \mathrm{P}^{-1}(\hat{g}^{*}_{-p_M})$.
\end{theorem}

\begin{theorem}\label{apc:thm:1}
For the retrained concept predictor $\hat{g}_{-p_M}$ defined by 
\begin{equation*}
\hat{g}_{-p_M} =  \argmin_{g'} \sum_{j\notin M}\sum_{i=1}^n L^j_{C}({g'}(x_i),c_i),    
\end{equation*}
we map it to $\hat{g}^{*}_{-p_M}$ as
\begin{equation*}
\hat{g}^{*}_{-p_M} = \argmin_{g^{\prime} \in T_0} \sum_{j \notin M} \sum_{i=1}^{n} L^j_{C} ( g'(x_i), c_i).
\end{equation*}
And we can edit the initial $\hat{g}$ to $\hat{g}^*_{-p_M}$, defined as:
\begin{equation}\label{app:concept-level:g}
    \bar{g}_{-p_M}\triangleq \hat{g}  - H^{-1}_{\hat{g}} \cdot \sum_{j\notin M}\sum_{i=1}^n D_C^j(x_i,c_i;\hat{g}),
\end{equation}
where $H_{\hat{g}} = \nabla_{g} \sum_{j\notin M} \sum_{i=1}^{n} D^j_{C} (x_i, c_i; \hat{g})$.
Then, by removing all zero rows inserted during the mapping phase, we can naturally approximate $\hat{g}_{-p_M}\approx  \mathrm{P}^{-1}(\hat{g}^{*}_{-p_M})$.
\end{theorem}

\begin{proof}
At this level, we consider the scenario that removes a set of mislabeled concepts or introduces new ones. Because after removing concepts from all the data, the new concept predictor has a different dimension from the original. We denote $g^j(x_i)$ as the $j$-th concept predictor with $x_i$, and $c_i^j$ as the $j$-th concept in data $z_i$. For simplicity, we treat $g$ as a collection of $k$ concept predictors and separate different columns as a vector $g^j(x_i)$. Actually, the neural network gets $g$ as a whole. 

For the comparative purpose, we introduce a new notation $\hat{g}_{-p_M}^{*}$. Specifically, we define weights of $\hat{g}$ and $\hat{g}_{-p_M}^{*}$ for the last layer of the neural network as follows.
\begin{equation*}
\hat{g}_{-p_M} (x) =
\underbrace{
\left(
\begin{array}{cccc}
  w_{11} & w_{12} & \cdots & w_{1d_i}  \\
  w_{21} & w_{22} & \cdots & w_{2d_i} \\
  \vdots & \vdots &  & \vdots \\
  w_{(k-1)1} & w_{(k-1)2} & \cdots & w_{(k-1)d_i}
\end{array}
\right)
}_{{(k-1)} \times d_i} \cdot \underbrace{ 
\left(
\begin{array}{c}
x^{1} \\ x^{2} \\ \vdots \\x^{d_i}
\end{array}
\right)
}_{d_i \times 1}
=
\underbrace{
\left(
\begin{array}{c}
c_1 \\ \vdots \\ c_{r-1}  \\c_{r+1} \\ \vdots \\ c_k
\end{array}
\right)
}_{(k-1) \times 1}
\end{equation*}

\begin{equation*}
\hat{g}^{*}_{-p_M} (x) =
\underbrace{
\left(
\begin{array}{cccc}
  w_{11} & w_{12} & \cdots & w_{1d_i}  \\
  \vdots & \vdots &  & \vdots \\
  w_{(r-1)1} & w_{(r-1)2} & \cdots & w_{(r-1)d_i} \\
  0 & 0 & \cdots & 0 \\
  w_{(r+1)1} & w_{(r+1)2} & \cdots & w_{(r+1)d_i} \\
  \vdots & \vdots &  & \vdots \\
  w_{k1} & w_{k2} & \cdots & w_{kd_i}
\end{array}
\right)
}_{ k \times d_i} \cdot \underbrace{ 
\left(
\begin{array}{c}
x^{1} \\  \vdots \\ x^{r-1} \\ x^{r} \\ x^{r+1} \\ \vdots \\x^{d_i}
\end{array}
\right)
}_{d_i \times 1}
=
\underbrace{
\left(
\begin{array}{c}
c_1 \\ \vdots \\ c_{r-1} \\ 0 \\c_{r+1} \\ \vdots \\ c_k
\end{array}
\right)
}_{k \times 1},
\end{equation*}
where $r$ is an index from the index set $M$.

Firstly, we want to edit to  $\hat{g}^{*}_{-p_M}\in T_0 = \{w_{\text{final}} = 0\}  \subseteq T$ based on $\hat{g}$, where $w_{\text{final}}$ is the parameter of the final layer of neural network. Let us take a look at the definition of $\hat{g}^{*}_{-p_M}$:
\begin{equation*}
\hat{g}^{*}_{-p_M} = \argmin_{g' \in T_0} \sum_{j\notin M} \sum_{i=1}^{n} L^{j}_C ( g'(x_i), c_i).
\end{equation*}

Then, we separate the $r$-th concept-related item from the rest and rewrite $\hat{g}$ as the following form:
\begin{equation*}
\hat{g} = \argmin_{g \in T} \left[\sum_{j \notin M} \sum_{i=1}^{n} L^j_C ( g(x_i), c_i) + \sum_{r\in M}\sum_{i=1}^{n} L^r_C ( g(x_i), c_i)\right].
\end{equation*}

Then, if the $r$-th concept part is up-weighted by some small $\epsilon$, this gives us the new parameters 
$\hat{g}_{\epsilon, p_M}$, which we will abbreviate as $\hat{g}_{\epsilon}$ below. 
\begin{equation*}
\hat{g}_{\epsilon, p_M} \triangleq \argmin_{g \in T} \left[\sum_{j\notin M} \sum_{i=1}^{n} L^j_C ( g(x_i), c_i) + \epsilon \cdot \sum_{r\in M} \sum_{i=1}^{n} L^r_C ( g(x_i), c_i) \right].
\end{equation*}

Obviously, when $\epsilon \rightarrow 0$, $\hat{g}_{\epsilon} \to \hat{g}^{*}_{-p_M}$. We can obtain the minimization conditions from the definitions above.
\begin{equation}
\label{con:1}
\nabla_{\hat{g}^{*}_{-p_M}} \sum_{j\notin M} \sum_{i=1}^{n} L^j_{C} ( \hat{g}^{*}_{-p_M} (x_i), c_i) = 0.
\end{equation}
\begin{equation*}
\nabla_{\hat{g}_{\epsilon}} \sum_{j\notin M} \sum_{i=1}^{n} L^j_{C} (\hat{g}_{\epsilon}(x_i), c_i) + \epsilon \cdot \nabla_{\hat{g}_{\epsilon}} \sum_{r\in M}\sum_{i=1}^{n} L^r_{C} ( \hat{g}_{\epsilon}(x_i), c_i) =0.
\end{equation*}

Perform a first-order Taylor expansion of equation \ref{con:1} with respect to $\hat{g}_{\epsilon}$, then we get
\begin{equation*}
\nabla_{g} \sum_{ j\notin M} \sum_{i=1}^{n} L^j_{C} ( \hat{g}_{\epsilon}(x_i), c_i) +  \nabla^2_{g} \sum_{j\notin M} \sum_{i=1}^{n} L^j_{C} ( \hat{g}_{\epsilon}(x_i), c_i) \cdot (\hat{g}^{*}_{-p_M} - \hat{g}_{\epsilon})  \approx 0.
\end{equation*}
Then we have
\begin{equation*}
\hat{g}^{*}_{-p_M} - \hat{g}_{\epsilon} = - H^{-1}_{\hat{g}_{\epsilon}} \cdot \nabla_g \sum_{j\notin M} \sum_{i=1}^{n} L^j_{C} ( \hat{g}_{\epsilon}(x_i), c_i).
\end{equation*}
Where $H_{\hat{g}_{\epsilon}} = \nabla^2_{g} \sum_{j\notin M} \sum_{i=1}^{n} L^j_{C} ( \hat{g}_{\epsilon} (x_i), c_i)$. 

We can see that: 

When $\epsilon=0$,
\begin{equation*}
\hat{g}_{\epsilon}=\hat{g}^{*}_{-p_M},    
\end{equation*}

When $\epsilon=1$, $\hat{g}_{\epsilon}=\hat{g}$, 
\begin{equation*}
\hat{g}^{*}_{-p_M} - \hat{g} \approx - H^{-1}_{\hat{g}} \cdot \nabla_g \sum_{j\notin M} \sum_{i=1}^{n} L^j_{C} ( \hat{g} (x_i), c_i ),
\end{equation*}
where $H_{\hat{g}} = \nabla^2_{g} \sum_{j\notin M} \sum_{i=1}^{n} L^j_{C} ( \hat{g} (x_i), c_i)$.

Then, an approximation of $\hat{g}^{*}_{-p_M}$ is obtained.
\begin{equation}
    \hat{g}^{*}_{-p_M} \approx \hat{g}  - H^{-1}_{\hat{g}} \cdot \nabla_g \sum_{j\notin M} \sum_{i=1}^{n} L^j_{C} ( \hat{g} (x_i), c_i).
\end{equation}
Recalling the definition of the gradient:
\begin{equation*}
    G_C^j(x_i,c_i;\hat{g}) = L^j_{C} ( \hat{g} (x_i), c_i) )= \hat{g}^j (x_i)^\top\cdot\log(c_i^j).
\end{equation*}
Then the approximation of $\hat{g}^{*}_{-p_M}$ becomes
\begin{equation*}
    \bar{g}_{-p_M}\triangleq \hat{g}  - H^{-1}_{\hat{g}} \cdot \sum_{j\notin M}\sum_{i=1}^n G_C^j(x_i,c_i;\hat{g}),
\end{equation*}
\end{proof}

\begin{theorem}
For the retrained label predictor $\hat{f}_{-p_M}$ defined as
\begin{equation*}
\hat{f}_{-p_M}=\argmin_{f^{\prime}} \sum_{i=1}^{n} L_{Y}=\argmin_{f'} \sum_{i=1}^{n} L_Y(f^{\prime}(\hat{g}_{-p_M}(x_{i})), y_{i}),
\end{equation*}
We can consider its equivalent version $\hat{f}_{p_M=0}$ as:
\begin{equation*}
    \hat{f}_{p_M=0} = \argmin_{f} \sum_{i = 1}^n L_{Y} \left(f\left(\hat{g}^*_{-p_M}(x_i)\right),y_i\right),
\end{equation*}
which can be edited by
\begin{equation*}
    \hat{f}_{p_M=0} \approx \bar{f}_{p_M=0} \triangleq \hat{f}-H_{\hat{f}}^{-1} \cdot    \sum_{l=1}^{n}G_Y(x_l;\bar{g}^*_{-p_M},\hat{f}), 
\end{equation*}
where $H_{\hat{f}} = \nabla_{\hat{f}}\sum_{i=1}^n G_Y(x_l;\bar{g}^*_{-p_M},\hat{f})$ is the Hessian matrix. Deleting the $r$-th dimension of $\bar{f}_{p_M=0}$ for $r\in M$, then we can map it to $\bar{f}_{-p_M}$, which is the approximation of the final edited label predictor $\hat{f}_{-p_M}$ under concept level.
\end{theorem}

\begin{proof}
Now, we come to the approximation of $\hat{f}_{-p_M}$. Noticing that the input dimension of $f$ decreases to $k-|M|$. We consider setting $p_r=0$ for all data points in the training phase of the label predictor and get another optimal model $\hat{f}_{p_M=0}$. From lemma \ref{apc:lm:1}, we know that for the same input $x$, $\hat{f}_{p_M=0}(x) = \hat{f}_{-p_M}$. And the values of the corresponding parameters in $\hat{f}_{p_M=0}$ and $\hat{f}_{-p_M}$ are equal.

Now, let us consider how to edit the initial $\hat{f}$ to $\hat{f}_{p_M=0}$.
Firstly, assume we only use the updated concept predictor $\hat{g}^{*}_{-p_M}$ for one data $(x_{i_r}, y_{i_r}, c_{i_r})$ and obtain the following $\hat{f}_{ir}$, which is denoted as
\begin{equation*}
    \hat{f}_{ir} = \argmin_f \left[ \sum_{i=1}^{n} L_Y( f( \hat{g} (x_i)), y_i)  +  L_Y ( f( \hat{g}^{*}_{-p_M} (x_{ir})), y_{ir})  -  L_Y ( f( \hat{g}(x_{ir})), y_{ir}) \right].
\end{equation*}

Then up-weight the $i_r$-th data by some small $\epsilon$ and have the following new parameters:
\begin{equation*}
\hat{f}_{\epsilon,ir} = \argmin_f \left[ \sum_{i=1}^{n}  L_Y( f( \hat{g} (x_i)), y_i)  + \epsilon \cdot L_Y ( f( \hat{g}^{*}_{-p_M} (x_{ir})), y_{ir})  -  \epsilon \cdot  L_Y ( f( \hat{g}(x_{ir})), y_{ir}) \right].
\end{equation*}

Deduce the minimized condition subsequently,
\begin{equation*}
 \nabla_f \sum_{i=1}^{n}  L_Y( \hat{f}_{ir}( \hat{g} (x_i)), y_i)  + \epsilon \cdot \nabla_f  L_Y ( \hat{f}_{ir} ( \hat{g}^{*}_{-p_M} (x_{ir})), y_{ir})  - \epsilon \cdot \nabla_f  L_Y ( \hat{f}_{ir}( \hat{g}(x_{ir})), y_{ir}) = 0.
\end{equation*}

If we expand first term of $\hat{f}$, which $\hat{f}_{ir, \epsilon} \to \hat{f} (\epsilon \to 0) $, then
\begin{align*}
 \nabla_f \sum_{i=1}^{n}  L_Y\left( \hat{f}( \hat{g} (x_i)), y_i\right) & + \epsilon \cdot \nabla_f  L_Y ( \hat{f} ( \hat{g}^{*}_{-p_M} (x_{ir})), y_{ir})  - \epsilon \cdot \nabla_f  L_Y ( \hat{f}( \hat{g}(x_{ir})), y_{ir})  \\
& + \left(   \nabla^2_f  \sum_{i=1}^{n} L_Y\left( \hat{f}( \hat{g} (x_i)), y_i\right)    \right) \cdot (\hat{f}_{ir, \epsilon} - \hat{f} )= 0.
\end{align*}

Note that $\nabla_f \sum_{i=1}^{n} L_Y( \hat{f}( \hat{g} (x_i)), y_i) = 0$. Thus we have
\begin{equation*}
\hat{f}_{ir, \epsilon} - \hat{f} = H^{-1}_{\hat{f}} \cdot \epsilon \left(\nabla_f  L_Y ( \hat{f} ( \hat{g}^{*}_{-p_M} (x_{ir})), y_{ir})  -  \nabla_f  L_Y ( \hat{f}( \hat{g}(x_{ir})), y_{ir}) \right).
\end{equation*}

We conclude that 
\begin{equation*} \left.\frac{\mathrm{d}\hat{f}_{\epsilon,ir}}{\mathrm{d}\epsilon}\right|_{\epsilon = 0} = H^{-1}_{\hat{f}} \cdot  \left( \nabla_{\hat{f}}  L_Y ( \hat{f}( \hat{g}^{*}_{-p_M} (x_{ir})), y_{ir})- \nabla_{\hat{f}}  L_Y ( \hat{f}( \hat{g}(x_{ir})), y_{ir}) \right).
\end{equation*}

Perform a one-step Newtonian iteration at $\hat{f}$ and we get the approximation of $\hat{f}_{i_r}$.
\begin{equation*}
    \hat{f}_{ir} \approx \hat{f} + H^{-1}_{\hat{f}} \cdot \left( \nabla_{\hat{f}}  L_Y ( \hat{f}( \hat{g}(x_{ir})), y_{ir}) - \nabla_{\hat{f}}  L_Y ( \hat{f}( \hat{g}^{*}_{-p_M} (x_{ir})), y_{ir}) \right).
\end{equation*}

Reconsider the definition of $\hat{f}_{i_r}$, we use the updated concept predictor $\hat{g}_{-p_M}^*$ for one data $(x_{i_r}, y_{i_r}, c_{i_r})$. Now we carry out this operation for all the other data and estimate $\hat{f}_{p_M=0}$. Combining the minimization condition from the definition of $\hat{f}$, we have
\begin{align}
    \hat{f}_{p_M=0} \approx & \hat{f} + H^{-1}_{\hat{f}} \cdot \left( \nabla_{\hat{f}}  
 \sum_{i=1}^{n}  L_Y ( \hat{f}( \hat{g}(x_{i})), y_{i}) - \nabla_{\hat{f}} \sum_{i=1}^{n}  L_Y ( \hat{f}( \hat{g}^{*}_{-p_M} (x_{i})), y_{i}) \right) \notag \\
= & \hat{f} +  H^{-1}_{\hat{f}} \cdot \left(  - \nabla_{\hat{f}} \sum_{i=1}^{n}  L_Y ( \hat{f}( \hat{g}^{*}_{-p_M} (x_{i})), y_{i}) \right)  \notag \\
= & \hat{f} - H^{-1}_{\hat{f}}\sum_{l=1}^n \nabla_{\hat{f}} L_Y (\hat{f}(\hat{g}^{*}_{-p_M}(x_l)),y_l). \label{eq:10}
\end{align}

Theorem \ref{apc:thm:1} gives us the edited version of $\hat{g}^{*}_{-p_M}$. Substitute it into equation \ref{eq:10}, and we get the final closed-form edited label predictor under concept level:
\begin{equation*}
    \hat{f}_{p_M=0} \approx \bar{f}_{p_M=0} \triangleq\hat{f}-H_{\hat{f}}^{-1} \cdot   \nabla_{\hat{f}} \sum_{l=1}^{n} L_{Y_l} \left( \hat{f}, \bar{g}^*_{-p_M} \right), 
\end{equation*}
where $H_{\hat{f}} = \nabla^2_{\hat{f}}\sum_{i=1}^n L_{Y_i}(\hat{f}, \hat{g})$ is the Hessian matrix of the loss function respect to is the Hessian matrix of the loss function respect to $\hat{f}$.
Recalling the definition of the gradient:
\begin{equation*}
    G_Y(x_l;\bar{g}^*_{-p_M},\hat{f}) = \nabla_{\hat{f}} L_{Y} \left(\hat{f}\left(\bar{g}^*_{-p_M}(x_l)\right),y_l\right),
\end{equation*}
then the approximation becomes
\begin{equation*}
    \hat{f}_{p_M=0} \approx \bar{f}_{p_M=0} \triangleq \hat{f}-H_{\hat{f}}^{-1} \cdot    \sum_{l=1}^{n}G_Y(x_l;\bar{g}^*_{-p_M},\hat{f}). 
\end{equation*}
\end{proof}
   \subsection{Theoretical Bound for the Influence Function}\label{app:bound_c}
 Consider the dataset $\mathcal{D}=\{(x_i,c_i,y_i\}_{i=1}^n$, the loss function of the concept predictor $g$ is defined as:
 \begin{equation*}
\begin{split}
   L_\text{Total}(\mathcal{D};g)=\sum_{i=1}^n L_{C}(g(x_i),{c}_i) + \frac{\delta}{2}\cdot \|g\|^2=
 \sum_{i=1}^n \sum_{j=1}^k L_{C}^j(g(x_i),{c_i})+ \frac{\delta}{2}\cdot \|g\|^2 =
\sum_{i=1}^n\sum_{j=1}^k g^j(x_i)^\top \log({c_i}^j)+ \frac{\delta}{2}\cdot \|g\|^2.
\end{split}
\end{equation*}

Mathematically, we have a set of erroneous concepts need to be removed, which are denoted as $p_r$ for $r\in M$. Then the retrained concept predictor becomes
\begin{equation*}
\hat{g}_{-p_M} =  \argmin_{g'} \sum_{j\notin M}\sum_{i=1}^n L^j_{C}({g'}(x_i),c_i)+ \frac{\delta}{2}\cdot \|g\|^2.
\end{equation*}
We map it to $\hat{g}^{*}_{-p_M}$ as $\hat{g}_{-p_M}$ to $\hat{g}^{*}_{-p_M} \triangleq \mathrm{P}(\hat{g}_{-p_M})$, which has the same amount of parameters as $\hat{g}$ and has the same predicted concepts $\hat{g}^{*}_{-p_M}(j)$ as $\hat{g}_{-p_M}(j)$ for all $j\in [d_i]-M$. We achieve this effect by inserting a zero row vector into the $r$-th row of the matrix in the final layer of $\hat{g}_{-p_M}$ for $r\in M$. Thus, we can see that the mapping $P$ is one-to-one. Moreover, assume the parameter space of $\hat{g}$ is $T$ and that of $\hat{g}^{*}_{-p_M}$, $T_0$ is the subset of $T$. Noting that $\hat{g}^{*}_{-p_M}$ is the optimal model of the following objective function:
\begin{equation*}
\hat{g}^{*}_{-p_M} = \argmin_{g^{\prime} \in T_0} \sum_{j \notin M} \sum_{i=1}^{n} L^j_{C} ( g'(x_i), c_i)+ \frac{\delta}{2}\cdot \|g\|^2.
\end{equation*}
Then the loss function with the influence of erroneous concepts removed becomes
\begin{equation}
    \begin{split}
       L^-(\mathcal{D}; g) = \sum_{j \notin M} \sum_{i=1}^{n} L^j_{C} ( g'(x_i), c_i)+ \frac{\delta}{2}\cdot \|g\|^2 =  L_\text{Total}(\mathcal{D};g) -\sum_{j \in M}\sum_{i=1}^{n} L^j_{C}\left(g(x_i),c_i\right).
    \end{split}
\end{equation}

Assume $\hat{g} = \argmin  L_\text{Total}(\mathcal{D};g)$ is the original model parameter. $\hat{g}_{-p_M}(\mathcal{D})$ and $\hat{g}^{*}_{-p_M}(\mathcal{D})$ is the minimizer of $L^-(\mathcal{D}; g)$, which is obtained from retraining in different parameter space. $\hat{g}^{*}_{-p_M}(\mathcal{D})$ shares the same dimensionality as the original model. Because $\hat{g}_{-p_M}(\mathcal{D})$ and $\hat{g}^{*}_{-p_M}(\mathcal{D})$ produces identical outputs given identical inputs, to simplify the proof, we use $\hat{g}^{*}_{-p_M}(\mathcal{D})$ as the retrained model. 

Denote $\bar{g}_{-p_M}$ as the updated model with the influence of erroneous concepts removed and is obtained by the influence function method in theorem \ref{apc:thm:1}, which is an estimation for $\hat{g}^{*}_{-p_M}(\mathcal{D})$.
\begin{equation*}
    \bar{g}_{-p_M}(\mathcal{D})\triangleq \hat{g}  - H^{-1}_{\hat{g}} \cdot \sum_{j\notin M}\sum_{i=1}^n G_C^j(x_i,c_i;\hat{g}),
\end{equation*}

In this part, we will study the error between the estimated influence given by the theorem \ref{apc:thm:1} method and $\hat{g}^{*}_{-p_M}(\mathcal{D})$. We use the parameter changes as the evaluation metric:
\begin{equation}\label{app:theorem_total}
    \left|\left(\bar{g}_{-p_M} -\hat{g}\right) - \left( \hat{g}^{*}_{-p_M} - \hat{g}\right)\right| =  \left|\bar{g}_{-p_M} -  \hat{g}^{*}_{-p_M}\right|
\end{equation}

\begin{assumption}The loss $L_{C}(x,c;g;j)$
\begin{equation*}
    L_{C}(\mathcal{D};g;j) =  \sum_{i=1}^n L^j_C(g(x_i), c_i).
\end{equation*}
is convex and twice-differentiable in $g$, with positive regularization $\delta > 0$. There exists $C_H \in \mathbb{R}$ such that
$$\| \nabla^2_{g} L_{C}(\mathcal{D};g_1;j) - \nabla^2_{g} L_{C}(\mathcal{D};g_2;j)\|_{2} \leq C_H \| g_1 - g_2 \|_2$$
for all  $j\in [k]$ and $g_1, g_2 \in \Gamma$. 
\end{assumption}


\begin{definition}
\begin{equation*}
        C'_{L} = \max_{j}\left\| \nabla_{g} L_C(\mathcal{D}; \hat{g};j)\right\|_2,
\end{equation*}
\begin{equation*}
    \sigma'_{\text{min}} = \text{smallest singular value of } \nabla^2_{g}  L^-(\mathcal{D}; \hat{g}),
\end{equation*}
\begin{equation*}
    \sigma_{\text{min}} = \text{smallest singular value of } \nabla^2_{g}  L_{\text{Total}}(\mathcal{D}; \hat{g}),
\end{equation*}
\begin{equation}
    L'(\mathcal{D},M;g) = \sum_{j\in M }   L_{C}(\mathcal{D};g;j)
\end{equation}
\end{definition}

\begin{corollary}
\begin{equation}
    L^-(\mathcal{D}; g) = L_{\text{Total}}(\mathcal{D};g) - L'(\mathcal{D},M;g)
\end{equation}
    \begin{equation*}
        \|\nabla^2_{g}  L^-(\mathcal{D}; g_1) - \nabla^2_{g}  L^-(\mathcal{D}; g_2)\|_2\leq  
     \left((k+|M|)\cdot C_H\right)\|g_1-g_2\| 
    \end{equation*}
    Define $C_H^- \triangleq(k+|M|)\cdot C_H$
\end{corollary}

Based on above corollaries and assumptions, we derive the following theorem.

\begin{theorem}\label{app:bound_c_the}
    We obtain the error between the actual influence and our predicted influence as follows:
    \begin{equation*}
        \begin{split}
             &\left\|\hat{g}^{*}_{-p_M}(\mathcal{D} ) - \bar{g}_{-p_M}(\mathcal{D} )\right\|\\
             \leq & \frac{C_H^-    {C'_{L}|M|}^2}{2 (\sigma'_{\text{min}} + \delta)^3} + \left|\frac{2\delta+\sigma_{\text{min}}+\sigma'_{\text{min}}}{\left(\delta+ \sigma'_{\text{min}}\right)\cdot\left(\delta+ \sigma_{\text{min}}\right)}\right| \cdot C_L'|M|.
        \end{split}
    \end{equation*}
\end{theorem}
\begin{proof}

We will use the one-step Newton approximation as an intermediate step. Define $\Delta g_{Nt}(\mathcal{D} )$ as
\begin{equation*}
    \Delta g_{Nt}(\mathcal{D} )\triangleq H_{\delta}^{-1}\cdot \nabla_{ g}L'(\mathcal{D}, M; \hat{g}),
\end{equation*}
 where $H_{\delta} = \delta \cdot I + \nabla_{ g}^2 L^-(\mathcal{D} ;\hat{g})$ is the regularized empirical Hessian at $\hat{ g}$ but reweighed after removing the influence of wrong data. Then the one-step Newton approximation for $\hat{g}^{*}_{-p_M}(\mathcal{D} )$ is defined as $ g_{Nt}(\mathcal{D} ) \triangleq \Delta g_{Nt}(\mathcal{D} ) + \hat{ g}$.

In the following, we will separate the error between $\bar{g}_{-p_M}(\mathcal{D} )$ and $\hat{g}^{*}_{-p_M}(\mathcal{D} )$ into the following two parts:
\begin{equation*}
    \hat{g}^{*}_{-p_M}(\mathcal{D} ) - \bar{g}_{-p_M}(\mathcal{D} ) = \underbrace{\hat{g}^{*}_{-p_M}(\mathcal{D} ) - g_{Nt}(\mathcal{D} )}_{\text{Err}_{\text{Nt, act}}(\mathcal{D} )} + \underbrace{\left(g_{Nt}(\mathcal{D} )-\hat{ g}\right) - \left(\bar{g}_{-p_M}(\mathcal{D} ) - \hat{ g}\right)}_{\text{Err}_{\text{Nt, if}}(\mathcal{D} )}
\end{equation*}

Firstly, in {\bf Step $1$}, we will derive the bound for Newton-actual error ${\text{Err}_{\text{Nt, act}}(\mathcal{D} )}$.
Since $L^-( g)$ is strongly convex with parameter $\sigma'_{\text{min}} + \delta$ and minimized by 
$\hat{g}^{*}_{-p_M}(\mathcal{D} )$, we can bound the distance
$\left\|\hat{g}^{*}_{-p_M}(\mathcal{D} ) - { g}_{Nt}(\mathcal{D} )\right\|_2$ in terms of the norm of the gradient at ${ g}_{Nt}$:
\begin{equation}\label{bound:1}
    \left\|\hat{g}^{*}_{-p_M}(\mathcal{D} ) - { g}_{Nt}(\mathcal{D} )\right\|_2 \leq \frac{2}{\sigma'_{\text{min}} + \delta} \left\|\nabla_{ g} L^- \left({ g}_{Nt}(\mathcal{D} )\right)\right\|_2
\end{equation}
Therefore, the problem reduces to bounding $\left\|\nabla_{ g} L^- \left({ g}_{Nt}(\mathcal{D} )\right)\right\|_2$.
Noting that $\nabla_{ g}L'(\hat{ g}) = -\nabla_{ g}L^-$. This is because $\hat{ g}$ minimizes $L^- + L'$, that is, $$\nabla_{ g}L^-(\hat{ g}) + \nabla_{ g}L'(\hat{ g}) = 0.$$ Recall that $\Delta g_{Nt}= H_{\delta}^{-1}\cdot \nabla_{ g}L'(\mathcal{D}, S_e; \hat{g}) = -H_{\delta}^{-1}\cdot \nabla_{ g}L^-(\mathcal{D} ; \hat{g})$.
Given the above conditions, we can have this bound for $\text{Err}_{\text{Nt, act}}(-\mathcal{D} )$.
\begin{equation}\label{bound:2}
    \begin{split}
            &\left\|\nabla_{ g} L^- \left({ g}_{Nt}(\mathcal{D} )\right)\right\|_2\\
    = & \left\|\nabla_{ g} L^- \left(\hat{ g} + \Delta{ g}_{Nt}(\mathcal{D} )\right)\right\|_2\\
    = & \left\|\nabla_{ g} L^- \left(\hat{ g} + \Delta  g_{N_t}(\mathcal{D} )\right) - \nabla_{ g} L^- \left(\hat{ g} \right) -  \nabla_{ g}^2 L^- \left(\hat{ g}\right) \cdot  \Delta  g_{N_t}(\mathcal{D} )\right\|_2\\
     = & \left\|\int_0^1 \left(\nabla_{ g}^2 L^- \left(\hat{ g} + t\cdot \Delta g_{Nt}(\mathcal{D} )\right) - \nabla_{ g}^2 L^- \left(\hat{ g}\right)\right) \Delta g_{Nt}(\mathcal{D} ) \, dt\right\|_2\\
    \leq & \frac{C_H^-}{2} \left\|\Delta  g_{Nt}(\mathcal{D}^{*})\right\|_2^2 =  \frac{C_H^-}{2} \left\|\left[\nabla_{ g}^2 L^-(\hat{ g})\right]^{-1} \nabla_{ g} L^-(\hat{ g})\right\|_2^2\\
    \leq & \frac{C_{H}^{-}}{2 (\sigma'_{\text{min}} + \delta)^2} \left\|\nabla_{ g} L^-(\hat{ g})\right\|_2^2 = \frac{C_H^-}{2 (\sigma'_{\text{min}} + \delta)^2} \left\|\nabla_{ g} L'(\hat{ g})\right\|_2^2\\
    \leq &\frac{C_{H}^{-}    {C'_{L}|M|}^2}{2 (\sigma'_{\text{min}} + \delta)^2}.
    \end{split}
\end{equation}

Now we come to {\bf Step $2$} to bound ${\text{Err}_{\text{Nt, if}}(-\mathcal{D} )}$, and we will bound the difference in parameter change between Newton and our ECBM method.
\begin{align*}
        &\left\|{\left(g_{Nt}(\mathcal{D} )-\hat{ g}\right) - \left(\bar{g}_{-p_M}(\mathcal{D} ) - \hat{ g}\right)}\right\| \\
        =& \left\| \left[\left({\delta \cdot I+ \nabla_{ g}^2 L^- \left(\hat{ g}\right)}\right)^{-1} + \left({\delta \cdot I+ \nabla_{ g}^2 L_{\text{Total}} \left(\hat{ g}\right)}\right)^{-1}\right]\cdot \nabla_{ g}L'(\mathcal{D}, S_e; \hat{g})\right\|
\end{align*}
For simplification, we use matrix $A$, $B$ for the following substitutions:
\begin{align*}
    A = {\delta \cdot I+ \nabla_{ g}^2 L^- \left(\hat{ g}\right)}\\
    B = {\delta \cdot I+ \nabla_{ g}^2 L_{\text{Total}} \left(\hat{ g}\right)}
\end{align*}
And $A$ and $B$ are positive definite matrices with the following properties
\begin{align*}
\delta + \sigma'_{\text{min}} \prec A \prec \delta +   \sigma'_{\text{max}}\\
\delta + \sigma_{\text{min}} \prec B \prec \delta +   \sigma_{\text{max}}\\
\end{align*}
Therefore, we have
\begin{equation}\label{bound:3}
    \begin{split}
             &\left\|{\left(g_{Nt}(\mathcal{D} )-\hat{ g}\right) - \left(\bar{g}_{-p_M}(\mathcal{D} ) - \hat{ g}\right)}\right\| \\
     =&\left\|\left(A^{-1}+B^{-1}\right)\cdot \nabla_{ g}L^-(\mathcal{D} ; \hat{g})\right\| \\
     \leq & \left\|A^{-1}+B^{-1}\right\|\cdot \left\|\nabla_{ g}L^-(\mathcal{D} ; \hat{g})\right\|\\
     \leq & \left|\frac{2\delta+\sigma_{\text{min}}+\sigma'_{\text{min}}}{\left(\delta+ \sigma'_{\text{min}}\right)\cdot\left(\delta+ \sigma_{\text{min}}\right)}\right|\cdot\left\|\nabla_{ g}L^-(\mathcal{D} ; \hat{g})\right\|\\
     \leq& \left|\frac{2\delta+\sigma_{\text{min}}+\sigma'_{\text{min}}}{\left(\delta+ \sigma'_{\text{min}}\right)\cdot\left(\delta+ \sigma_{\text{min}}\right)}\right| \cdot C_L'|M |
    \end{split}
\end{equation}
By combining the conclusions from Step I and Step II in Equations \ref{bound:1}, \ref{bound:2} and \ref{bound:3}, we obtain the error between the actual influence and our predicted influence as follows:
\begin{equation*}
    \begin{split}
         &\left\|\hat{g}^{*}_{-p_M}(\mathcal{D} ) - \bar{g}_{-p_M}(\mathcal{D} )\right\|\\
         \leq & \frac{C_H^-    {C'_{L}|M|}^2}{2 (\sigma'_{\text{min}} + \delta)^3} + \left|\frac{2\delta+\sigma_{\text{min}}+\sigma'_{\text{min}}}{\left(\delta+ \sigma'_{\text{min}}\right)\cdot\left(\delta+ \sigma_{\text{min}}\right)}\right| \cdot C_L'|M|.
    \end{split}
\end{equation*}
\end{proof}

\begin{remark}
    Theorem \ref{app:bound_c_the} reveals one key finding about influence function estimation: The estimation error scales inversely with the regularization parameter $\delta$ ($ \mathcal{O}(1/\delta)$), indicating that increased regularization improves approximation accuracy. Besides, the error bound is linearly increasing with the number of removed concepts $|M|$. This implies that the estimation error increases with the number of erroneous concepts removed.
\end{remark}

\section{Data-level Influence}
\subsection{Proof of Data-level Influence Function}
\label{sec:appendix:d}
We address situations that for dataset $\mathcal{D} = \{ (x_i, y_i, c_i) \}_{i=1}^{n}$, given a set of data $z_r = (x_r, y_r, c_r)$, $r\in G$ to be removed. Our goal is to estimate $\hat{g}_{-z_G}$, $\hat{f}_{-z_G}$, which is the concept and label predictor trained on the $z_r$ for $r \in G$ removed dataset. 

\paragraph{Proof Sketch.} (i) First, we estimate the retrained concept predictor $\hat{g}_{-z_G}$. (ii) Then, we define a new label predictor $\Tilde{f}_{-z_G}$ and estimate $\Tilde{f}_{-z_G}-\hat{f}$. (iii) Next, in order to reduce computational complexity, use the lemma method to obtain the approximation of the Hessian matrix of $\Tilde{f}_{-z_G}$. (iv) Next, we compute the difference $\hat{f}_{-z_G} - \Tilde{f}_{-z_G}$ as 
\begin{equation*}
-H^{-1}_{\Tilde{f}_{-z_G}} \cdot \left( \nabla_{\hat{f}} L_Y \left(  \Tilde{f}_{-z_G}(\hat{g}_{-z_G}(x_{i_r})), y_{i_r}  \right) - \nabla_{\hat{f}} L_Y \left(  \Tilde{f}_{-z_G}(\hat{g}(x_{i_r})), y_{i_r}  \right) \right).
\end{equation*}
(v) Finally, we divide $\hat{f}_{-z_G} - \hat{f}$, which we actually concerned with, into $\left ( \hat{f}_{-z_G} - \Tilde{f}_{-z_G} \right ) + \left ( \Tilde{f}_{-z_G} - \hat{f} \right )$.


\begin{theorem}\label{app:data-level:g}
For dataset $\mathcal{D} = \{ (x_i, y_i, c_i) \}_{i=1}^{n}$, given a set of data $z_r = (x_r, y_r, c_r)$, $r\in G$ to be removed. Suppose the updated concept predictor $\hat{g}_{-z_G}$ is defined by 
\begin{equation*}
    \hat{g}_{-z_G} = \argmin_{g}\sum_{j\in[k]}\sum_{i\in [n]-G}  L_{C_j}(\hat{g}(x_{i}), c_{i})
\end{equation*}
where $L_{C}(\hat{g}(x_{i}), c_{i}) \triangleq \sum_{j=1}^k  L_{C}^j(\hat{g}(x_{i}), c_{i})$.
Then we have the following approximation for $\hat{g}_{-z_G}$
\begin{equation}
\hat{g}_{-z_G} \approx \bar{g}_{-z_G} \triangleq\hat{g} + H^{-1}_{\hat{g}} \cdot \sum_{r\in G}  \sum_{j=1}^M G^j_C(x_r,c_r;\hat{g}), 
\end{equation}
where $H_{\hat{g}} =  \nabla_{g}\sum_{i,j} G^j_C(x_i,c_i;\hat{g})$ is the Hessian matrix of the loss function with respect to $\hat{g}$.
\end{theorem}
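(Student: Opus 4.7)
The plan is to mimic the standard influence-function derivation already used in Theorems \ref{app:thm:concept-label:g} and \ref{apc:thm:1}, treating the deletion of $\{z_r\}_{r \in G}$ as the $\epsilon = 1$ endpoint of a continuous down-weighting path. First, I would rewrite the retrained objective by adding and subtracting the loss contribution of the removed samples, isolating the perturbation from the full-data loss that $\hat{g}$ minimizes:
\begin{equation*}
\hat{g}_{-z_G} = \argmin_{g}\left[\sum_{i=1}^n L_C(g(x_i), c_i) \;-\; \sum_{r \in G} L_C(g(x_r), c_r)\right],
\end{equation*}
where $L_C(g(x_i), c_i) \triangleq \sum_{j=1}^k L_{C_j}(g(x_i), c_i)$.

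Next, I would introduce the perturbed minimizer
\begin{equation*}
\hat{g}_{\epsilon, -z_G} \;\triangleq\; \argmin_{g}\left[\sum_{i=1}^n L_C(g(x_i), c_i) \;-\; \epsilon \sum_{r \in G} L_C(g(x_r), c_r)\right],
\end{equation*}
which satisfies $\hat{g}_{\epsilon=0, -z_G} = \hat{g}$ and $\hat{g}_{\epsilon=1, -z_G} = \hat{g}_{-z_G}$. Writing the first-order optimality condition for $\hat{g}_{\epsilon, -z_G}$ and Taylor-expanding the full-data gradient around $\hat{g}$ yields
\begin{equation*}
\nabla_g \sum_{i=1}^n L_C(\hat{g}(x_i), c_i) \;+\; H_{\hat{g}}\cdot(\hat{g}_{\epsilon, -z_G} - \hat{g}) \;-\; \epsilon \sum_{r \in G} \nabla_g L_C(\hat{g}(x_r), c_r) \;\approx\; 0.
\end{equation*}
The first summand vanishes by optimality of $\hat{g}$ on the full dataset, so solving for the parameter gap and then performing a single Newton step at $\epsilon = 1$ produces exactly the one-step approximation $\bar{g}_{-z_G}$ claimed in the theorem.

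The main obstacle is that the concept predictor is a deep non-convex model, so $H_{\hat{g}}$ is not guaranteed to be positive definite and its literal inverse may fail to exist. This is handled as in the \textbf{Acceleration via EK-FAC} paragraph of Section \ref{sec:method} by substituting the damped Fisher surrogate $\hat{H} = G_{\hat{g}} + \lambda I$, which is positive definite by construction. A secondary concern is that the Taylor expansion is only first-order accurate in $\|\hat{g}_{-z_G} - \hat{g}\|$, so the approximation is tight when $|G|/n$ is small; this matches the $\epsilon \to 0$ spirit of the response function and aligns with the deletion ratios in our experiments. Everything else (linearity of the gradient in the loss summands, interchange of $\sum_{r \in G}$ with $\nabla_g$) is routine.
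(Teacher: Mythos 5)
Your proposal follows essentially the same route as the paper's proof: add and subtract the removed samples' loss, introduce the $\epsilon$-down-weighted minimizer $\hat{g}_{\epsilon,-z_G}$, write its first-order optimality condition, Taylor-expand around $\hat{g}$ (using that $\hat{g}$ is a stationary point of the full-data loss so the leading term vanishes), and take a single Newton step to the $\epsilon=1$ endpoint. Your closing remarks on substituting the damped Fisher surrogate for a possibly indefinite $H_{\hat{g}}$ and on the first-order accuracy being controlled by $|G|/n$ are not part of the paper's appendix proof, but they are consistent with the discussion in the main text and do not alter the argument.
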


\begin{proof}
Firstly, we rewrite $\hat{g}_{-z_G}$ as 
\begin{equation*}
    \hat{g}_{-z_G} = \argmin_{g} \left[\sum_{\substack{i=1}}^{n} L_C(\hat{g}(x_{i}), c_{i}) - \sum_{r\in G} L_C(g(x_r), c_r)\right],
\end{equation*}

Then we up-weighted the $r$-th data by some $\epsilon$ and have a new predictor $\hat{g}_{-z_G, \epsilon}$, which is abbreviated as $\hat{g}_{\epsilon}$:
\begin{equation}\label{thm2:g1}
    \hat{g}_{\epsilon} \triangleq \argmin_{g} \left[ \sum_{i=1}^{n} L_C ( g(x_i), c_i) - \epsilon \cdot \sum_{r\in G}L_C ( g(x_r), c_r)\right].
\end{equation}

Because $\hat{g}_{\epsilon}$ minimizes the right side of equation \ref{thm2:g1}, we have 
\begin{equation*}
\nabla_{\hat{g}_{\epsilon}}  \sum_{i=1}^{n} L_Y (\hat{g}_{\epsilon}(x_i), c_i) - \epsilon \cdot\nabla_{\hat{g}_{\epsilon}} \sum_{r\in G}L_Y ( \hat{g}_{\epsilon}(x_r), c_r) =0. 
\end{equation*}

When $\epsilon\rightarrow 0$, $\hat{g}_{\epsilon}\rightarrow\hat{g}$.
So we can perform a first-order Taylor expansion with respect to $\hat{g}$, and we have
\begin{equation}\label{thm2:g:1}
\nabla_{g}  \sum_{i=1}^{n} L_C ( \hat{g} (x_i), c_i) 
-
\epsilon \cdot \nabla_{g}\sum_{r\in G} L_C (\hat{g}(x_r), c_r) 
+
\nabla^2_{g}  \sum_{i=1}^{n} L_C ( \hat{g} (x_i), c_i) \cdot (\hat{g}_{\epsilon} - \hat{g})  
\approx 0.
\end{equation}

Recap the definition of $\hat{g}$:
\begin{equation*}
    \hat{g} = \argmin_g  \sum_{i=1}^n L_Y(g(x_i),c_i),
\end{equation*}

Then, the first term of equation \ref{thm2:g:1} equals $0$. Let $\epsilon\rightarrow0$, then we have
\begin{equation*}
\left.\frac{\mathrm{d}\hat{g}_{\epsilon}}{\mathrm{d}\epsilon}\right|_{\epsilon = 0} = H^{-1}_{\hat{g}} \cdot \sum_{r\in G}\nabla_{g} L_C (\hat{g}(x_r), c_r),
\end{equation*}
where $H_{\hat{g}} = \nabla^2_{g}  \sum_{i=1}^{n} \ell ( \hat{g} (x_i), c_i)$.

Remember when $\epsilon\rightarrow0$, $\hat{g}_{\epsilon}\rightarrow\hat{g}_{-z_G}$.
Perform a Newton step at $\hat{g}$, then we obtain the method to edit the original concept predictor under concept level: 
\begin{equation*}
\hat{g}_{-z_G} \approx \bar{g}_{-z_G}\triangleq\hat{g} + H^{-1}_{\hat{g}} \cdot \sum_{r\in G}\nabla_{g} L_C (\hat{g}(x_r), c_r).
\end{equation*}
Recall the definition of $G_C^j(x_i,c_i;\hat{g})$, then we can edit the initial $\hat{g}$ to $\hat{g}^*_{-p_G}$, defined as:
\begin{equation}
\hat{g}_{-z_G} \approx \bar{g}_{-z_G} \triangleq\hat{g} + H^{-1}_{\hat{g}} \cdot \sum_{r\in G}  \sum_{j=1}^M G^j_C(x_r,c_r;\hat{g}), 
\end{equation}
where $H_{\hat{g}} =  \nabla_{g}\sum_{i,j} G^j_C(x_i,c_i;\hat{g})$ is the Hessian matrix of the loss function with respect to $\hat{g}$.
\end{proof}

\begin{theorem}\label{app:thm:data-level:f}
For dataset $\mathcal{D} = \{ (x_i, y_i, c_i) \}_{i=1}^{n}$, given a set of data $z_r = (x_r, y_r, c_r)$, $r\in G$ to be removed. The label predictor $\hat{f}_{-z_G}$ trained on the revised dataset becomes
\begin{equation}\label{app:data-level:f}
    \hat{f}_{-z_G} = \argmin_{f}\sum_{i\in [n]-G} L_{Y_i}(f, \hat{g}_{-z_G}).
\end{equation}
The intermediate label predictor $\Tilde{f}_{-z_G}$ is defined by
\begin{equation*}
    \Tilde{f}_{-z_G} = \argmin \sum_{i\in [n]-G} L_{Y_i}(f, \hat{g}),
\end{equation*}
Then $\Tilde{f}_{-z_G}-\hat{f}$ can be approximated by
\begin{equation}\label{app:approx:f_piao}
    \Tilde{f}_{-z_G} = \argmin \sum_{i\in [n]-G} L_{Y}(f(\hat{g}(x_i),y_i).
\end{equation}
We denote the edited version of $\Tilde{f}_{-z_G}$ as $\bar{f}^*_{-z_G} \triangleq \hat{f} + A_G$. 
Define $B_G$ as
\begin{equation*}
    \begin{split}
 -H^{-1}_{\bar{f}^*_{-z_G}} \sum_{i\in [n]-G}G_Y(x_i;\bar{g}_{-z_G},\bar{f}^*_{-z_G})-G_Y(x_i;\hat{g},\bar{f}^*_{-z_G}),
    \end{split}
\end{equation*}
where $H_{\bar{f}^*_{-z_G}} = \nabla_{\bar{f}}\sum_{i\in[n]-G}G_Y(x_i;\hat{g},\bar{f}^*_{-z_G})$ is the Hessian matrix concerning $\bar{f}^*_{-z_G}$. Then $\hat{f}_{-z_G}$ can be estimated by $\Tilde{f}_{-z_G}+B_G$.
Combining the above two-stage approximation, then, the final edited label predictor $\bar{f}_{-z_G}$ can be obtained by
\begin{equation}
    \bar{f}_{-z_G} = \bar{f}^*_{-z_G} + B_G = \hat{f} + A_G + B_G. 
\end{equation}
\end{theorem}

\begin{proof}
We can see that there is a huge gap between $\hat{f}_{-z_G}$ and $\hat{f}$. Thus, firstly, we define $\Tilde{f}_{-z_G}$ as 
\begin{equation*}
\Tilde{f}_{-z_G} = \argmin_f \sum_{i=1}^{n} L_Y \left(  f (\hat{g}(x_i)), y_i  \right) - \sum_{r\in G}L_Y \left( f ( \hat{g} (x_r) ) , y_r   \right).
\end{equation*}

Then, we define $\Tilde{f}_{\epsilon, -z_G}$ as follows to estimate $\Tilde{f}_{-z_G}$.
\begin{equation*}
    \Tilde{f}_{\epsilon, -z_G} =  \argmin_f \sum_{i=1}^{n} L_Y \left(  f (\hat{g}(x_i)), y_i  \right) - \epsilon \cdot \sum_{r\in G} L_Y \left( f ( \hat{g} (x_r) ) , y_r   \right).
\end{equation*}

From the minimization condition, we have
\begin{equation*}
    \nabla_{\Tilde{f}} \sum_{i=1}^{n} L_Y \left(  \Tilde{f}_{\epsilon, -z_G} (\hat{g}(x_i)), y_i  \right)-  \epsilon \cdot \sum_{r\in G}\nabla_{\Tilde{f}}L_Y \left(\Tilde{f}_{\epsilon, -z_G} ( \hat{g} (x_r) ) , y_r   \right) = 0.
\end{equation*}

Perform a first-order Taylor expansion at $\hat{f}$, 
\begin{align*}
& \nabla_{\hat{f}} \sum_{i=1}^{n} L_Y \left(  \hat{f} (\hat{g}(x_i)), y_i  \right) -  \epsilon \cdot \nabla_{\hat{f}} \sum_{r\in G} L_Y \left(\hat{f} ( \hat{g} (x_r) ) , y_r   \right)  \\
& + \nabla^2_{\hat{f}} \sum_{i=1}^{n} L_Y \left(  \hat{f} (\hat{g}(x_i)), y_i  \right) \cdot \left(\Tilde{f}_{\epsilon, -z_G} - \hat{f}\right)= 0.
\end{align*}

Then $\Tilde{f}_{-z_G}$ can be approximated by
\begin{equation}\label{ap:approx:f_piao}
    \Tilde{f}_{ -z_G} \approx \hat{f} + H^{-1}_{\hat{f}} \cdot \sum_{r\in G}\nabla_{\hat{f}} L_Y \left(\hat{f} ( \hat{g} (x_r) ) , y_r   \right)\triangleq A_G.
\end{equation}

Then the edit version of $\Tilde{f}_{-z_G}$ is defined as
\begin{equation}\label{app:th:3:f1}
    \bar{f}^*_{-z_G} = \hat{f} + A_G
\end{equation}

Then we estimate the difference between $\hat{f}_{-z_G}$ and $\Tilde{f}_{ -z_G}$. Rewrite $\Tilde{f}_{-z_G}$ as 
\begin{equation}\label{app:thm:2:f:2}
\Tilde{f}_{-z_G} = \argmin_f \sum_{i\in S}^{n} L_Y \left(  f (\hat{g}(x_i)), y_i  \right),
\end{equation}
where $S\triangleq[n]-G$.

Compare equation \ref{app:data-level:f} with \ref{app:thm:2:f:2}, we still need to define an intermediary predictor $\hat{f}_{-z_G, ir}$ as 
\begin{align*}
    \hat{f}_{-z_G, ir} &= \argmin_f \left[\sum_{\substack{i\in S \\i\neq i_r}} L_{Y_i} \left(  f, \hat{g}(x_i)\right) + L_{Y_{ir}} \left(  f, \hat{g}_{-z_G}\right)\right]\\
    &= \argmin_f \left[\sum_{\substack{i\in S}} L_{Y_i}\left(  f, \hat{g}\right) + L_{Y_{ir}} \left(  f, \hat{g}_{-z_G} \right) -  L_{Y_{ir}} \left(  f, \hat{g}\right) \right].
\end{align*}

Up-weight the $i_r$ data by some $\epsilon$, we define $\hat{f}_{\epsilon,-z_G,i_r}$ as
\begin{equation*}
    \hat{f}_{\epsilon, -z_G, ir} = \argmin_f \left[\sum_{\substack{i\in S}} L_{Y_i} \left(  f,\hat{g} \right) + \epsilon\cdot L_{Y_{ir}} \left(  f, \hat{g}_{-z_G} \right) - \epsilon\cdot L_{Y_{ir}} \left(  f,\hat{g}  \right)\right].
\end{equation*}

We denote $\hat{f}_{\epsilon, -z_G, ir}$ as $\hat{f}_{\epsilon}^*$ in the following proof. Then, from the minimization condition, we have 
\begin{equation}\label{f:1}  
\nabla_{\hat{f}}\sum_{\substack{i\in S}} L_{Y_i} \left(  \hat{f}_{\epsilon}^*,\hat{g}\right) + \epsilon \cdot\nabla_{\hat{f}} L_{Y_{ir}} \left(  \hat{f}_{\epsilon}^*,  \hat{g}_{-z_G}\right) - \epsilon \cdot\nabla_{\hat{f}} L_{Y_{ir}} \left(  \hat{f}_{\epsilon}^*, \hat{g}(x_{i_r} \right).
\end{equation}

When $\epsilon\rightarrow 0$, $\hat{f}_{\epsilon}^*\rightarrow \Tilde{f}_{-z_G}$.
Then we perform a Taylor expansion at $\Tilde{f}_{-z_G}$ of equation \ref{f:1} and have
\begin{align*}
&\nabla_{\hat{f}}\sum_{\substack{i\in S}} L_{Y_{i}} \left( \Tilde{f}_{-z_G},\hat{g}\right) + \epsilon \cdot\nabla_{\hat{f}} L_{Y_{ir}} \left(  \Tilde{f}_{-z_G}, \hat{g}_{-z_G}\right) \\
&- \epsilon \cdot\nabla_{\hat{f}}  L_{Y_{ir}} \left(  \Tilde{f}_{-z_G}, \hat{g}\right) + \nabla^2_{\hat{f}}\sum_{\substack{i\in S }}  L_{Y_{i}} \left( \Tilde{f}_{-z_G}, \hat{g}\right) \cdot ( \hat{f}_{\epsilon}^* - \Tilde{f}_{-z_G}) \approx 0.
\end{align*}

Organizing the above equation gives
\begin{equation*}
     \hat{f}_{\epsilon}^* - \Tilde{f}_{-z_G} \approx -\epsilon \cdot H^{-1}_{\Tilde{f}_{-z_G}} \cdot \left( \nabla_{\hat{f}}  L_{Y_{ir}} \left(  \Tilde{f}_{-z_G}, \hat{g}_{-z_G} \right) - \nabla_{\hat{f}}  L_{Y_{ir}} \left(  \Tilde{f}_{-z_G},\hat{g}\right) \right),
\end{equation*}
where $H_{\Tilde{f}_{-z_G}} = \nabla^2_{\hat{f}}\sum_{\substack{i\in S }}  L_{Y_{i}} \left( \Tilde{f}_{-z_G}, \hat{g}\right)$.

When $\epsilon = 1$, $\hat{f}_{\epsilon}^* = \hat{f}_{-z_G, ir}$. Then we perform a Newton iteration with step size $1$ at $\Tilde{f}_{-z_G}$,
\begin{equation*}
     \hat{f}_{-z_G, ir} - \Tilde{f}_{-z_G} \approx -H^{-1}_{\Tilde{f}_{-z_G}} \cdot \left( \nabla_{\hat{f}}  L_{Y_{ir}} \left(  \Tilde{f}_{-z_G}, \hat{g}_{-z_G}\right) - \nabla_{\hat{f}}  L_{Y_{ir}} \left(  \Tilde{f}_{-z_G}, \hat{g}\right) \right)
\end{equation*}

Iterate $i_r$ through set $S$, and we have
\begin{equation}\label{ap:approx:hat_f}
     \hat{f}_{-z_G} - \Tilde{f}_{-z_G} \approx -H^{-1}_{\Tilde{f}_{-z_G}} \cdot \left( \nabla_{\hat{f}} \sum_{i\in S} L_{Y_{i}} \left(  \Tilde{f}_{-z_G}, \hat{g}_{-z_G}\right) - \nabla_{\hat{f}} \sum_{i\in S}  L_{Y_i} \left(  \Tilde{f}_{-z_G}, \hat{g}\right) \right)
\end{equation}

The edited version of $\hat{g}_{-z_G}$ has been deduced as $\bar{g}_{-z_G}$ in theorem \ref{app:data-level:g}, substituting this approximation into \eqref{ap:approx:hat_f}, then we have
\begin{equation}
     \hat{f}_{-z_G} - \Tilde{f}_{-z_G} \approx -H^{-1}_{\Tilde{f}_{-z_G}} \cdot \left( \nabla_{\hat{f}} \sum_{i\in S} L_{Y_i} \left(  \Tilde{f}_{-z_G}, \bar{g}_{-z_G}\right) - \nabla_{\hat{f}} \sum_{i\in S}  L_{Y_i} \left(  \Tilde{f}_{-z_G}, \hat{g}\right) \right).
\end{equation}
Noting that we cannot obtain $\hat{f}_{-z_G}$ and $H_{\Tilde{f}_{-z_G}}$ directly because we do not retrain the label predictor but edit it to $\bar{f}^*_{-z_G}$ as a substitute. Therefore, we approximate $\hat{f}_{-z_G}$ with $\bar{f}^*_{-z_G}$ and $H_{\Tilde{f}_{-z_G}}$ with  $H_{\bar{f}^*_{-z_G}}$ which is defined by:
\begin{equation*}
    H_{\bar{f}^*_{-z_G}} = \nabla^2_{\hat{f}}\sum_{\substack{i\in S }}  L_{Y_{i}} \left( \bar{f}^*_{-z_G}, \hat{g}\right)
\end{equation*}
Then we define $B_G$ as
\begin{equation}\label{app:th:3:f2}
    B_G \triangleq -H^{-1}_{\bar{f}^*_{-z_G}} \cdot \left( \nabla_{\hat{f}} \sum_{i\in S} L_{Y_i} \left(  \bar{f}^*_{-z_G}, \bar{g}_{-z_G}\right) - \nabla_{\hat{f}} \sum_{i\in S}  L_{Y_i} \left(  \bar{f}^*_{-z_G}, \hat{g}\right) \right)
\end{equation}
Combining \eqref{app:th:3:f1} and \eqref{app:th:3:f2}, then we deduce the final closed-form edited label predictor as
\begin{equation*}
    \bar{f}_{-z_G} = \bar{f}^*_{-z_G} + B_G = \hat{f} + A_G + B_G. 
\end{equation*}
Replace the definition of gradient of $L_C$ and $L_C$, then we obtain the final version of approximation.
\end{proof}

\subsection{Theoretical Bound for the Influence Function}\label{app:bound_d_the}
Consider the dataset $\mathcal{D}=\{(x_i,c_i,y_i\}_{i=1}^n$, the loss function of the concept predictor $g$ is defined as:
\begin{equation*}
\begin{split}
  L_\text{Total}(\mathcal{D};g)=\sum_{i=1}^n L_{C}(g(x_i),{c}_i) + \frac{\delta}{2}\cdot \|g\|^2=
\sum_{i=1}^n \sum_{j=1}^k L_{C}^j(g(x_i),{c_i})+ \frac{\delta}{2}\cdot \|g\|^2 =
\sum_{i=1}^n\sum_{j=1}^k g^j(x_i)^\top \log({c_i}^j)+ \frac{\delta}{2}\cdot \|g\|^2.
\end{split}
\end{equation*}

Mathematically, we have a set of erroneous data $z_r = (x_r, y_r, c_r)$, $r\in G$ need to be removed. Then the retrained concept predictor becomes

\begin{equation*}
    \hat{g}_{-z_G} = \argmin_{g}\sum_{j=1}^k\sum_{i\in [n]-G}L^j_{C}(g(x_i), c_i)+ \frac{\delta}{2}\cdot \|g\|^2.
\end{equation*}
Define the new dataset as $\mathcal{D}^* = \{(x_i,c_i,y_i)\}_{i\in [n]-G}$, then the loss function with the influence of erroneous data removed becomes
\begin{equation}
   \begin{split}
      L^-(\mathcal{D}^*; g) = \sum_{j=1}^k\sum_{i\in [n]-G}L^j_{C}(g(x_i), c_i)+ \frac{\delta}{2}\cdot \|g\|^2 =  L_\text{Total}(\mathcal{D};g) -\sum_{j=1}^k\sum_{i\in G} L^j_{C}\left(g(x_i),c_i\right).
   \end{split}
\end{equation}

Assume $\hat{g} = \argmin  L_\text{Total}(\mathcal{D};g)$ is the original model parameter. $\hat{g}_{-z_G}$ is the minimizer of $L^-(\mathcal{D}^*; g)$. Denote $\bar{g} _{-z_G}$ as the updated model with the influence of erroneous data removed and is obtained by the influence function method in theorem \ref{app:data-level:g}, which is an estimation for $\hat{g}_{-z_G}$.
\begin{equation}
    \hat{g}_{-z_G} \approx \bar{g}_{-z_G} \triangleq\hat{g} + H^{-1}_{\hat{g}} \cdot \sum_{r\in G}  \sum_{j=1}^M G^j_C(x_r,c_r;\hat{g}), 
    \end{equation}

In this part, we will study the error between the estimated influence given by the theorem \ref{app:data-level:g} method and $\hat{g}_{-z_G}$. We use the parameter changes as the evaluation metric:
\begin{equation}
   \left|\left(\bar{g}_{-z_G} -\hat{g}\right) - \left(  \hat{g}_{-z_G} - \hat{g}\right)\right| =  \left|\bar{g}_{-z_G} -   \hat{g}_{-z_G}\right|
\end{equation}

\begin{assumption}The loss $L_{C}(x,c;g;j)$
\begin{equation*}
   L_{C}(x, c;g) =  \sum_{j=1}^k L^j_C(g(x), c).
\end{equation*}
is convex and twice-differentiable in $g$, with positive regularization $\delta > 0$. There exists $C_H \in \mathbb{R}$ such that
$$\| \nabla^2_{g} L_{C}(x, c;g_1) - \nabla^2_{g} L_{C}(x, c;g_2)\|_{2} \leq C_H \| g_1 - g_2 \|_2$$
for all  $(x,c)\in \mathcal{D}$ and $g_1, g_2 \in \Gamma$. 
\end{assumption}


\begin{definition}
\begin{equation*}
       C'_{L} = \left\| \nabla_{g} L_C(\mathcal{D}; \hat{g})\right\|_2,
\end{equation*}
\begin{equation*}
   \sigma'_{\text{min}} = \text{smallest singular value of } \nabla^2_{g}  L^-(\mathcal{D}; \hat{g}),
\end{equation*}
\begin{equation*}
   \sigma_{\text{min}} = \text{smallest singular value of } \nabla^2_{g}  L_{\text{Total}}(\mathcal{D}; \hat{g}),
\end{equation*}
\begin{equation}
   L'(\mathcal{D},G;g) = \sum_{i\in G }   L_{C}(x_i, c_i;g)
\end{equation}
\end{definition}

\begin{corollary}
\begin{equation}
   L^-(\mathcal{D}; g) = L_{\text{Total}}(\mathcal{D};g) - L'(\mathcal{D},G;g)
\end{equation}
   \begin{equation*}
       \|\nabla^2_{g}  L^-(\mathcal{D}; g_1) - \nabla^2_{g}  L^-(\mathcal{D}; g_2)\|_2\leq  
    \left((n+|G|)\cdot C_H\right)\|g_1-g_2\| 
   \end{equation*}
   Define $C_H^- \triangleq(n+|G|)\cdot C_H$
\end{corollary}

Based on above corollaries and assumptions, we derive the following theorem.

\begin{theorem}
   We obtain the error between the actual influence and our predicted influence as follows:
   \begin{equation*}
    \begin{split}
         &\left\| \hat{g}_{-z_G}(\mathcal{D} ) - \bar{g}_{-z_G}(\mathcal{D} )\right\|\\
         \leq & \frac{C_H^-    {C'_{L}|G|}^2}{2 (\sigma'_{\text{min}} + \delta)^3} + \left|\frac{2\delta+\sigma_{\text{min}}+\sigma'_{\text{min}}}{\left(\delta+ \sigma'_{\text{min}}\right)\cdot\left(\delta+ \sigma_{\text{min}}\right)}\right| \cdot C_L'|G|.
    \end{split}
 \end{equation*}
\end{theorem}
\begin{proof}

We will use the one-step Newton approximation as an intermediate step. Define $\Delta g_{Nt}(\mathcal{D} )$ as
\begin{equation*}
   \Delta g_{Nt}(\mathcal{D} )\triangleq H_{\delta}^{-1}\cdot \nabla_{ g}L'(\mathcal{D}, G; \hat{g}),
\end{equation*}
where $H_{\delta} = \delta \cdot I + \nabla_{ g}^2 L^-(\mathcal{D} ;\hat{g})$ is the regularized empirical Hessian at $\hat{ g}$ but reweighed after removing the influence of wrong data. Then the one-step Newton approximation for $ \hat{g}_{-z_G}(\mathcal{D} )$ is defined as $ g_{Nt}(\mathcal{D} ) \triangleq \Delta g_{Nt}(\mathcal{D} ) + \hat{ g}$.

In the following, we will separate the error between $\bar{g}_{-z_G}(\mathcal{D} )$ and $ \hat{g}_{-z_G}(\mathcal{D} )$ into the following two parts:
\begin{equation*}
    \hat{g}_{-z_G}(\mathcal{D} ) - \bar{g}_{-z_G}(\mathcal{D} ) = \underbrace{ \hat{g}_{-z_G}(\mathcal{D} ) - g_{Nt}(\mathcal{D} )}_{\text{Err}_{\text{Nt, act}}(\mathcal{D} )} + \underbrace{\left(g_{Nt}(\mathcal{D} )-\hat{ g}\right) - \left(\bar{g}_{-z_G}(\mathcal{D} ) - \hat{ g}\right)}_{\text{Err}_{\text{Nt, if}}(\mathcal{D} )}
\end{equation*}

Firstly, in {\bf Step $1$}, we will derive the bound for Newton-actual error ${\text{Err}_{\text{Nt, act}}(\mathcal{D} )}$.
Since $L^-( g)$ is strongly convex with parameter $\sigma'_{\text{min}} + \delta$ and minimized by 
$ \hat{g}_{-z_G}(\mathcal{D} )$, we can bound the distance
$\left\| \hat{g}_{-z_G}(\mathcal{D} ) - { g}_{Nt}(\mathcal{D} )\right\|_2$ in terms of the norm of the gradient at ${ g}_{Nt}$:
\begin{equation}\label{bound:1}
   \left\| \hat{g}_{-z_G}(\mathcal{D} ) - { g}_{Nt}(\mathcal{D} )\right\|_2 \leq \frac{2}{\sigma'_{\text{min}} + \delta} \left\|\nabla_{ g} L^- \left({ g}_{Nt}(\mathcal{D} )\right)\right\|_2
\end{equation}
Therefore, the problem reduces to bounding $\left\|\nabla_{ g} L^- \left({ g}_{Nt}(\mathcal{D} )\right)\right\|_2$.
Noting that $\nabla_{ g}L'(\mathcal{D}, G;\hat{ g}) = -\nabla_{ g}L^-$. This is because $\hat{ g}$ minimizes $L^- + L'$, that is, $$\nabla_{ g}L^-(\hat{ g}) + \nabla_{ g}L'(\mathcal{D}, G;\hat{ g}) = 0.$$ Recall that $\Delta g_{Nt}= H_{\delta}^{-1}\cdot \nabla_{ g}L'(\mathcal{D}, G;\hat{ g}) = -H_{\delta}^{-1}\cdot \nabla_{ g}L^-(\mathcal{D} ; \hat{g})$.
Given the above conditions, we can have this bound for $\text{Err}_{\text{Nt, act}}(-\mathcal{D} )$.
\begin{equation}\label{bound:2}
   \begin{split}
           &\left\|\nabla_{ g} L^- \left({ g}_{Nt}(\mathcal{D} )\right)\right\|_2\\
   = & \left\|\nabla_{ g} L^- \left(\hat{ g} + \Delta{ g}_{Nt}(\mathcal{D} )\right)\right\|_2\\
   = & \left\|\nabla_{ g} L^- \left(\hat{ g} + \Delta  g_{N_t}(\mathcal{D} )\right) - \nabla_{ g} L^- \left(\hat{ g} \right) -  \nabla_{ g}^2 L^- \left(\hat{ g}\right) \cdot  \Delta  g_{N_t}(\mathcal{D} )\right\|_2\\
    = & \left\|\int_0^1 \left(\nabla_{ g}^2 L^- \left(\hat{ g} + t\cdot \Delta g_{Nt}(\mathcal{D} )\right) - \nabla_{ g}^2 L^- \left(\hat{ g}\right)\right) \Delta g_{Nt}(\mathcal{D} ) \, dt\right\|_2\\
   \leq & \frac{C_H^-}{2} \left\|\Delta  g_{Nt}(\mathcal{D}^{*})\right\|_2^2 =  \frac{C_H^-}{2} \left\|\left[\nabla_{ g}^2 L^-(\hat{ g})\right]^{-1} \nabla_{ g} L^-(\hat{ g})\right\|_2^2\\
   \leq & \frac{C_{H}^{-}}{2 (\sigma'_{\text{min}} + \delta)^2} \left\|\nabla_{ g} L^-(\hat{ g})\right\|_2^2 = \frac{C_H^-}{2 (\sigma'_{\text{min}} + \delta)^2} \left\|\nabla_{ g} L'(\mathcal{D}, G;\hat{ g})\right\|_2^2\\
   \leq &\frac{C_{H}^{-}    {C'_{L}|G|}^2}{2 (\sigma'_{\text{min}} + \delta)^2}.
   \end{split}
\end{equation}

Now we come to {\bf Step $2$} to bound ${\text{Err}_{\text{Nt, if}}(-\mathcal{D} )}$, and we will bound the difference in parameter change between Newton and our ECBM method.
\begin{align*}
       &\left\|{\left(g_{Nt}(\mathcal{D} )-\hat{ g}\right) - \left(\bar{g}_{-z_G}(\mathcal{D} ) - \hat{ g}\right)}\right\| \\
       =& \left\| \left[\left({\delta \cdot I+ \nabla_{ g}^2 L^- \left(\hat{ g}\right)}\right)^{-1} + \left({\delta \cdot I+ \nabla_{ g}^2 L_{\text{Total}} \left(\hat{ g}\right)}\right)^{-1}\right]\cdot \nabla_{ g}L'(\mathcal{D}, G;\hat{ g})\right\|
\end{align*}
For simplification, we use matrix $A$, $B$ for the following substitutions:
\begin{align*}
   A = {\delta \cdot I+ \nabla_{ g}^2 L^- \left(\hat{ g}\right)}\\
   B = {\delta \cdot I+ \nabla_{ g}^2 L_{\text{Total}} \left(\hat{ g}\right)}
\end{align*}
And $A$ and $B$ are positive definite matrices with the following properties
\begin{align*}
\delta + \sigma'_{\text{min}} \prec A \prec \delta +   \sigma'_{\text{max}}\\
\delta + \sigma_{\text{min}} \prec B \prec \delta +   \sigma_{\text{max}}\\
\end{align*}
Therefore, we have
\begin{equation}\label{bound:3}
   \begin{split}
            &\left\|{\left(g_{Nt}(\mathcal{D} )-\hat{ g}\right) - \left(\bar{g}_{-z_G}(\mathcal{D} ) - \hat{ g}\right)}\right\| \\
    =&\left\|\left(A^{-1}+B^{-1}\right)\cdot \nabla_{ g}L^-(\mathcal{D} ; \hat{g})\right\| \\
    \leq & \left\|A^{-1}+B^{-1}\right\|\cdot \left\|\nabla_{ g}L^-(\mathcal{D} ; \hat{g})\right\|\\
    \leq & \left|\frac{2\delta+\sigma_{\text{min}}+\sigma'_{\text{min}}}{\left(\delta+ \sigma'_{\text{min}}\right)\cdot\left(\delta+ \sigma_{\text{min}}\right)}\right|\cdot\left\|\nabla_{ g}L^-(\mathcal{D} ; \hat{g})\right\|\\
    \leq& \left|\frac{2\delta+\sigma_{\text{min}}+\sigma'_{\text{min}}}{\left(\delta+ \sigma'_{\text{min}}\right)\cdot\left(\delta+ \sigma_{\text{min}}\right)}\right| \cdot C_L'|G |
   \end{split}
\end{equation}
By combining the conclusions from Step I and Step II in Equations \ref{bound:1}, \ref{bound:2} and \ref{bound:3}, we obtain the error between the actual influence and our predicted influence as follows:
\begin{equation*}
   \begin{split}
        &\left\| \hat{g}_{-z_G}(\mathcal{D} ) - \bar{g}_{-z_G}(\mathcal{D} )\right\|\\
        \leq & \frac{C_H^-    {C'_{L}|G|}^2}{2 (\sigma'_{\text{min}} + \delta)^3} + \left|\frac{2\delta+\sigma_{\text{min}}+\sigma'_{\text{min}}}{\left(\delta+ \sigma'_{\text{min}}\right)\cdot\left(\delta+ \sigma_{\text{min}}\right)}\right| \cdot C_L'|G|.
   \end{split}
\end{equation*}
\end{proof}

\begin{remark}
 The error bound is linearly increasing with the number of removed data $|G|$. This implies that the estimation error increases with the number of erroneous data removed.
\end{remark}

\section{Algorithm}
\begin{breakablealgorithm}
\caption{Concept-label-level ECBM \label{alg:1}}
\begin{algorithmic}[1]
\STATE {\bf Input:} Dataset $\mathcal{D} = \{ (x_i, y_i, c_i) \}_{i=1}^{n}$, original concept predictor $\hat{f}$, and label predictor $\hat{g}$, a set of erroneous data $D_e$ and its associated index set $S_e$.
\STATE For the index $(w, r)$ in $S_e$, correct ${c_w^r}$ to the right label ${c_w^r}^{\prime}$ for the $w$-th data $(x_w, y_w, c_w)$.
\STATE Compute the Hessian matrix of the loss function respect to $\hat{g}$:
$$H_{\hat{g}} = \nabla^2_{\hat{g}}  \sum_{i,j}  L_{C_j}(\hat{g}^j(x_i),c_i^j).$$
\STATE Update concept predictor $\Tilde{g}$:
    \begin{equation*}
            \Tilde{g} = \hat{g} -H^{-1}_{\hat{g}}  \cdot  \sum_{(w,r)\in S_e}\left(\nabla_{\hat{g}}L_{C_r}\left(\hat{g}^r(x_w),{c_w^r}^{\prime}\right) - \nabla_{\hat{g}}L_{C_r}\left(\hat{g}^r(x_w),c_w^r\right) \right).
    \end{equation*}
\STATE Compute the Hessian matrix of the loss function respect to $\hat{f}$:
$$H_{\hat{f}} = \nabla^2_{\hat{f}}\sum_{i=1}^n L_{Y_i}(\hat{f}, \hat{g}).$$
\STATE Update label predictor $\Tilde{f}$:
\begin{align*}
 \Tilde{f} &= \hat{f} + H^{-1}_{\hat{f}} \cdot \nabla_f \sum_{i=1}^n L_{Y} \left(   \hat{f} \left( \hat{g} ( x_{i} ) 
\right) , y_{i} \right)- H^{-1}_{\hat{f}} \cdot \nabla_f \sum_{l=1}^n \left( L_Y \left( \hat{f} \left( \Tilde{g}(x_l)  \right) , y_{l} \right) \right).
\end{align*}
\STATE {\bf Return:} $\Tilde{f}$, $\Tilde{g}$. 
\end{algorithmic}
\end{breakablealgorithm}

\begin{breakablealgorithm}
\caption{Concept-level ECBM \label{alg:2}}
\begin{algorithmic}[1]
    \STATE {\bf Input:} Dataset $\mathcal{D} = \{ (x_i, y_i, c_i) \}_{i=1}^{ n}$, original concept predictor $\hat{f}$, label predictor $\hat{g}$ and the to be removed concept index set $M$.
    \STATE For $r\in M$, set $p_r = 0$ for all the data $z\in \mathcal{D}$.
    \STATE Compute the Hessian matrix of the loss function respect to $\hat{g}$:
    $$H_{\hat{g}} = \nabla^2_{\hat{g}}  \sum_{j\notin M}\sum_{i=1}^n  L_{C_j}(\hat{g}^j(x_i),c_i^j).$$
    \STATE Update concept predictor $\Tilde{g}^*$:
    \begin{equation*}
        \Tilde{g}^* = \hat{g} - H^{-1}_{\hat{g}} \cdot \nabla_{\hat{g}} \sum_{j\notin M} \sum_{i=1}^{n} L_{C_j} ( \hat{g}^{j} (x_i), c_i^j).
    \end{equation*}
    \STATE Compute the Hessian matrix of the loss function respect to $\hat{f}$:
    $$H_{\hat{f}} = \nabla^2_{\hat{f}}  \sum_{i=1}^n  L_{Y}(\hat{f}(\hat{g}(x_i), y_i).$$
    \STATE Update label predictor $\Tilde{f}$:
    \begin{equation*}
    \Tilde{f} =  \hat{f}-H_{\hat{f}}^{-1} \cdot   \nabla_{\hat{f}} \sum_{l   =1}^{n} L_{Y} \left( \hat{f} \left( \Tilde{g}^*(x_l) \right)  , y_l \right).
    \end{equation*}
    \STATE Map $\Tilde{g}^*$ to $\Tilde{g}$ by removing the $r$-th row of the matrix in the final layer of $\Tilde{g}^*$ for $r\in M$.
    \STATE {\bf Return:}$\Tilde{f}$, $\Tilde{g}$.
\end{algorithmic}
\end{breakablealgorithm}

\begin{breakablealgorithm}
\caption{Data-level ECBM \label{alg:3}}
\begin{algorithmic}[1]
    \STATE {\bf Input:} Dataset $\mathcal{D} = \{ (x_i, y_i, c_i) \}_{i=1}^{N}$, original concept predictor $\hat{f}$, label predictor $\hat{g}$, and the to be removed data index set $G$.
    \STATE For $r\in G$, remove the $r$-th data $(x_r, y_r, c_r)$ from $\mathcal{D}$ and define the new dataset as $\mathcal{S}$.
    \STATE Compute the Hessian matrix of the loss function with respect to $\hat{g}$:
    $$H_{\hat{g}} = \nabla^2_{\hat{g}}  \sum_{i,j}  L_{C_j}(\hat{g}^j(x_i),c_i^j).$$
    \STATE Update concept predictor $\Tilde{g}$:
    \begin{equation*}
        \Tilde{g} = \hat{g} + H^{-1}_{\hat{g}} \cdot  \sum_{r\in G}\nabla_{g} L_{C} (\hat{g}(x_r), c_r)
    \end{equation*}
    \STATE Update label predictor $\Tilde{f}$.
    Compute the Hessian matrix of the loss function with respect to $\hat{f}$:
$$H_{\hat{f}} = \nabla^2_{\hat{f}}  \sum_{i=1}^n  L_Y(\hat{f}(\hat{g}(x_i), y_i).$$
\STATE   Compute $A$ as:
    \begin{equation*}
           A =  H^{-1}_{\hat{f}} \cdot  \sum_{i\in [n]-G} \nabla_{\hat{f}} L_{Y}\left(\hat{f}(\hat{g}(x_i)), y_i\right)
    \end{equation*}

\STATE Obtain $\bar{f}$ as 
    \begin{equation*}
        \bar{f} = \hat{f} + A
    \end{equation*}
\STATE Compute the Hessian matrix of the loss function concerning $\bar{f}$:
$$H_{\bar{f}} = \nabla^2_{\bar{f}}  \sum_{i\in [n]-G}  L_Y(\bar{f}(\hat{g}(x_i)),y_i).$$
\STATE Compute $B$ as
    \begin{equation*}
        B = -H^{-1}_{\bar{f}}\cdot \sum_{i\in [n]-G}\nabla_{\hat{f}}\left(    L_Y(\bar{f}(\Tilde{g}(x_i)), y_i) - L_Y(\bar{f}(\hat{g}(x_i)), y_i)\right)
    \end{equation*}
\STATE Update the label predictor $\Tilde{f}$ as: $\Tilde{f} =  \hat{f} + A + B.$
    \STATE {\bf Return: $\Tilde{f}$, $\Tilde{g}$}.  
\end{algorithmic}
\end{breakablealgorithm}

\begin{breakablealgorithm}
\caption{EK-FAC for Concept Predictor $g$ \label{alg:ekg}}
\begin{algorithmic}[1]
    \STATE {\bf Input:} Dataset $\mathcal{D} = \{ (x_i, y_i, c_i) \}_{i=1}^{N}$, original concept predictor $\hat{g}$.
\FOR{the $l$-th convolution layer of $\hat{g}$:}     
\STATE Define the input activations $\{a_{j,t}\}$, weights $W = \left(w_{i, j, \delta}\right)$, and biases $b = \left(b_{i}\right)$ of this layer;
\STATE Obtain the expanded activations $\llbracket {{A}}_{l-1} \rrbracket$ as: 
$$    \llbracket {{A}}_{l-1} \rrbracket_{t, j|\Delta|+\delta}=\left[{{A}}_{l-1}\right]_{(t+\delta), j}=a_{j, t+\delta},$$
\STATE Compute the pre-activations:
$$[S_l]_{i,t} = s_{i, t}=\sum_{\delta \in \Delta} w_{i, j, \delta} a_{j, t+\delta}+b_{i}. $$
\STATE During the backpropagation process, obtain the $\mathcal{D}s_{i, t}$ as:
\begin{equation*}
    \mathcal{D}s_{i, t} = \frac{\partial \sum_{j=1}^k\sum_{i=1}^n L_{C_j}}{\partial s_{i, t}}
\end{equation*}
\STATE Compute $\hat{\Omega}_{l-1}$ and $\hat{\Gamma_{l}}$:
\begin{align*}
    \hat{\Omega}_{l-1} =& \frac{1}{n} \sum_{i=1}^{n}\left(\llbracket {A}^i_{l-1} \rrbracket_{\mathrm{H}}^{\top} \llbracket {A}^i_{l-1} \rrbracket_{\mathrm{H}}\right)\\
    \hat{\Gamma_{l}} =& \frac{1}{n} \sum_{i=1}^{n}\left(\frac{1}{|\mathcal{T}|}\mathcal{D} {{S}^i_{l}}^{\top} \mathcal{D} {S}^i_{l}\right)
\end{align*}
\STATE Perform eigenvalue decomposition of $\hat{\Omega}_{l-1}$ and $\hat{\Gamma}_{l}$, obtain $Q_{\Omega}, {\Lambda}_{{\Omega}}, {Q}_{\Gamma}, {\Lambda}_{\Gamma}$, which satisfies 
\begin{align*}
    \hat{\Omega}_{l-1} &= Q_{\Omega} {\Lambda}_{{\Omega}} {Q}_{{\Omega}}^{\top}\\
    \hat{\Gamma}_{l} &= {Q}_{\Gamma}{\Lambda}_{\Gamma} {Q}_{\Gamma}^{\top}
\end{align*}
\STATE Define a diagonal matrix 
$\Lambda$ and compute the diagonal element as
$$    \Lambda_{ii}^{*} = n^{-1} \sum_{j =1}^n \left( \left( Q_{\Omega_{l-1}}\otimes Q_{\Gamma_{l}}\right)\nabla_{\theta_l}L_{C_j}\right)^2_i.$$
\STATE Compute $\hat{H}_l^{-1}$ as 
\begin{equation*}
    \hat{H}_l^{-1} = \left(Q_{\Omega_{l-1}} \otimes Q_{\Gamma_{l}}\right)\left(\Lambda+\lambda_{l} I_{d_{l}}\right)^{-1}\left(Q_{\Omega_{l-1}} \otimes Q_{\Gamma_{l}}\right)^{\mathrm{T}}
\end{equation*}
\ENDFOR
\STATE Splice $H_l$ sequentially into large diagonal matrices
$$\hat{H}_{\hat{g}}^{-1} = \left(\begin{array}{ccc}
\hat{H}_1^{-1} & & \mathbf{0} \\
& \ddots & \\
\mathbf{0} & & \hat{H}_d^{-1}
\end{array}\right)$$
where $d$ is the number of the convolution layer of the concept predictor.
\STATE {\bf Return: the inverse Hessian matrix $\hat{H}_{\hat{g}}^{-1}$}.  
\end{algorithmic}
\end{breakablealgorithm}

\begin{breakablealgorithm}
\caption{EK-FAC for Label Predictor $f$ \label{alg:ekf}}
\begin{algorithmic}[1]
    \STATE {\bf Input:} Dataset $\mathcal{D} = \{ (x_i, y_i, c_i) \}_{i=1}^{N}$, original label predictor $\hat{f}$.
 \STATE Denote the pre-activated output of $\hat{f}$ as $f^{\prime}$, 
    Compute $A$ as 
    \begin{equation*}
        A = \frac{1}{n} \cdot \sum_{i=1}^n \hat{g}(x_i)\cdot\hat{g}(x_i)^{\mathrm{T}}  
    \end{equation*}
    \STATE Comput $B$ as:
    \begin{equation*}
        B= \frac{1}{n} \cdot \sum_{i=1}^n \nabla_{f^{\prime}}L_Y(\hat{f}\left(\hat{g}(x_i)\right), y_i)\cdot {\nabla_{f^{\prime}}L_Y(\hat{f}\left(\hat{g}(x_i)\right), y_i)}^{\mathrm{T}}
    \end{equation*}
    \STATE Perform eigenvalue decomposition of AA and BB, obtain $Q_{A}, {\Lambda}_{{A}}, {Q}_{B}, {\Lambda}_{B}$, which satisfies 
        \begin{align*}
           A &= Q_{A} {\Lambda}_{{A}} {Q}_{{A}}^{\top}\\
            B &= {Q}_{B}{\Lambda}_{B} {Q}_{B}^{\top}
        \end{align*}
\STATE Define a diagonal matrix 
$\Lambda$ and compute the diagonal element as
$$    \Lambda_{ii}^{*} = n^{-1} \sum_{j =1}^n \left( \left( Q_{A}\otimes Q_{B}\right)\nabla_{\hat{f}}L_{Y_j}\right)^2_i.$$
\STATE Compute $\hat{H}_{\hat{f}}^{-1}$ as 
\begin{equation*}
   \hat{H}_{\hat{f}}^{-1} = \left(Q_{A} \otimes Q_{B}\right)\left(\Lambda+\lambda I_{d}\right)^{-1}\left(Q_{A} \otimes Q_{B}\right)^{\mathrm{T}}
\end{equation*}
    \STATE {\bf Return: the inverse Hessian matrix $\hat{H}_{\hat{f}}^{-1}$}.  
\end{algorithmic}
\end{breakablealgorithm}

\begin{breakablealgorithm}
\caption{EK-FAC Concept-label-level ECBM \label{alg:4}}
\begin{algorithmic}[1]
\STATE {\bf Input:} Dataset $\mathcal{D} = \{ (x_i, y_i, c_i) \}_{i=1}^{N}$, original concept predictor $\hat{f}$, label predictor $\hat{g}$, and the to be removed data index set $G$, and damping parameter $\lambda$.
\STATE For $r\in G$, remove the $r$-th data $(x_r, y_r, c_r)$ from $\mathcal{D}$ and define the new dataset as $\mathcal{S}$.
\STATE {\bf Use EK-FAC method in algorithm \ref{alg:ekg} to accelerate iHVP problem for $\hat{g}$ and obtain the inverse Hessian matrix $\hat{H}_{\hat{g}}^{-1}$}
    \STATE Update concept predictor $\Tilde{g}$:
    \begin{equation*}
            \Tilde{g} = \hat{g} -H^{-1}_{\hat{g}}  \cdot  \sum_{(w,r)\in S_e}\left(\nabla_{\hat{g}}L_{C_r}\left(\hat{g}^r(x_w),{c_w^r}^{\prime}\right) - \nabla_{\hat{g}}L_{C_r}\left(\hat{g}^r(x_w),c_w^r\right) \right).
    \end{equation*}
    \STATE {\bf Use EK-FAC method in algorithm \ref{alg:ekf} to accelerate iHVP problem for $\hat{f}$ and obtain $\hat{H}_{\hat{f}}^{-1}$}
    \STATE Update label predictor $\Tilde{f}$:
    \begin{align*}
     \Tilde{f} &= \hat{f} + H^{-1}_{\hat{f}} \cdot \nabla_f \sum_{i=1}^n L_{Y} \left(   \hat{f} \left( \hat{g} ( x_{i} ) 
 \right) , y_{i} \right)- H^{-1}_{\hat{f}} \cdot \nabla_f \sum_{l=1}^n \left( L_Y \left( \hat{f} \left( \Tilde{g}(x_l)  \right) , y_{l} \right) \right).
   \end{align*}
    \STATE {\bf Return: $\Tilde{f}$, $\Tilde{g}$}.  
\end{algorithmic}
\end{breakablealgorithm}

\begin{breakablealgorithm}
\caption{EK-FAC Concept-level ECBM \label{alg:5}}
\begin{algorithmic}[1]
\STATE {\bf Input:} Dataset $\mathcal{D} = \{ (x_i, y_i, c_i) \}_{i=1}^{ n}$, original concept predictor $\hat{f}$, label predictor $\hat{g}$ and the to be removed concept index set $M$, and damping parameter $\lambda$.
\STATE For $r\in M$, set $p_r = 0$ for all the data $z\in \mathcal{D}$.
\STATE {\bf Use EK-FAC method in algorithm \ref{alg:ekg} to accelerate iHVP problem for $\hat{g}$ and obtain the inverse Hessian matrix $\hat{H}_{\hat{g}}^{-1}$}
    \STATE Update concept predictor $\Tilde{g}$:
 \begin{equation*}
        \Tilde{g}^* = \hat{g} - H^{-1}_{\hat{g}} \cdot \nabla_{\hat{g}} \sum_{j\notin M} \sum_{i=1}^{n} L_{C_j} ( \hat{g}^{j} (x_i), c_i^j).
    \end{equation*}
   \STATE {\bf Use EK-FAC method in algorithm \ref{alg:ekf} to accelerate iHVP problem for $\hat{f}$ and obtain $\hat{H}_{\hat{f}}^{-1}$}
    \STATE Update label predictor $\Tilde{f}$:
    \begin{equation*}
    \Tilde{f} =  \hat{f}-H_{\hat{f}}^{-1} \cdot   \nabla_{\hat{f}} \sum_{l   =1}^{n} L_{Y} \left( \hat{f} \left( \Tilde{g}^*(x_l) \right)  , y_l \right).
    \end{equation*}
    \STATE Map $\Tilde{g}^*$ to $\Tilde{g}$ by removing the $r$-th row of the matrix in the final layer of $\Tilde{g}^*$ for $r\in M$.
    \STATE {\bf Return: $\Tilde{f}$, $\Tilde{g}$.}
\end{algorithmic}
\end{breakablealgorithm}

\begin{breakablealgorithm}
    \caption{EK-FAC Data-level ECBM \label{alg:6}}
    \begin{algorithmic}[1]
\STATE {\bf Input:} Dataset $\mathcal{D} = \{ (x_i, y_i, c_i) \}_{i=1}^{n}$, original concept predictor $\hat{f}$, and label predictor $\hat{g}$, a set of erroneous data $D_e$ and its associated index set $S_e$, and damping parameter $\lambda$.
\STATE For the index $(w, r)$ in $S_e$, correct ${c_w^r}$ to the right label ${c_w^r}^{\prime}$ for the $w$-th data $(x_w, y_w, c_w)$.
\STATE {\bf Use EK-FAC method in algorithm \ref{alg:ekg} to accelerate iHVP problem for $\hat{g}$ and obtain the inverse Hessian matrix $\hat{H}_{\hat{g}}^{-1}$}
\STATE Update concept predictor $\Tilde{g}$:
\begin{equation*}
        \Tilde{g} = \hat{g} -H^{-1}_{\hat{g}}  \cdot  \sum_{(w,r)\in S_e}\left(\nabla_{\hat{g}}L_{C_r}\left(\hat{g}^r(x_w),{c_w^r}^{\prime}\right) - \nabla_{\hat{g}}L_{C_r}\left(\hat{g}^r(x_w),c_w^r\right) \right).
\end{equation*}
\STATE {\bf Use EK-FAC method in algorithm \ref{alg:ekf} to accelerate iHVP problem for $\hat{f}$ and obtain $H_{\hat{f}}^{-1}$}
Compute $A$ as:
\begin{equation*}
       A =  H^{-1}_{\hat{f}} \cdot  \sum_{i\in [n]-G} \nabla_{\hat{f}} L_{Y}\left(\hat{f}(\hat{g}(x_i)), y_i\right)
\end{equation*}

Obtain $\bar{f}$ as 
\begin{equation*}
    \bar{f} = \hat{f} + A
\end{equation*}
    \STATE {\bf Use EK-FAC method in algorithm \ref{alg:ekf} to accelerate iHVP problem for $\bar{f}$ and obtain ${H}_{\bar{f}}^{-1}$}
Compute $B^{\prime}$ as
\begin{equation*}
    B^{\prime} = -H^{-1}_{\bar{f}}\cdot \sum_{i\in [n]-G}\nabla_{\hat{f}}\left(    L_Y(\bar{f}(\Tilde{g}(x_i)), y_i) - L_Y(\bar{f}(\hat{g}(x_i)), y_i)\right)
\end{equation*}
Update the label predictor $\Tilde{f}$ as: $\Tilde{f} =  \hat{f} + A + B^{\prime}$.
\STATE {\bf Return: $\Tilde{f}$, $\Tilde{g}$}.  
\end{algorithmic}
\end{breakablealgorithm}
\section{Additional Experiments}
\label{sec:appendix:exp}
\subsection{Experimental Setting}

\noindent{\bf Methodology for Processing CUB Dataset}
For CUB dataset, we follow the setting in ~\cite{koh2020concept}. We aggregate instance-level concept annotations into class-level concepts via majority voting: e.g., if more than 50\% of crows have black wings in the data, then we set all crows to have black wings.



\noindent{\bf RMIA score.}
 The RMIA score is computed as:

\begin{equation*}
LR_\theta(x, z) \approx \frac{\text{Pr}(f_\theta(x)|\mathcal{N}(\mu_{x,\bar{z}}(x), \sigma^2_{x,\bar{z}}(x)))}{\text{Pr}(f_\theta(x)|\mathcal{N}(\mu_{\bar{x},z}(x), \sigma^2_{\bar{x},z}(x)))} \times \frac{\text{Pr}(f_\theta(z)|\mathcal{N}(\mu_{x,\bar{z}}(z), \sigma^2_{x,\bar{z}}(z)))}{\text{Pr}(f_\theta(z)|\mathcal{N}(\mu_{\bar{x},z}(z), \sigma^2_{\bar{x},z}(z)))}
\end{equation*}

where \( f_\theta(x) \) represents the model's output (logits) for the data point \(x\), \( \mathcal{N}(\mu, \sigma^2) \) denotes a Gaussian distribution with mean \( \mu \) and variance \( \sigma^2 \), \( \mu_{x,\bar{z}}(x) \) is the mean of the model's outputs for \( x \) under the assumption that \( x \) belongs to the training set, and \( \sigma^2_{x,\bar{z}}(x) \) is the variance of the model's outputs for \( x \). The likelihoods \( \text{Pr}(f_\theta(x)|\mathcal{N}) \) represent the probability that the model's output \( f_\theta(x) \) follows the Gaussian distribution parameterized by \( \mu \) and \( \sigma^2 \), under the two different hypotheses: \( x \) being a member of the training set versus not being a member.

\subsection{Improvement via Harmful Data Removal}
We conducted additional experiments on CUB datasets with synthetically introduced noisy concepts or labels. Firstly, we introduce noises under three levels. At the concept level, we choose 10\% of the concepts and flip these concept labels for a portion of the data. At the data level, we choose 10\% of the data and flip their labels. At the concept-label level, we choose 10\% of the total concepts and flip them. Then, we conduct the following experiments.

We introduce noises into the three levels and train the model. After that, we remove the noise and obtain the retrained model, which is the ground truth(gt) of this harmful data removal task. In contrast, we use ECBM to remove the harmful data. 

\begin{figure}[ht]
    \centering
    \begin{tabular}{c}
        \includegraphics[width=0.95\linewidth]{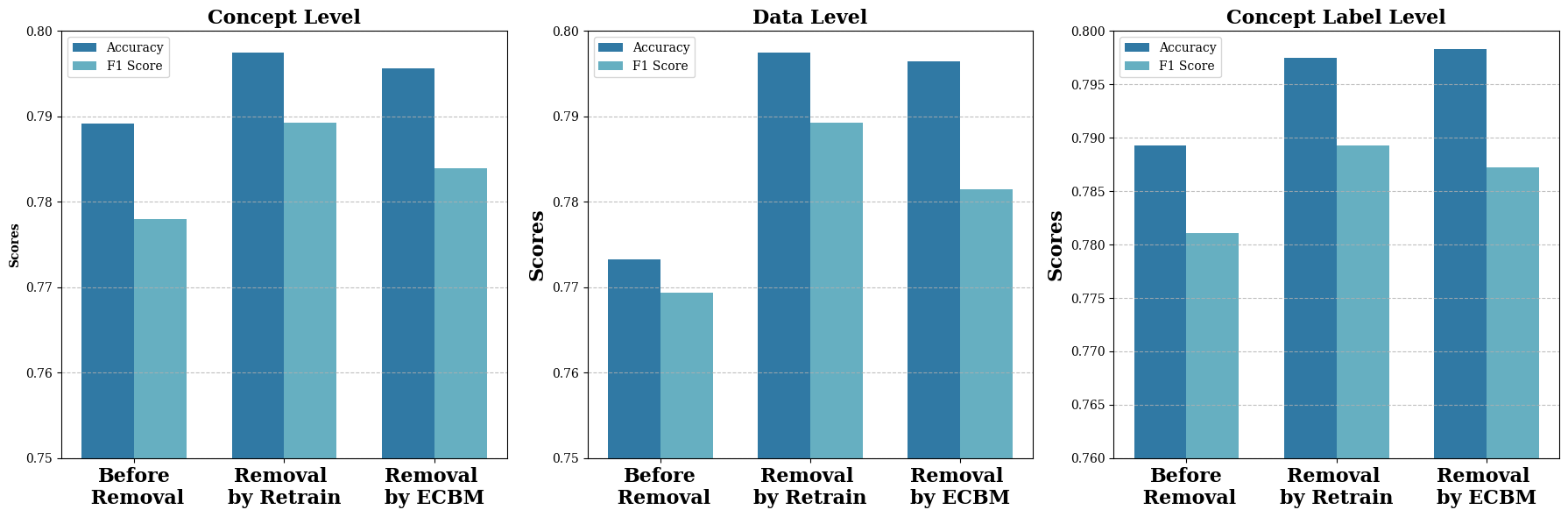} 
    \end{tabular}
    \caption{Model performance after the removal of harmful data.}
    \label{hramful_removal}
\end{figure}

From Figure \ref{hramful_removal}, it can be observed that the model performance improves across all three settings after noise removal and subsequent retraining or ECBM editing. This confirms that the performance of ECBM is nearly equivalent to retraining in various experimental scenarios, further providing evidence of the robustness of our method.

\subsection{Periodic Editing Performance}
ECBM can perform periodic editing. To evaluate the multiple editing performance of ECBM, we conduct the following experiments. Firstly, we introduce noises under three levels. At the concept level, we choose 10\% of the concepts and flip these concept labels for a portion of the data. At the data level, we choose 10\% of the data and flip their labels. At the concept-label level, we choose 10\% of the total concepts and flip them. Then, we conduct the following experiments.

At the concept level, we first remove 1\% of the concepts, then retrain or use ECBM to edit and repeat. In the data level, we first remove 1\% of the data, then retrain or use ECBM to edit. At the concept label level, we first remove one concept label from 1\% of the data, then retrain or use ECBM to edit. Note that when removing the next 1\% of the concepts, ECBM edits the model based on the last editing result. The results at each level are shown in Figure~\ref{fig:pe_con}, ~\ref{fig:pe_da} and~\ref{fig:pe_con_la}.

From the above three levels, we can find that with the mislabeled information removed, the retrained model achieves better performance in both accuracy and F1 score than the initial model. Furthermore, the performance of the ECBM-edited model is similar to that of the retrained model, even after 10 rounds of editing, which demonstrates the ability of our ECBM method to handle multiple edits.

\begin{figure*}[ht]
\centering
    \subfigure[The accuracy of the edited model compared with retrained.]
    {
    \includegraphics[width=0.45\linewidth]{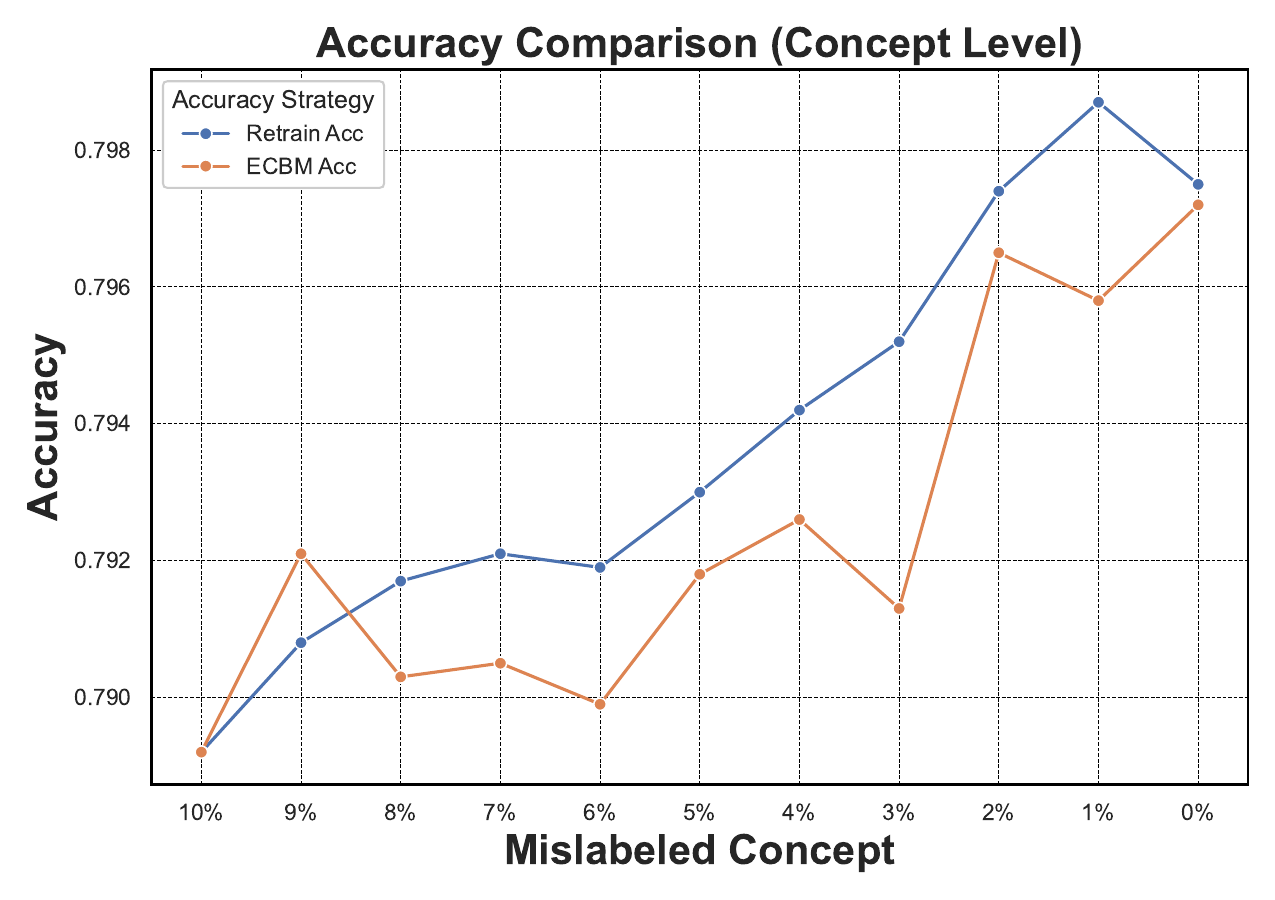}
    }
    \subfigure[The F1 score of the edited model compared with retrained.]
    {
    \includegraphics[width=0.45\linewidth]{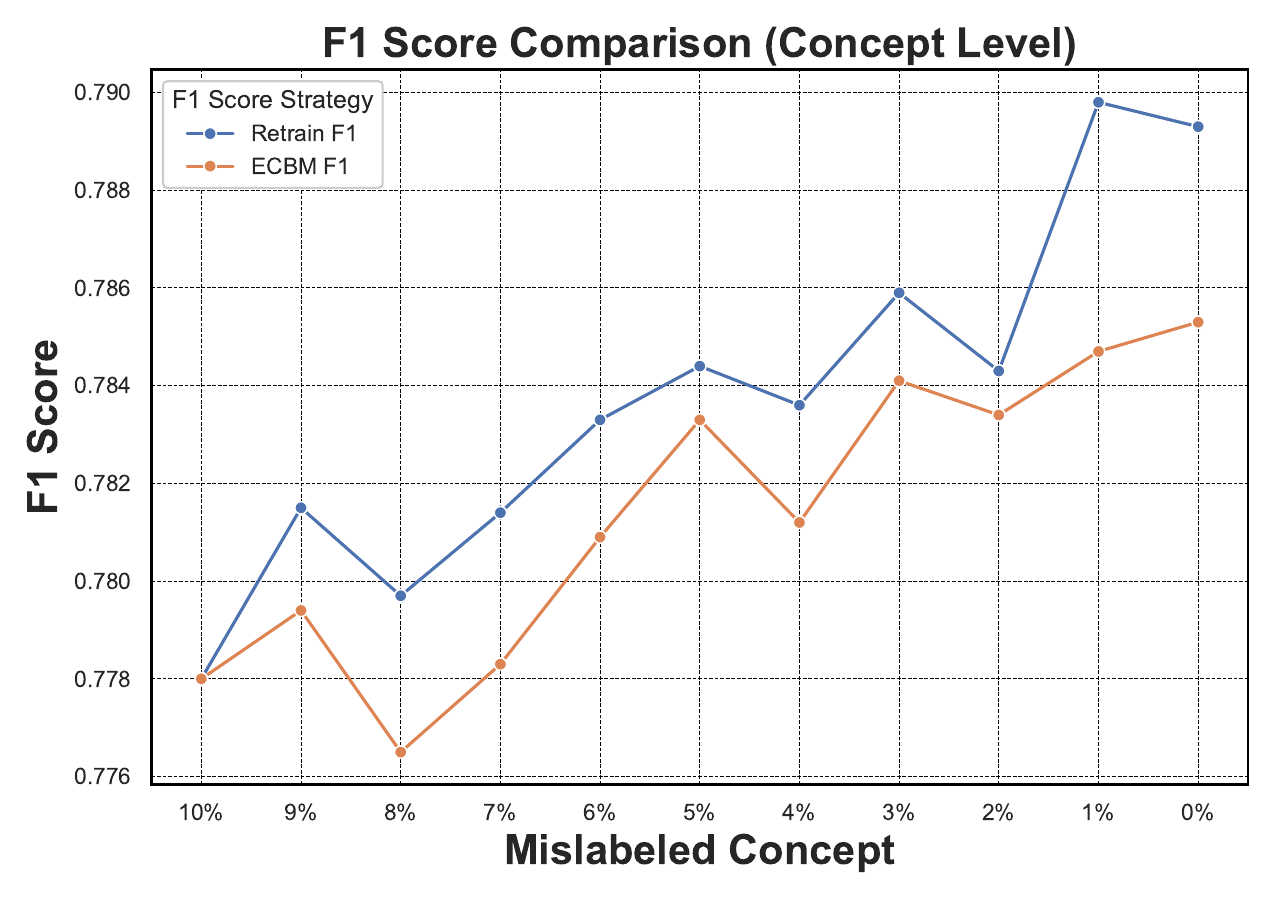}
    }
\caption{Accuracy and F1 score difference of the edited model compared with retrained at concept level.}
\label{fig:pe_con}
\end{figure*}

\begin{figure*}[ht]
\centering
    \subfigure[The accuracy of the edited model compared with retrained.]
    {
        \includegraphics[width=0.45\linewidth]{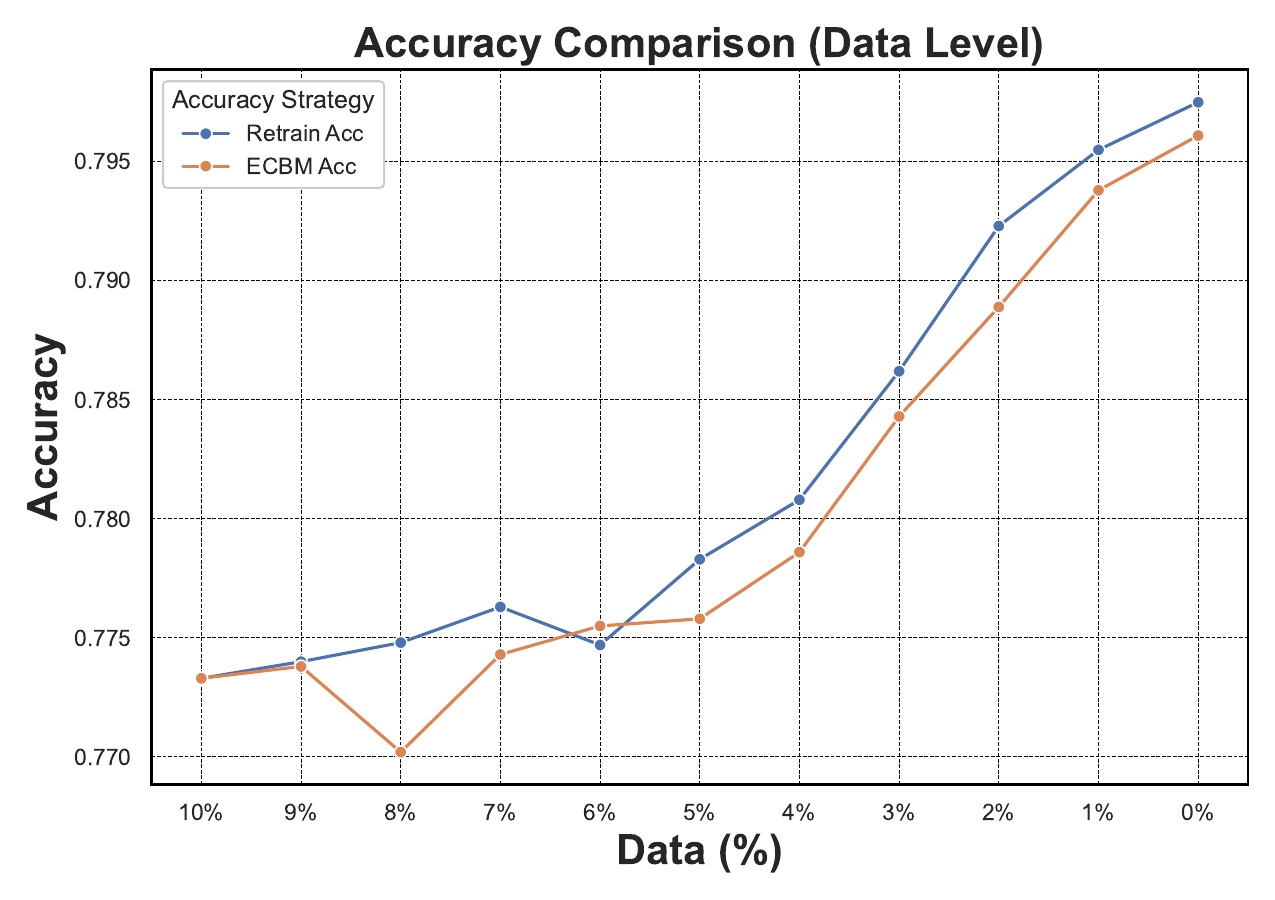}
    }
    \subfigure[he F1 score of the edited model compared with retrained.]
    {
        \includegraphics[width=0.45\linewidth]{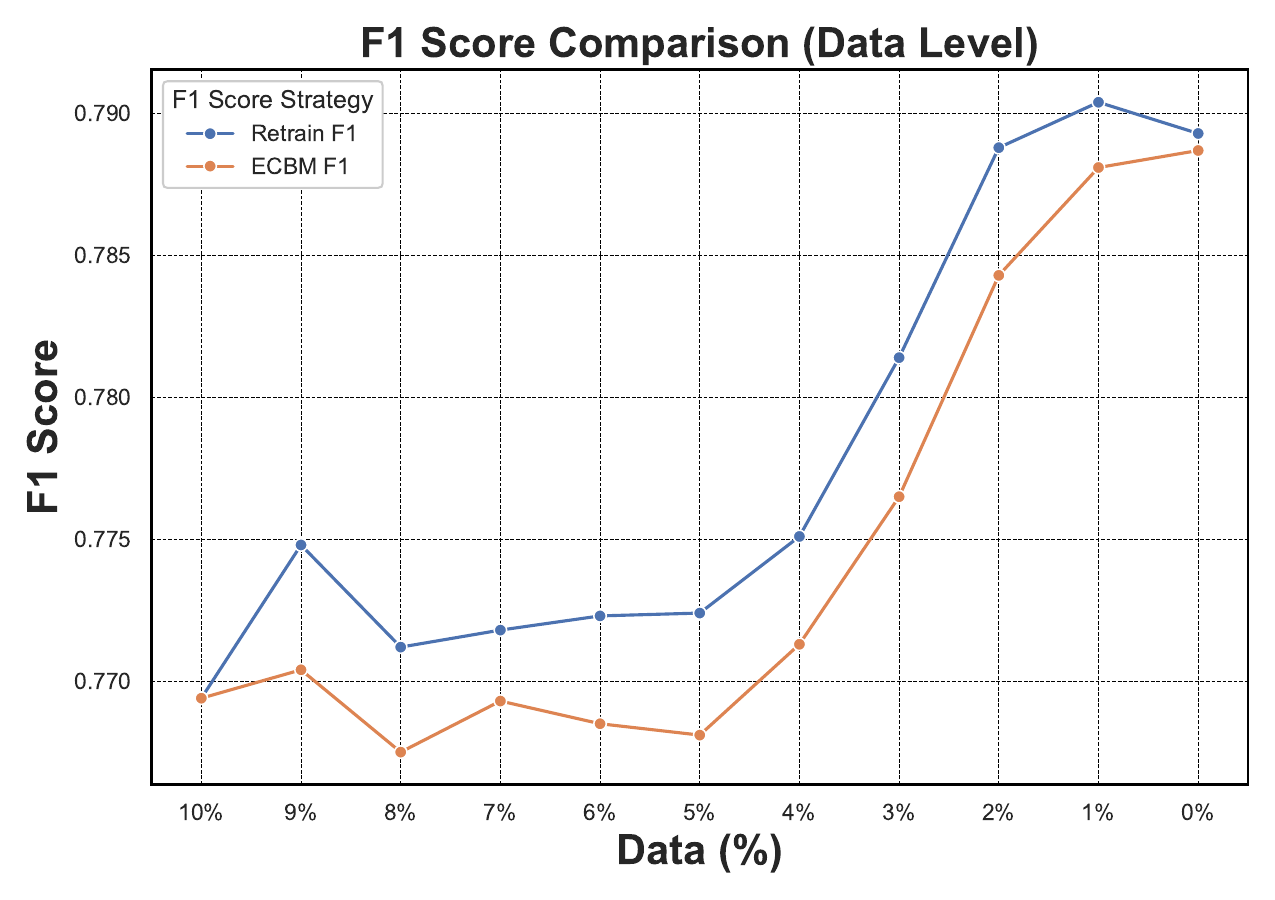}
    }
\caption{Accuracy and F1 score difference of the edited model compared with retrained at data level.}
\label{fig:pe_da}
\end{figure*}

\begin{figure*}[ht]
\centering
    \subfigure[The accuracy of the edited model compared with retrained.]
    {
        \includegraphics[width=0.45\linewidth]{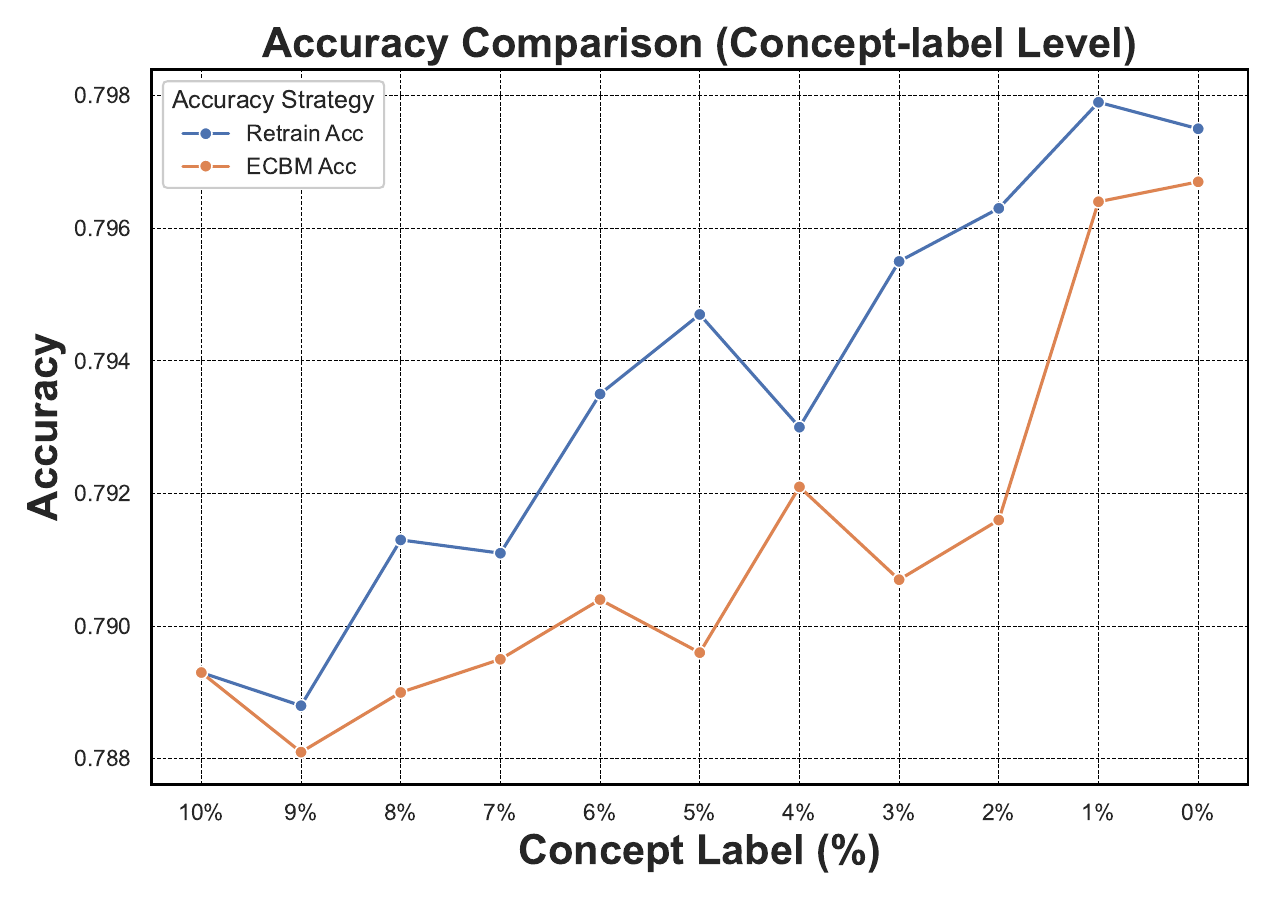}
    }
    \subfigure[The F1 score of the edited model compared with retrained.]
    {
        \includegraphics[width=0.45\linewidth]{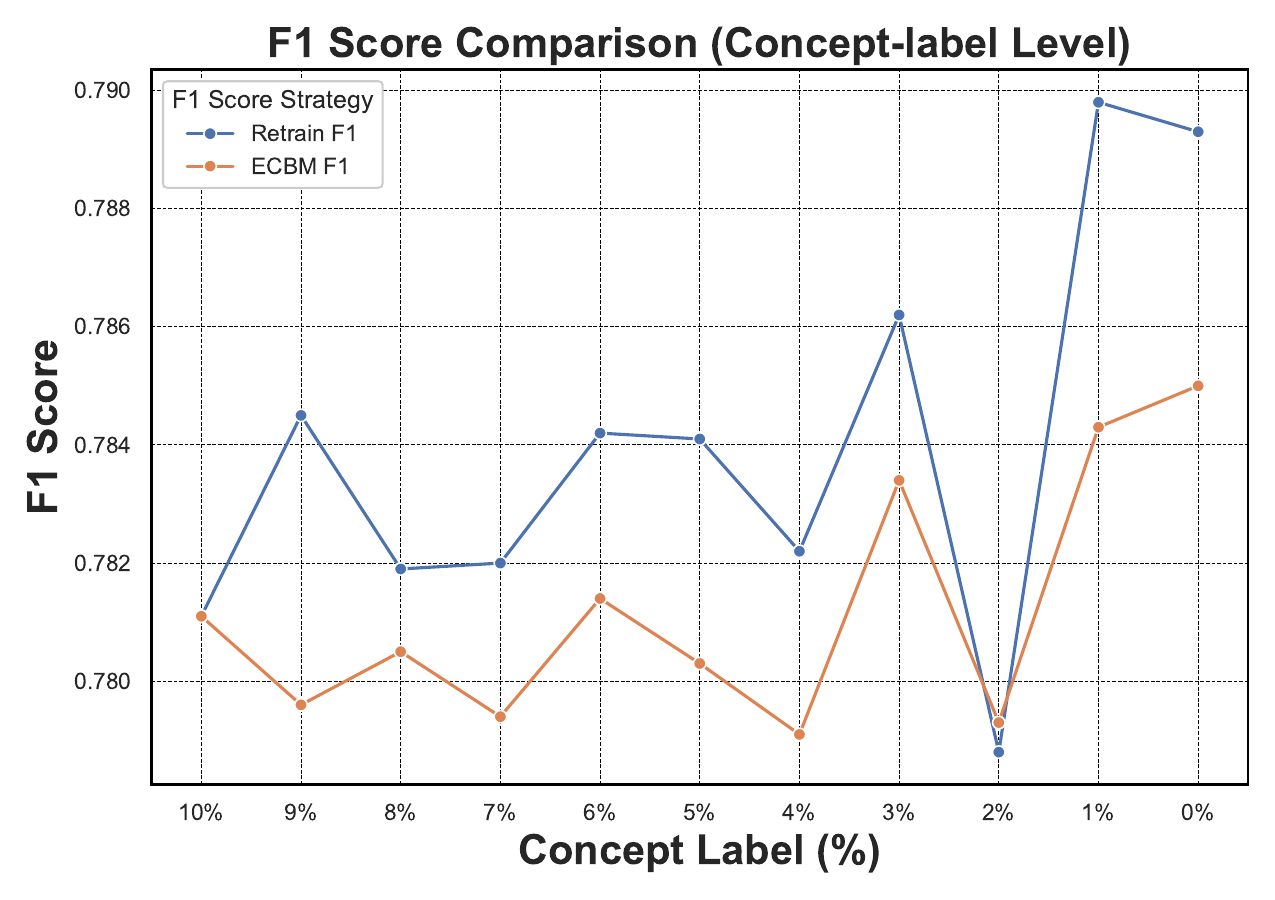}
    }
\caption{Accuracy and F1 score difference of the edited model compared with retrained at concept-label level.}
\label{fig:pe_con_la}
\end{figure*}

\subsection{More Visualization Results and Explanation}
\noindent{\bf Visualization.} Since CBM is an explainable model, we aim to evaluate the interpretability of our ECBM (compared to the retraining). We will present some visualization results for the concept-level edit. Figure~\ref{fig:vertical_images} presents the top 10 most influential concepts and their corresponding predicted concept labels obtained by our ECBM and the retrain method after randomly deleting concepts for the CUB dataset. (Detailed explanation can be found in Appendix \ref{app:explain}.)
Our ECBM can provide explanations for which concepts are crucial and how they assist the prediction.
Specifically, among the top 10 most important concepts in the ground truth (retraining), ECBM can accurately recognize 9 within them. 
For instance, we correctly identify "has\_upperparts\_color::orange", "has\_upper\_tail\_color::red", and "has\_breast\_color::black" as some of the most important concepts when predicting categories. Additional visualization results under data level and concept-label level on OAI and CUB datasets are included in Appendix \ref{app:visual}.
\begin{figure*}[ht]
    \centering
    \begin{tabular}{cc}
\includegraphics[width=0.49\linewidth]{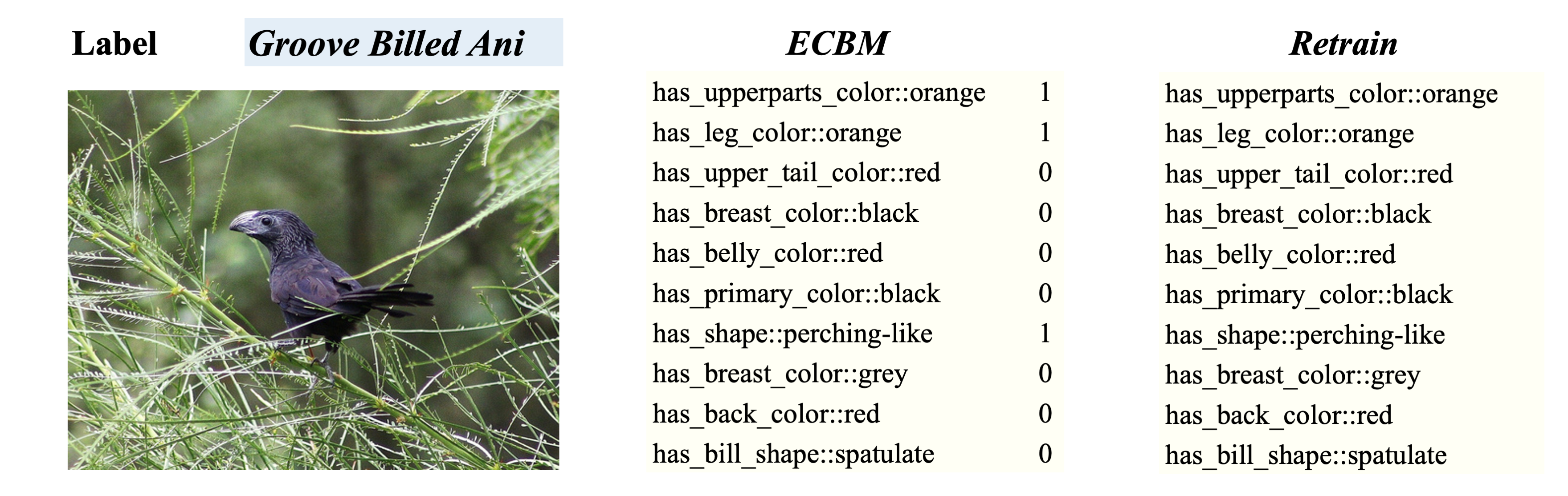} 
\includegraphics[width=0.49\linewidth]{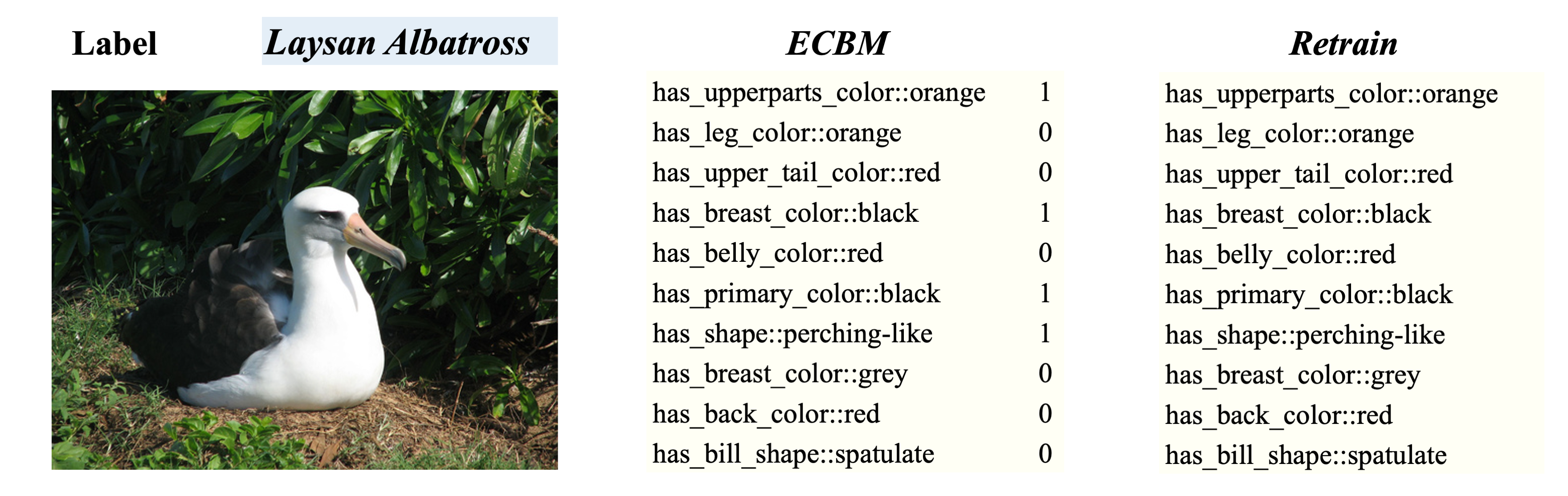}
    \end{tabular}
    \caption{Visualization of the Top 10 Most Influential Concepts for CBM(Identified by ECBM or Retrain) Highlighted on an Extracted Image.}
    \vspace{-8pt}
    \label{fig:vertical_images}
\end{figure*}


\subsubsection{Explanation for Visualization Results}\label{app:explain}

At the concept level, we remove each concept one at a time, retrain the CBM, and subsequently evaluate the model performance. We rank the concepts in descending order based on the model performance loss. Concepts that, when removed, cause significant changes in model performance are considered influential concepts. The top 10 concepts are shown in the retrain column as illustrated in Figure~\ref{fig:vertical_images}. In contrast, we use our ECBM method instead of the retrain method, as outlined in Algorithm~\ref{alg:5}, and the top 10 concepts are shown in the ECBM column of Figure~\ref{fig:vertical_images}.

To help readers connect the top 10 influential concepts with the input image, we provide visualizations of the data and list the concept labels corresponding to the top 10 influential concepts, which are shown in Figure~\ref{fig:vertical_images},\ref{fig:vertical_images1}, \ref{fig:vertical_images2}.

For the other two levels and for additional datasets, we also conduct a similar procedure, and the corresponding visualization results are presented in Figure~\ref{fig:vertical_images3}, \ref{fig:vertical_images4}, \ref{fig:vertical_images5}, \ref{fig:vertical_images6}, and \ref{fig:vertical_images7}.

\subsubsection{Visualization Results}\label{app:visual}
We provide our additional visualization results in Figure \ref{fig:vertical_images1}, \ref{fig:vertical_images2}, \ref{fig:vertical_images3}, \ref{fig:vertical_images4}, \ref{fig:vertical_images5}, \ref{fig:vertical_images6}, and \ref{fig:vertical_images7}.

\begin{figure}[ht]
    \centering
    \begin{tabular}{cc}
        \includegraphics[width=0.95\linewidth]{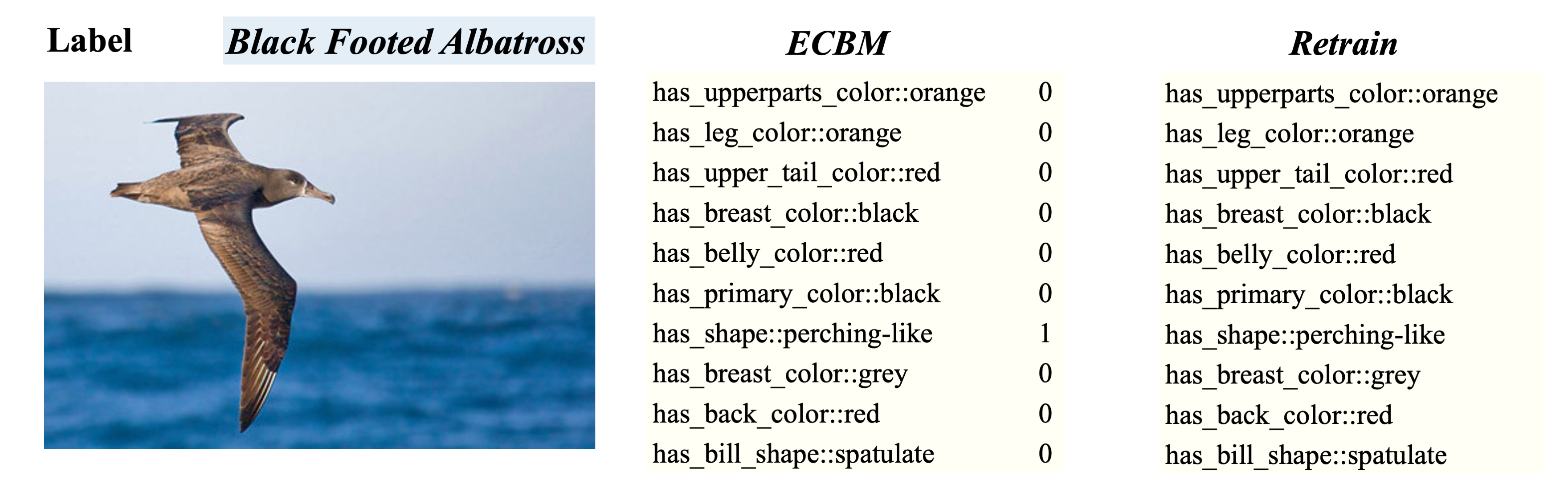}\\
        \includegraphics[width=0.95\linewidth]{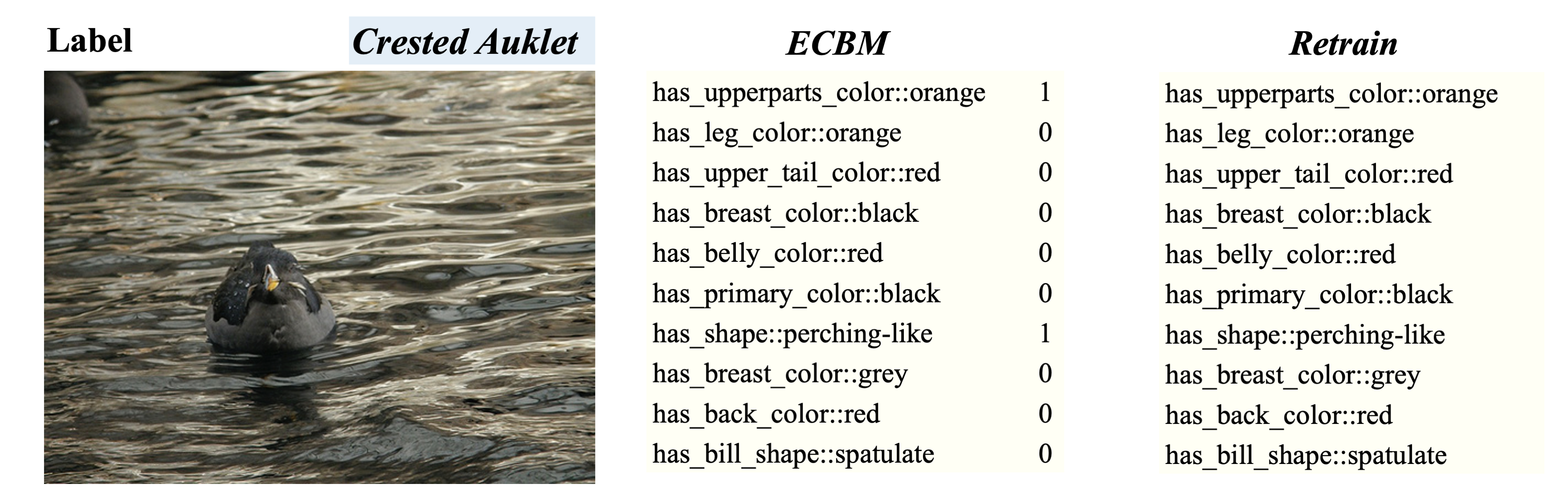}\\
        \includegraphics[width=0.95\linewidth]{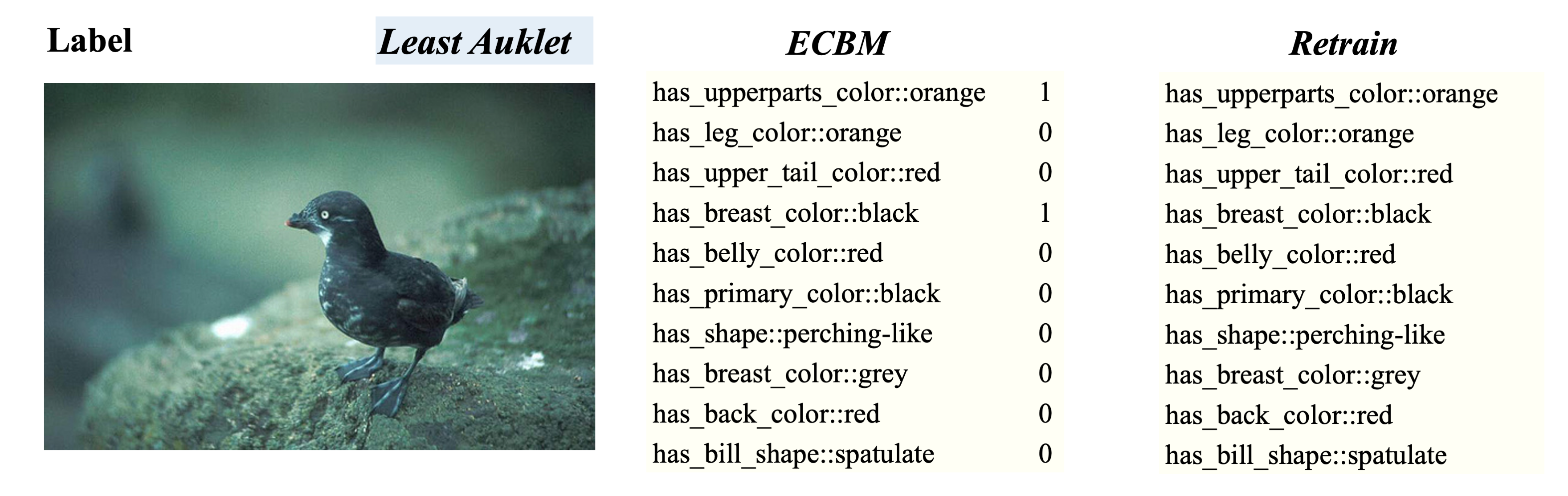}\\
        \includegraphics[width=0.95\linewidth]{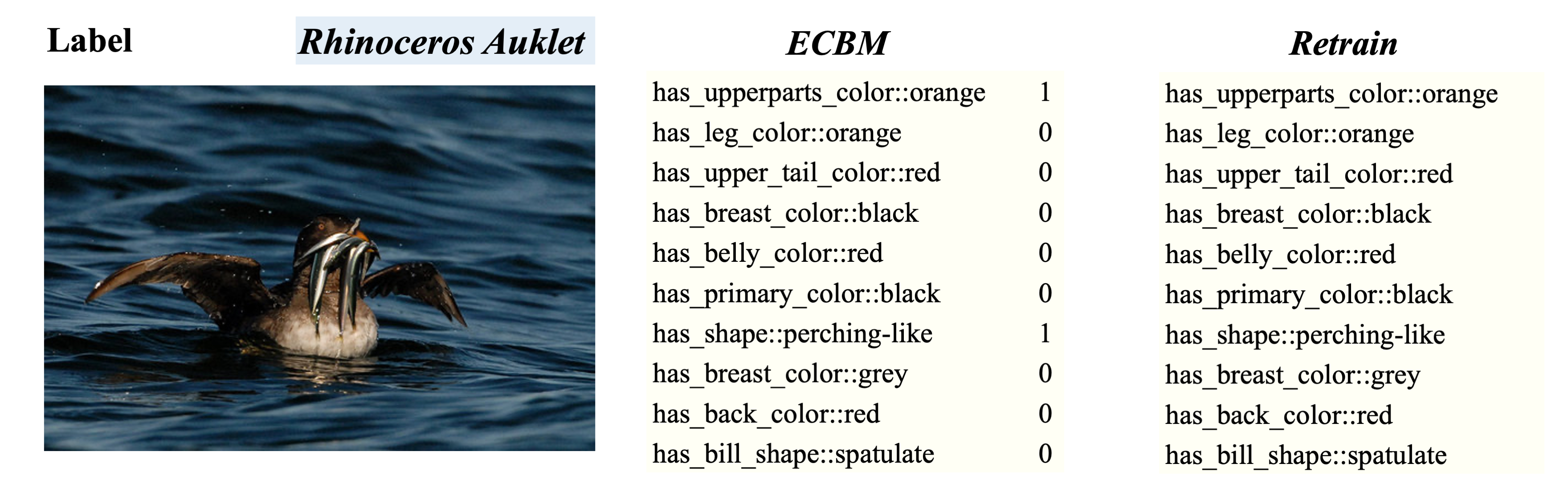}
    \end{tabular}
    \caption{Visualization of the top-10 most influential concepts for different classes in CUB.}
    \label{fig:vertical_images1}
\end{figure}
\begin{figure}[ht]
    \centering
    \begin{tabular}{cc}
        \includegraphics[width=0.95\linewidth]{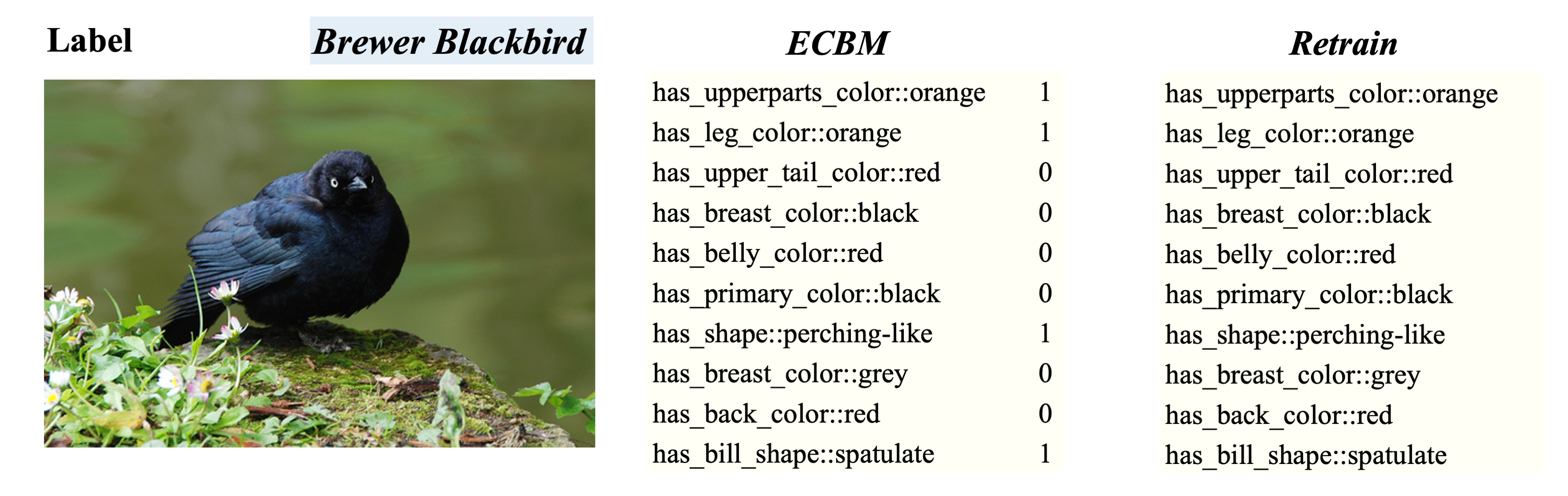}\\
        \includegraphics[width=0.95\linewidth]{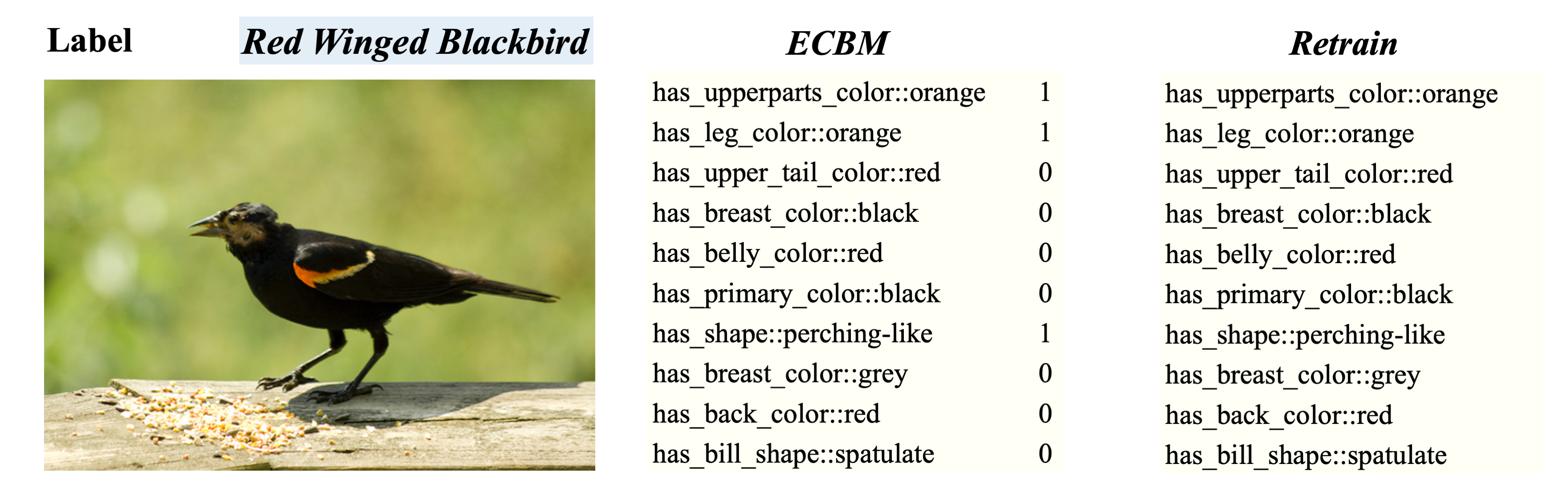}\\
        \includegraphics[width=0.95\linewidth]{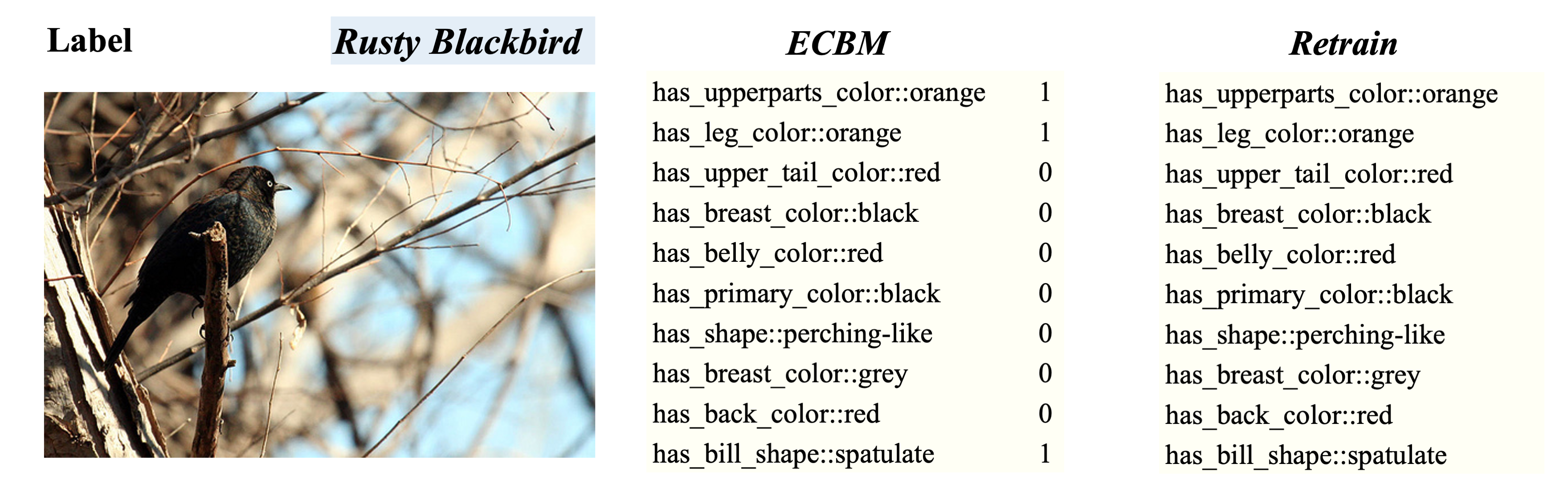}\\
        \includegraphics[width=0.95\linewidth]{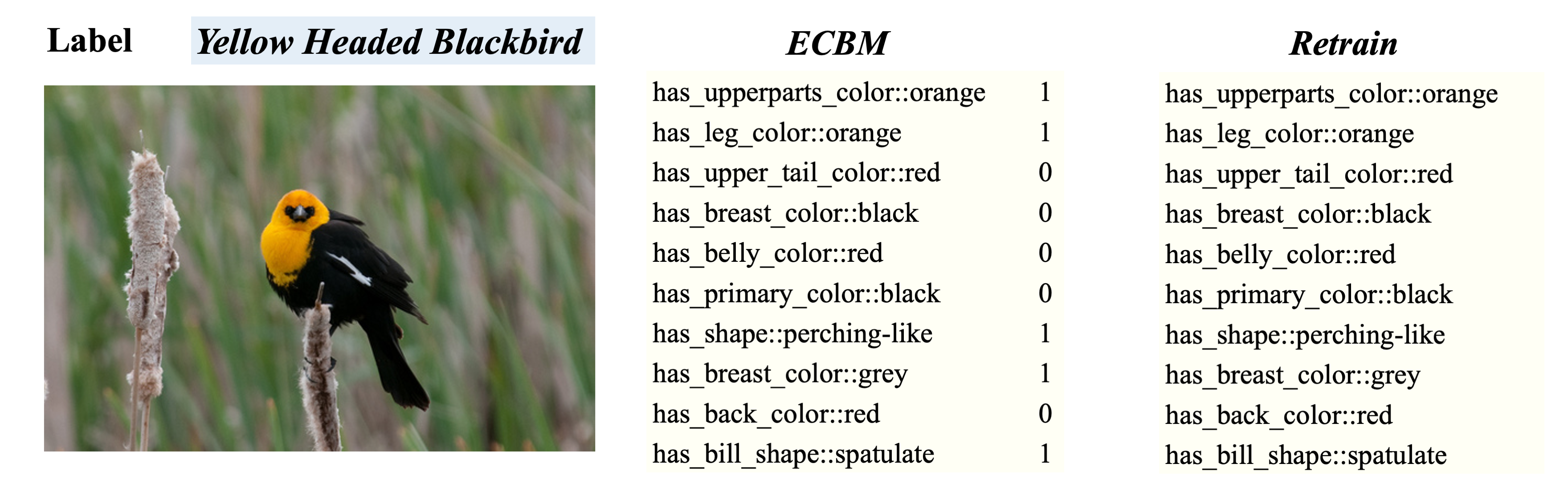}
    \end{tabular}
    \caption{Visualization of the top-10 most influential concepts for different classes in CUB.}
    \label{fig:vertical_images2}
\end{figure}
\begin{figure}[ht]
    \centering
    \begin{tabular}{cc}
        \includegraphics[width=0.95\linewidth]{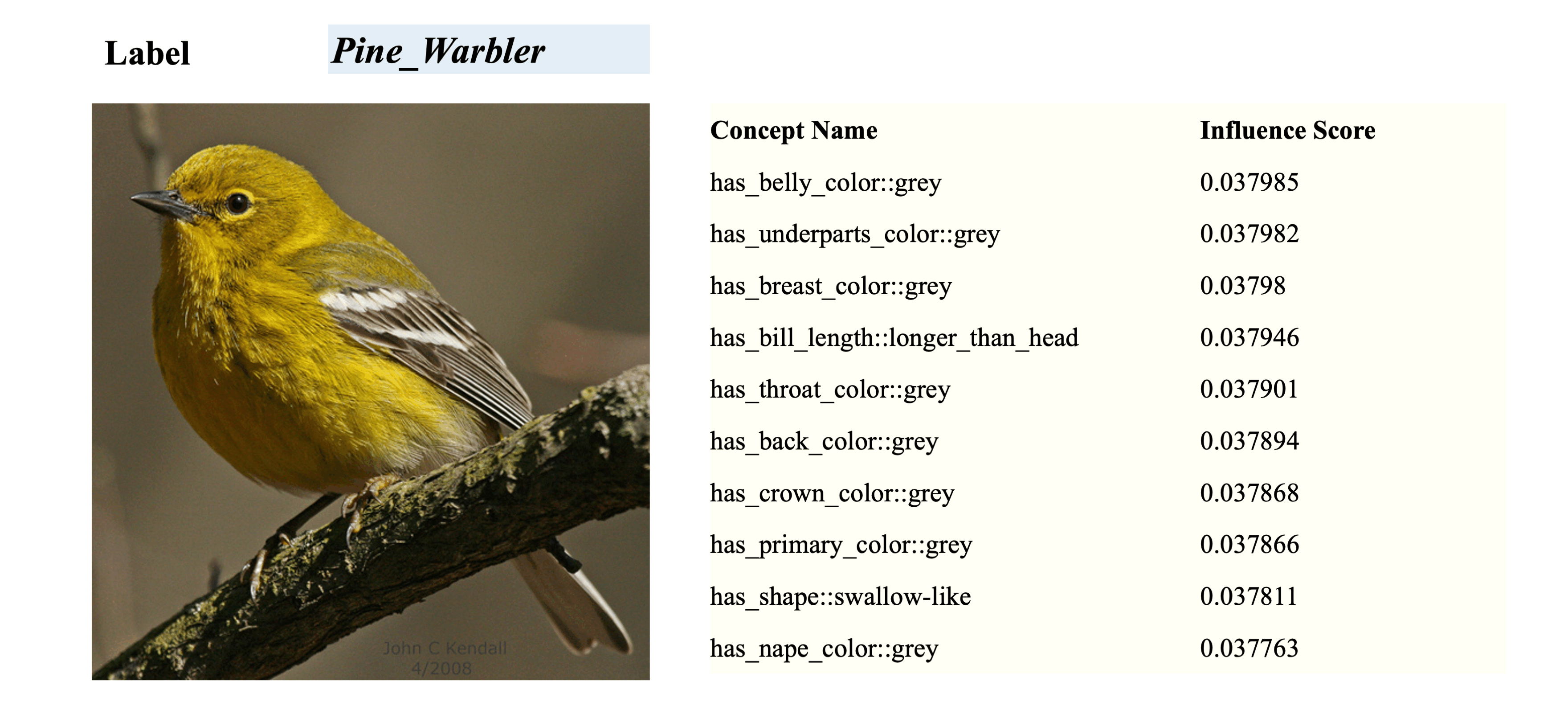}\\
        \includegraphics[width=0.95\linewidth]{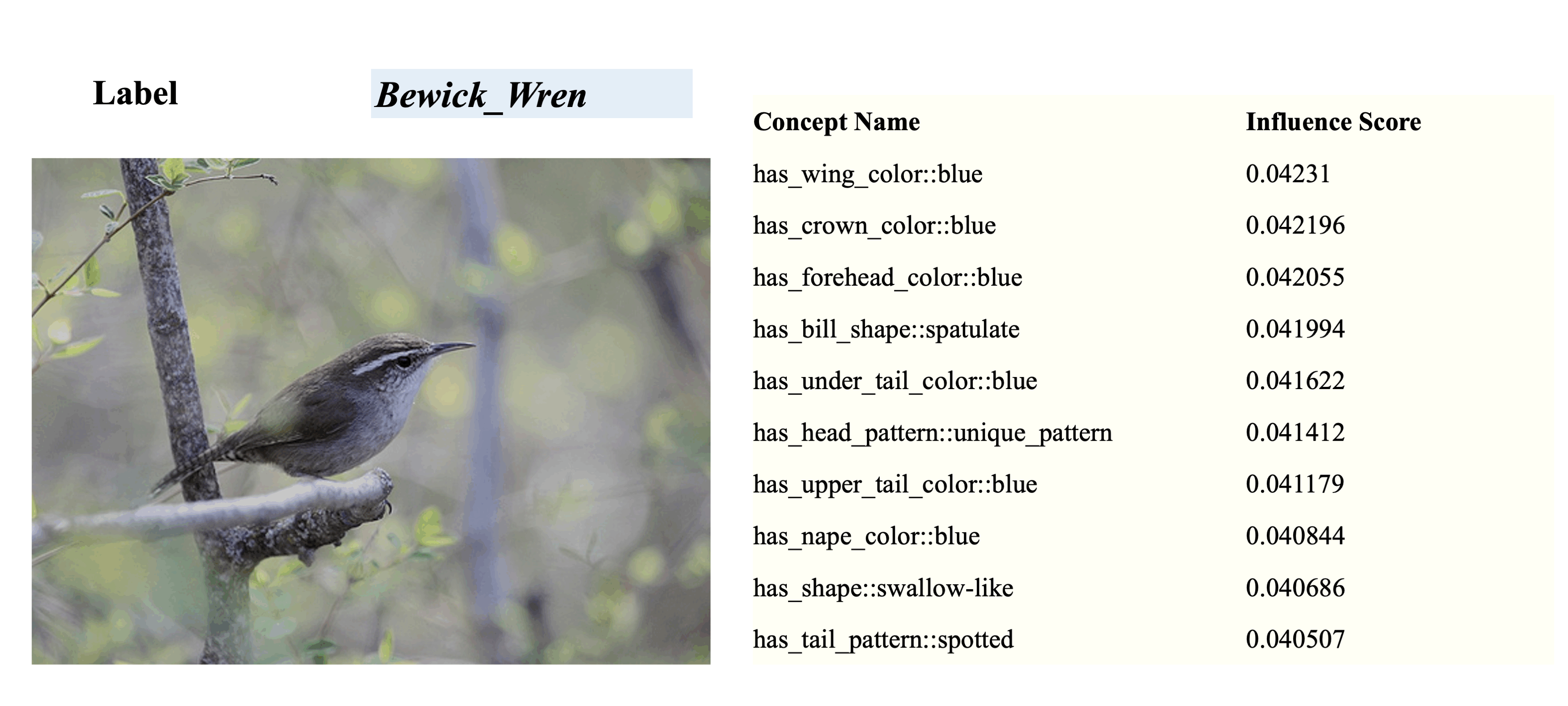}\\
        \includegraphics[width=0.95\linewidth]{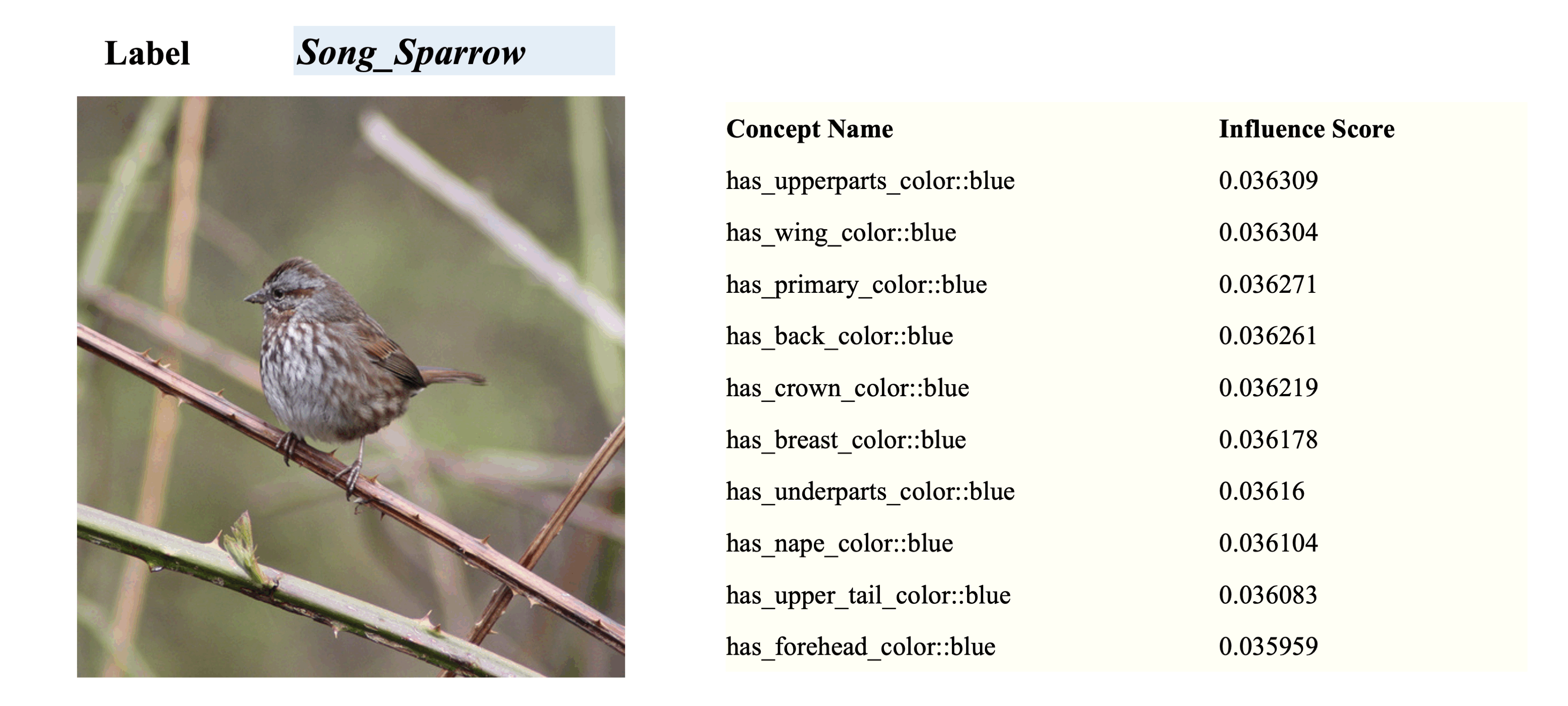}
    \end{tabular}
    \caption{Visualization of the most influential concept label related to different data in CUB.}
    \label{fig:vertical_images3}
\end{figure}
\begin{figure}[ht]
    \centering
    \begin{tabular}{cc}
        \includegraphics[width=0.95\linewidth]{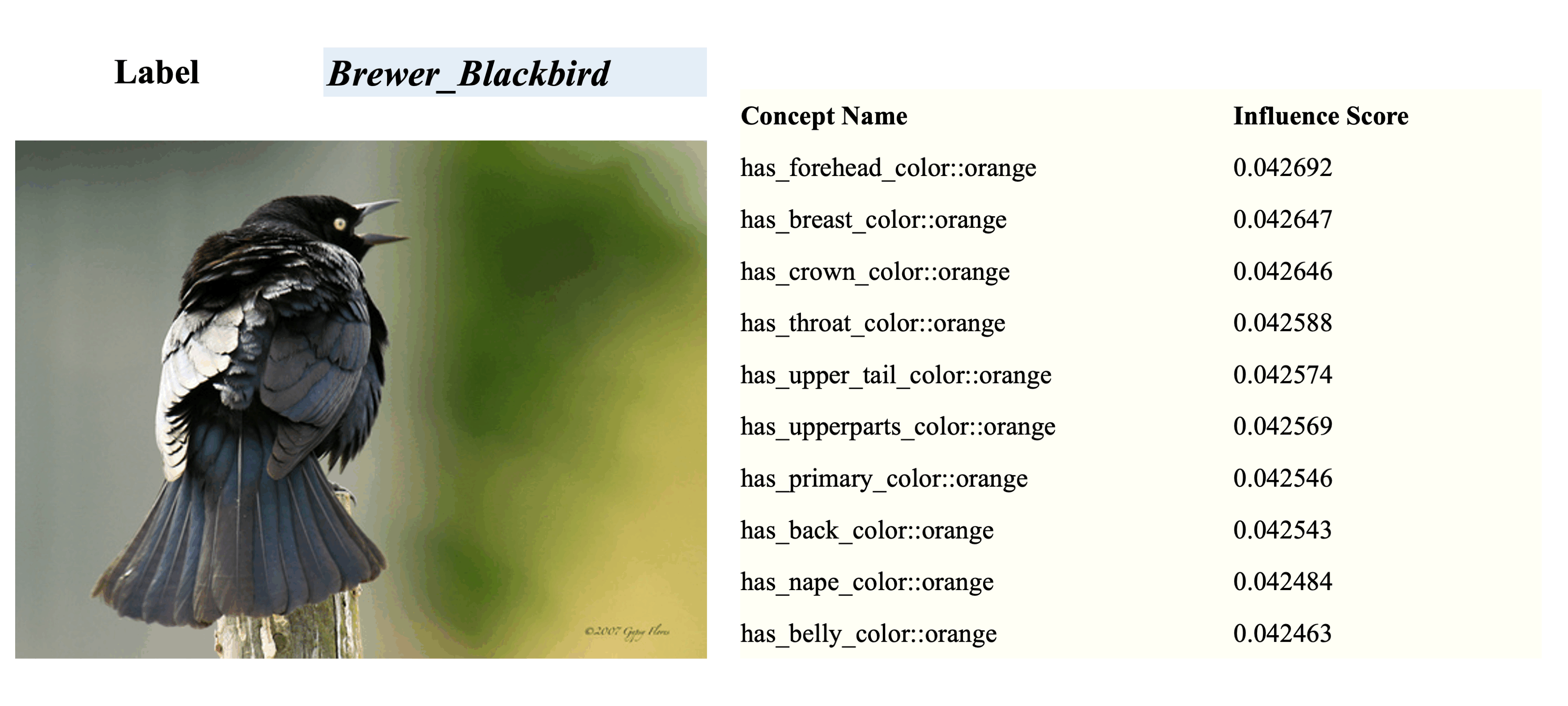}\\
        \includegraphics[width=0.95\linewidth]{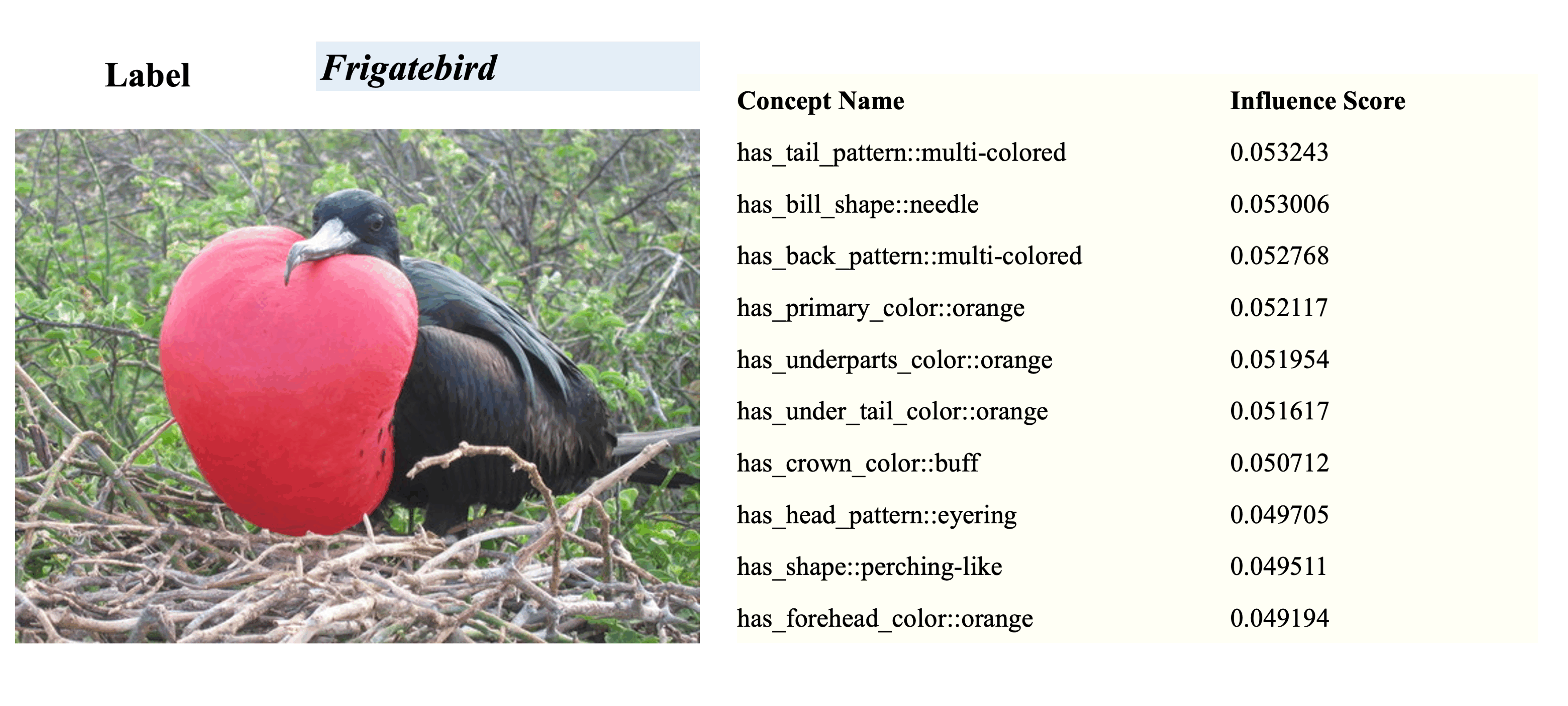}\\
        \includegraphics[width=0.95\linewidth]{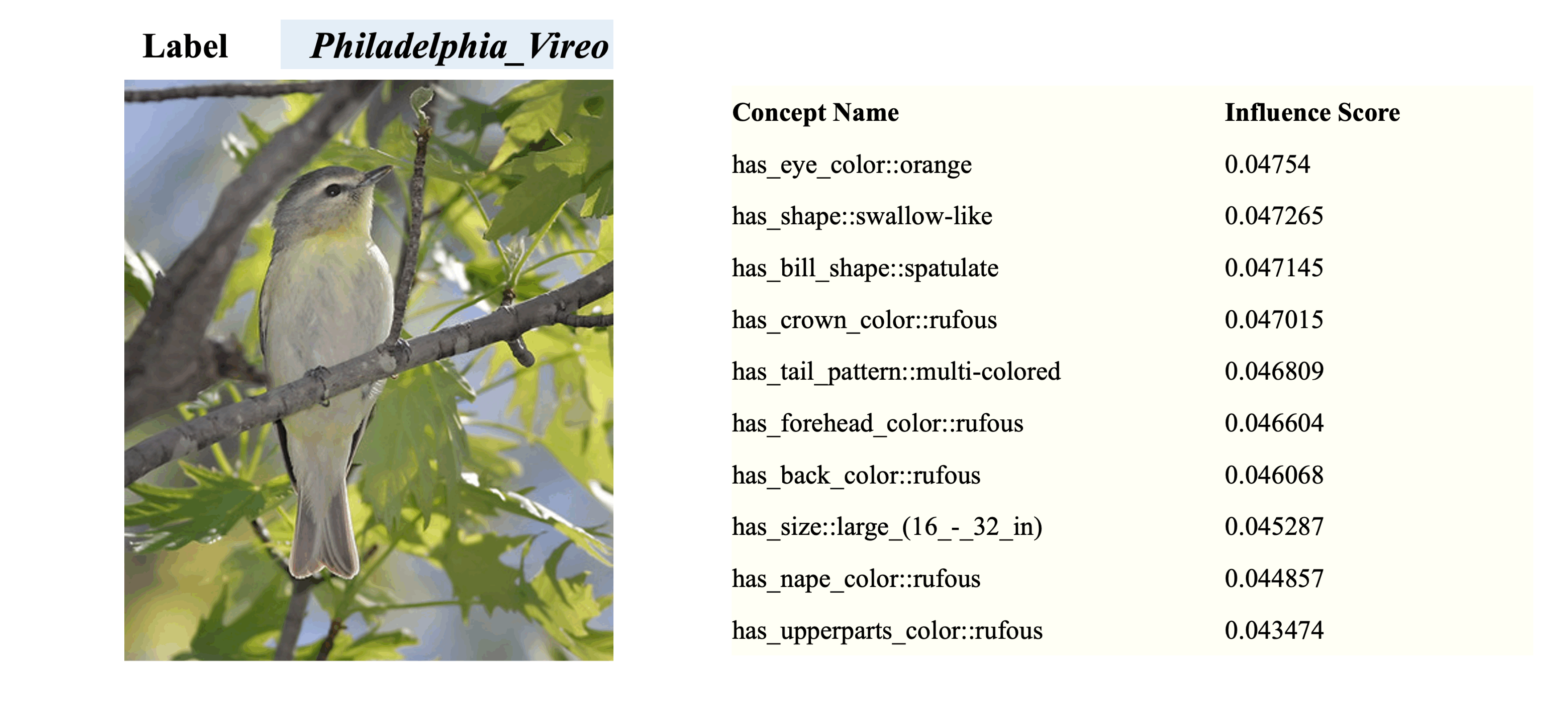}
    \end{tabular}
    \caption{Visualization of the most influential concept label related to different data in CUB.}
    \label{fig:vertical_images4}
\end{figure}
\begin{figure}[ht]
    \centering
    \begin{tabular}{cc}
        \includegraphics[width=0.80\linewidth]{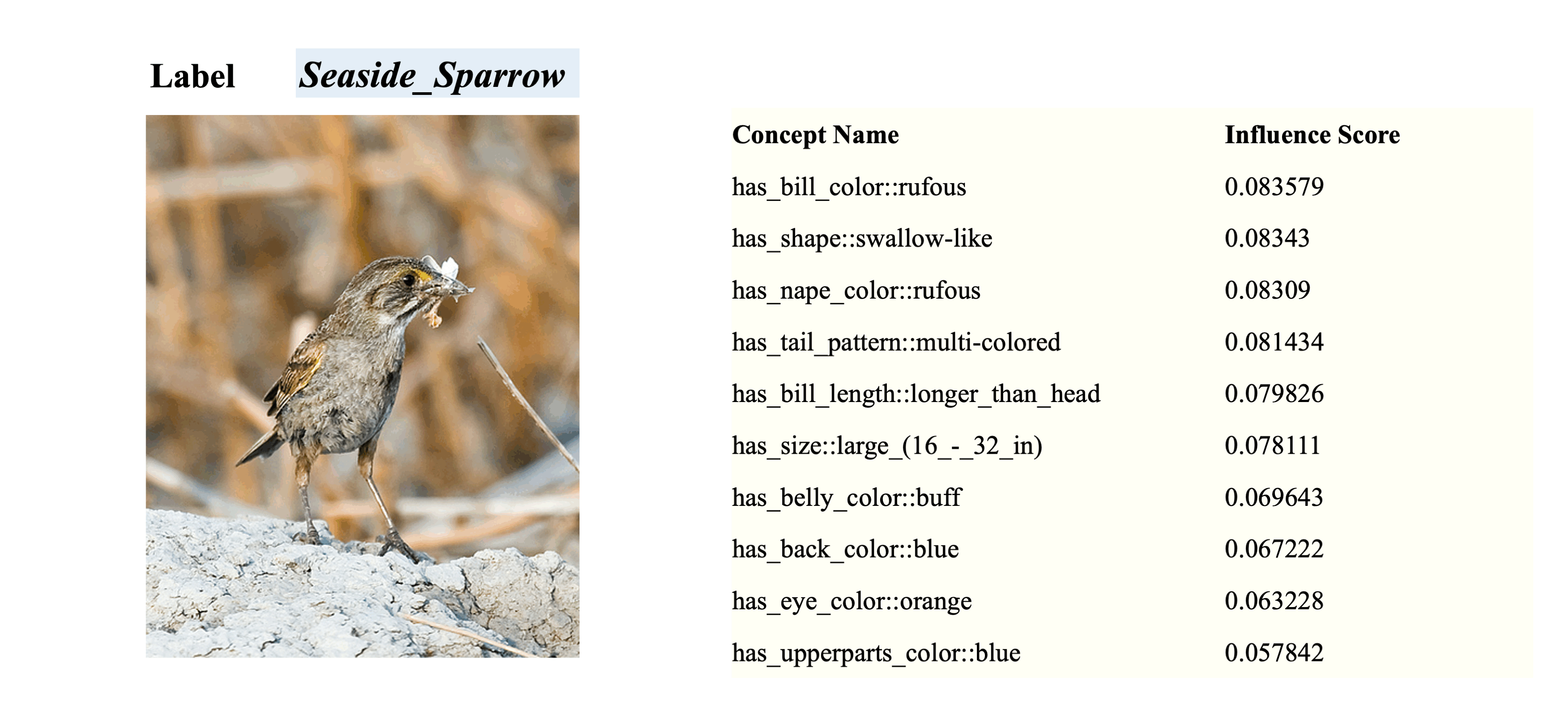}\\
        \includegraphics[width=0.80\linewidth]{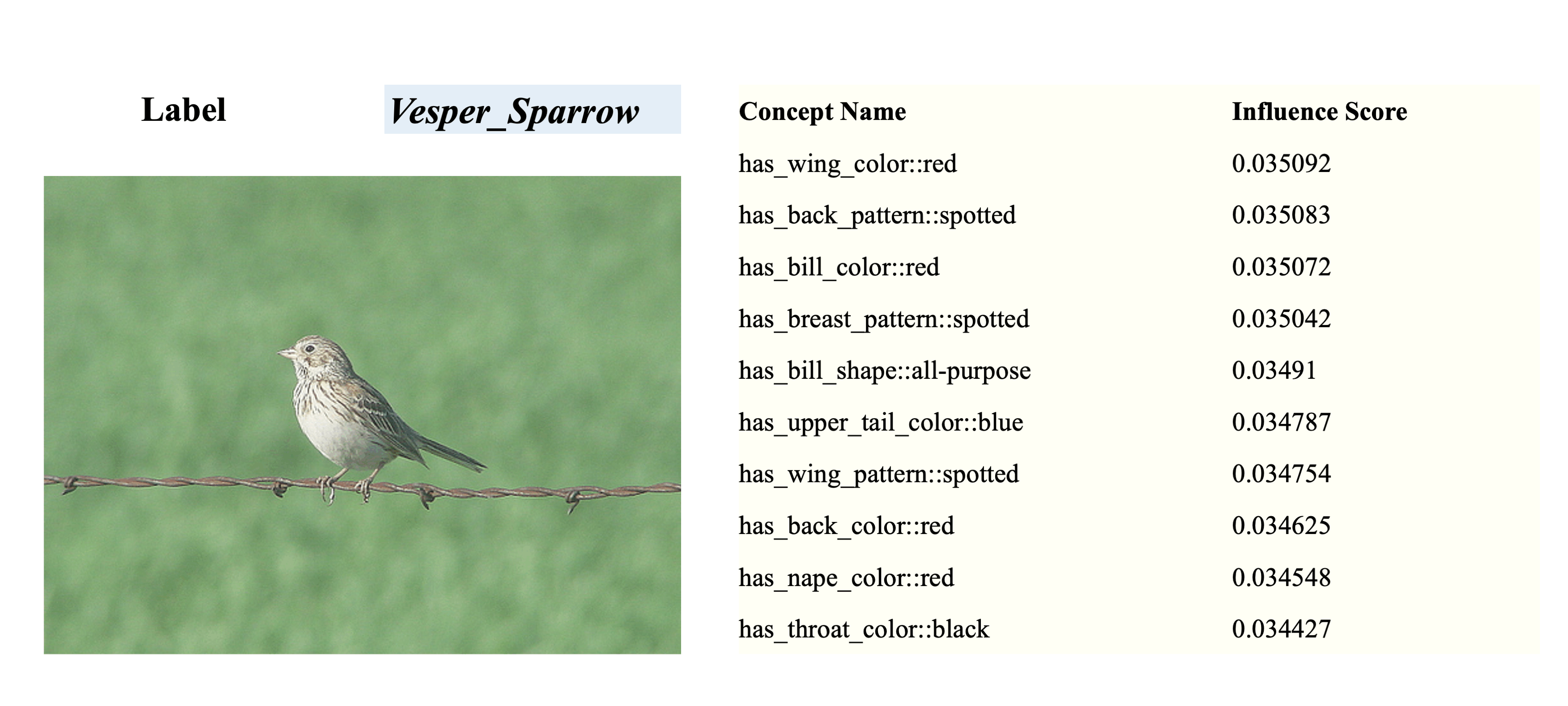}
    \end{tabular}
    \caption{Visualization of the most influential concept label related to different data in CUB.}
    \label{fig:vertical_images5}
\end{figure}
\begin{figure}[ht]
    \centering
    \begin{tabular}{cc}
        \includegraphics[width=0.80\linewidth]{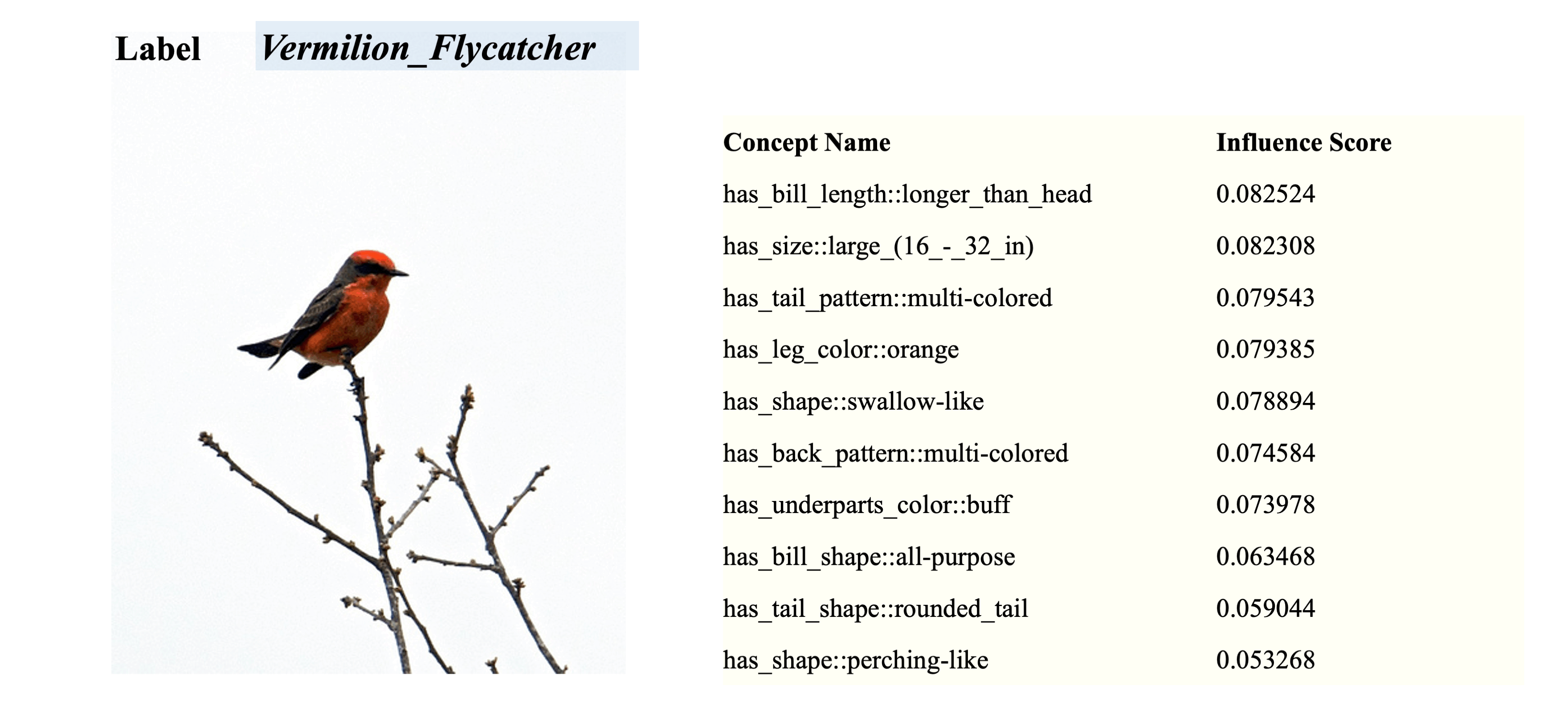}\\
        \includegraphics[width=0.80\linewidth]{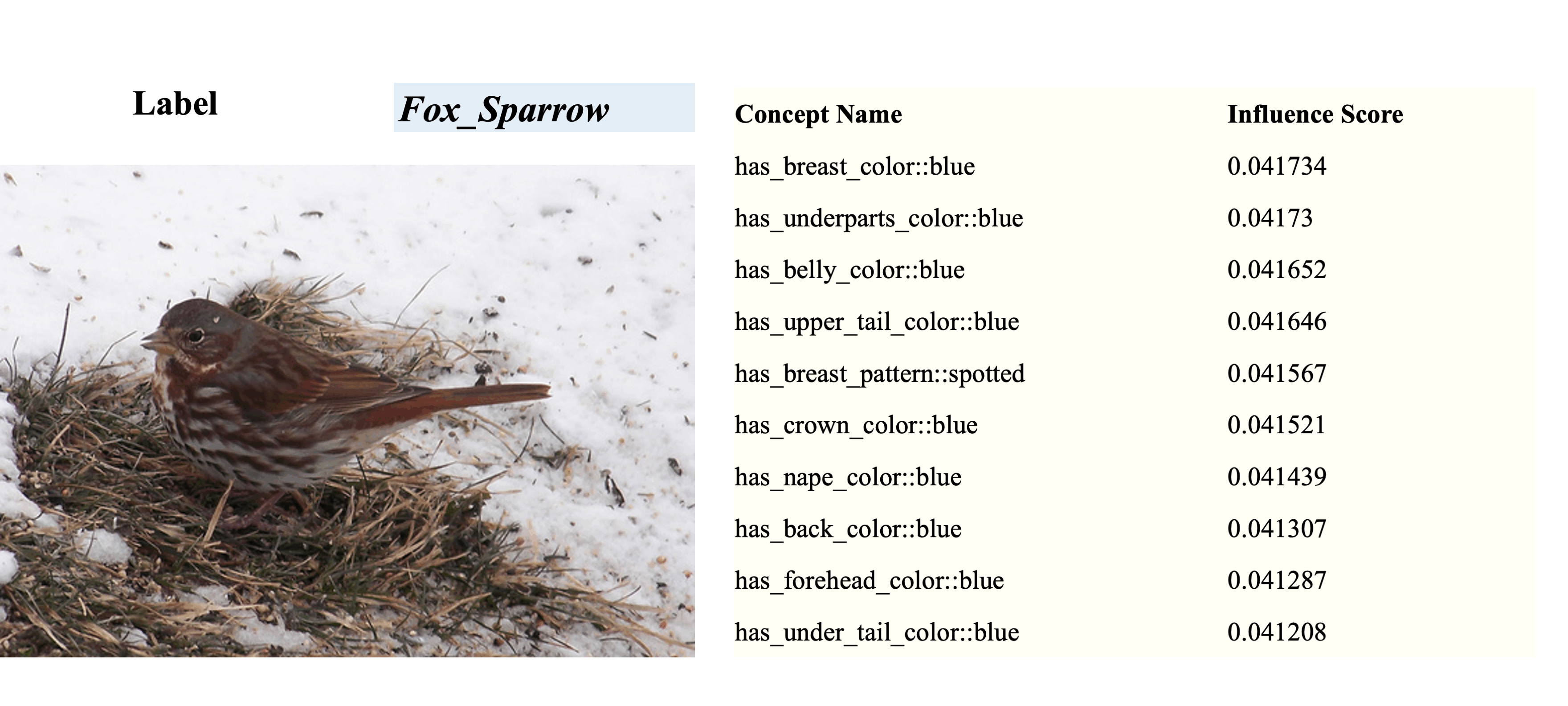}
    \end{tabular}
    \caption{Visualization of the most influential concept label related to different data in CUB.}
    \label{fig:vertical_images6}
\end{figure}
\begin{figure}[ht]
    \centering
    \begin{tabular}{cc}
        \includegraphics[width=0.95\linewidth]{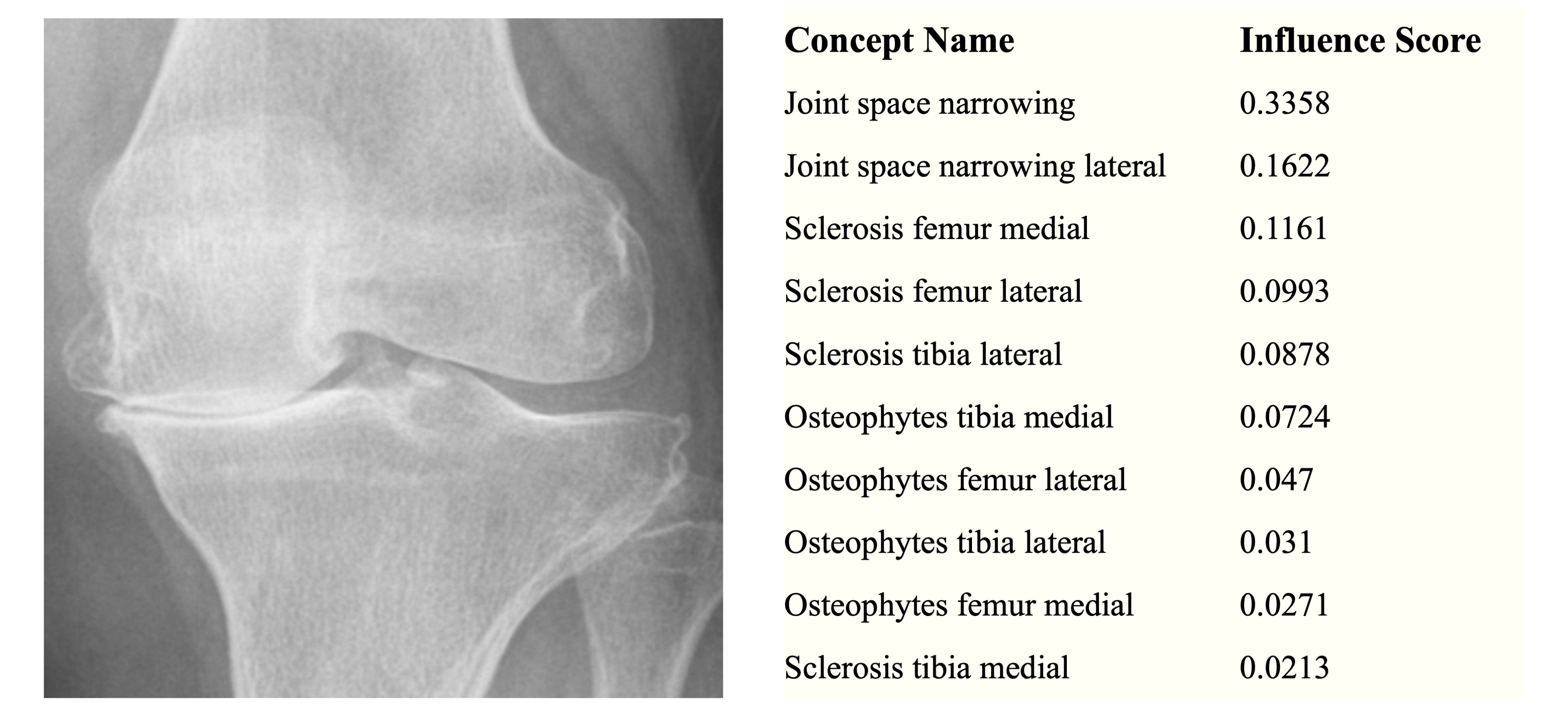}\\
        \includegraphics[width=0.95\linewidth]{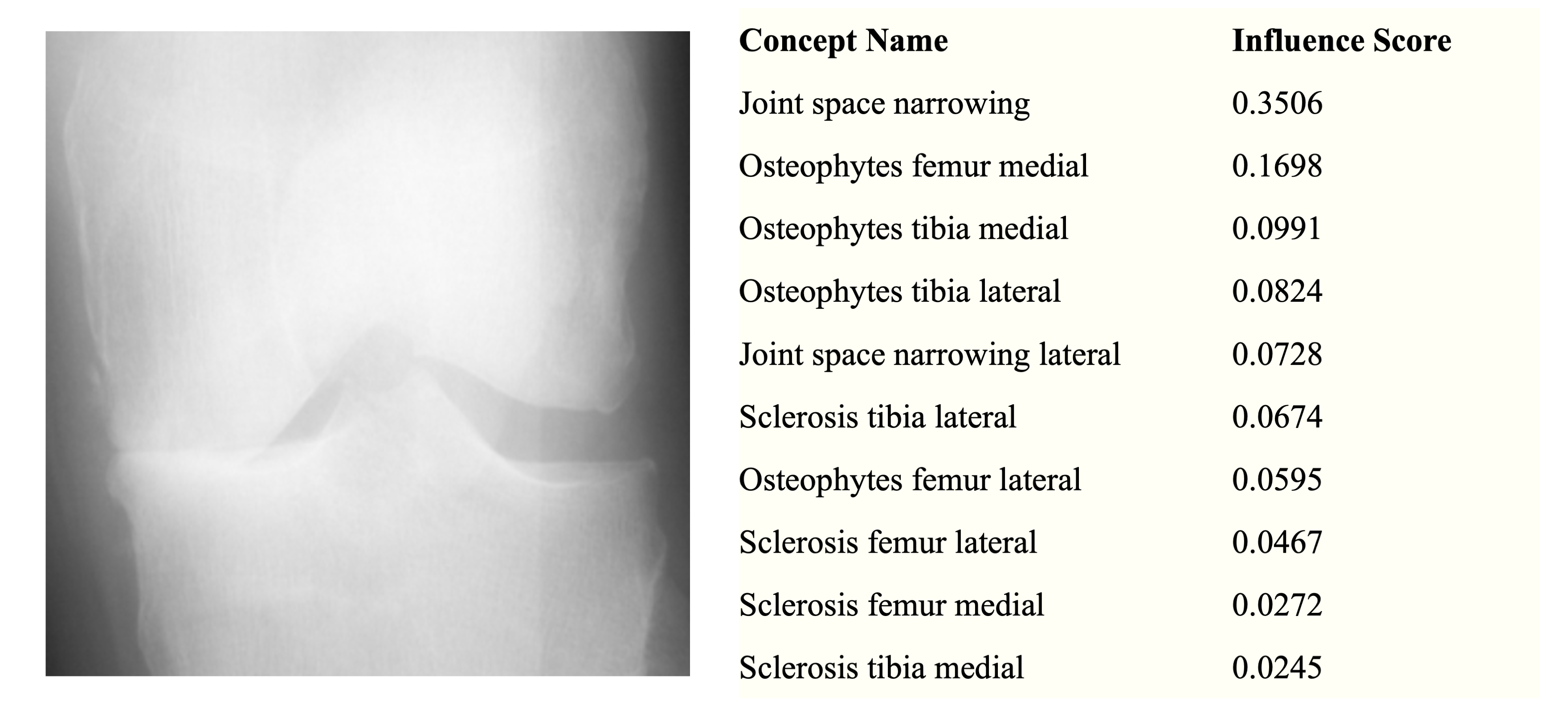}\\
        \includegraphics[width=0.95\linewidth]{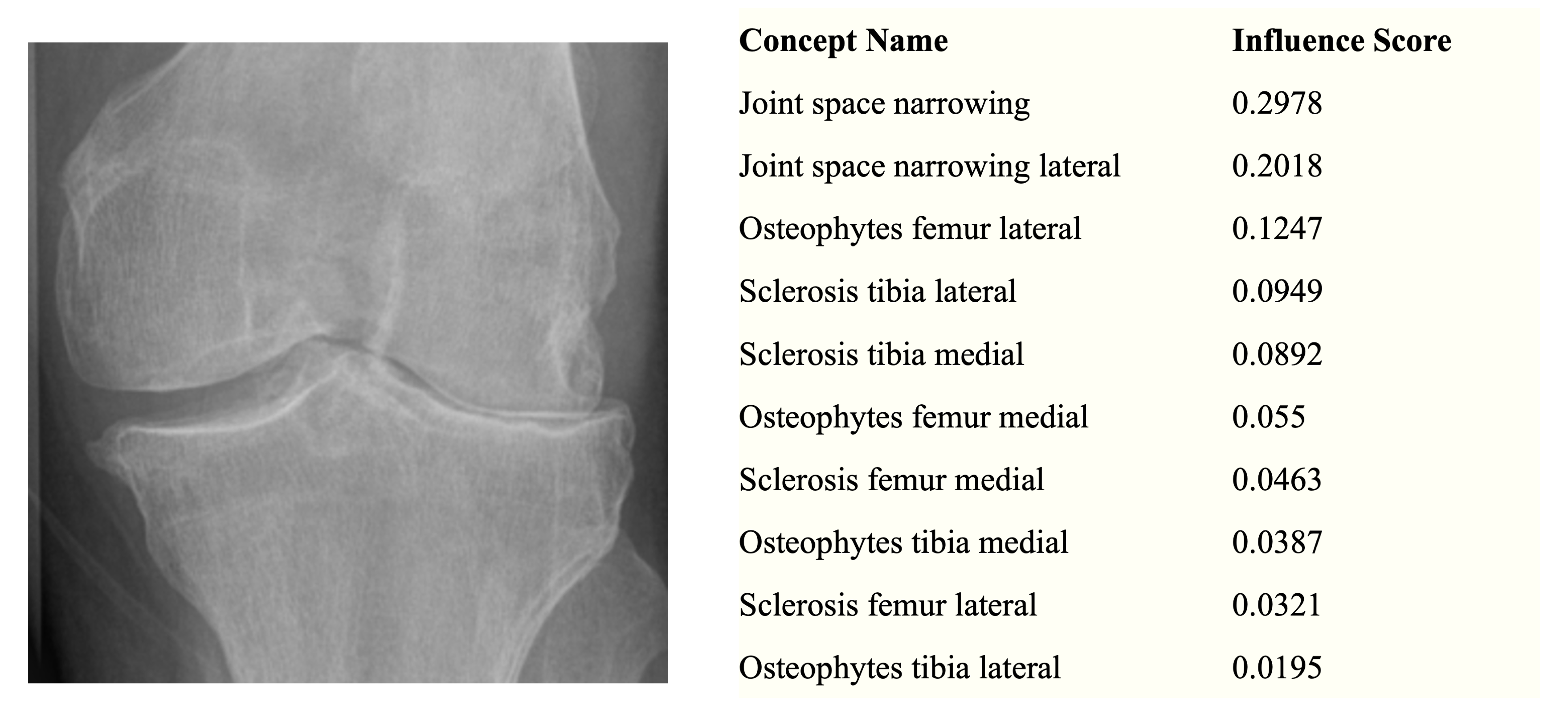}
    \end{tabular}
    \caption{Visualization of the most influential concept label related to different data in OAI.}
    \label{fig:vertical_images7}
\end{figure}

\section{More Related Work}
\label{app:sec:more_related}
\noindent {\bf Influence Function.} The influence function, initially a staple in robust statistics \cite{cook2000detection,cook1980characterizations}, has seen extensive adoption within machine learning since \cite{koh2017understanding} introduced it to the field. Its versatility spans various applications, including detecting mislabeled data, interpreting models, addressing model bias, and facilitating machine unlearning tasks. Notable works in machine unlearning encompass unlearning features and labels \cite{warnecke2021machine}, minimax unlearning \cite{liu2024certified}, forgetting a subset of image data for training deep neural networks \cite{golatkar2020eternal,golatkar2021mixed}, graph unlearning involving nodes, edges, and features. Recent advancements, such as the LiSSA method \cite{agarwal2017second,kwon2023datainf} and kNN-based techniques \cite{guo2021fastif}, have been proposed to enhance computational efficiency. Besides, various studies have applied influence functions to interpret models across different domains, including natural language processing \cite{han2020explaining} and image classification \cite{basu2021influence}, while also addressing biases in classification models \cite{wang2019repairing}, word embeddings \cite{brunet2019understanding}, and finetuned models \cite{chen2020multi}. Despite numerous studies on influence functions, we are the first to utilize them to construct the editable CBM. Moreover, compared to traditional neural networks, CBMs are more complicated in their influence function. Because we only need to change the predicted output in the traditional influence function. While in CBMs, we should first remove the true concept, then we need to approximate the predicted concept in order to approximate the output. Bridging the gap between the true and predicted concepts poses a significant theoretical challenge in our proof. 

\noindent {\bf Model Unlearning.} 
Model unlearning has gained significant attention in recent years, with various methods~\citep{bourtoule2021machine,brophy2021machine,cao2015towards, chen2022recommendation,chen2022graph} proposed to efficiently remove the influence of certain data from trained machine learning models. Existing approaches can be broadly categorized into exact and approximate unlearning methods. Exact unlearning methods aim to replicate the results of retraining by selectively updating only a portion of the dataset, thereby avoiding the computational expense of retraining on the entire dataset~\citep{sekhari2021remember, chowdhury2024towards}. Approximate unlearning methods, on the other hand, seek to adjust model parameters to approximately satisfy the optimality condition of the objective function on the remaining data~\citep{golatkar2020eternal,guo2019certified,izzo2021approximate}. These methods are further divided into three subcategories: (1) Newton step-based updates that leverage Hessian-related terms [22, 26, 31, 34, 40, 43, 49], often incorporating Gaussian noise to mitigate residual data influence. To reduce computational costs, some works approximate the Hessian using the Fisher information matrix~\citep{golatkar2020eternal} or small Hessian blocks~\citep{mehta2022deep}. (2) Neural tangent kernel (NTK)-based unlearning approximates training as a linear process, either by treating it as a single linear change~\citep{golatkar2020forgetting}. (3) SGD path tracking methods, such as DeltaGrad~\citep{wu2020deltagrad} and unrollSGD~\citep{thudi2022unrolling}, reverse the optimization trajectory of stochastic gradient descent during training. 
Despite their advancements, these methods fail to handle the special architecture of CBMs. Moreover, given the high cost of obtaining data, we sometimes prefer to correct the data rather than remove it, which model unlearning is unable to achieve.

\section{Limitations and Broader Impacts}
\label{sec:limitation}
It is important to acknowledge that the ECBM approach is essentially an approximation of the model that would be obtained by retraining with the edited data. However, results indicate that this approximation is effective in real-world applications. 

Concept Bottleneck Models (CBMs) have garnered much attention for their ability to elucidate the prediction process through a human-understandable concept layer. However, most previous studies focused on cases where the data, including concepts, are clean. In many scenarios, we always need to remove/insert some training data or new concepts from trained CBMs due to different reasons, such as data mislabeling, spurious concepts, and concept annotation errors. Thus, the challenge of deriving efficient editable CBMs without retraining from scratch persists, particularly in large-scale applications. To address these challenges, we propose Editable Concept Bottleneck Models (ECBMs). Specifically, ECBMs support three different levels of data removal: concept-label-level, concept-level, and data-level.  ECBMs enjoy mathematically rigorous closed-form approximations derived from influence functions that obviate the need for re-training. Experimental results demonstrate the efficiency and effectiveness of our ECBMs, affirming their adaptability within the realm of CBMs. Our ECBM can be an interactive model with doctors in the real world, which is an editable explanation tool.

\end{document}